\newcommand{\R}{\mathbb{R}}
\theoremstyle{definition}
\numberwithin{equation}{section}
\theoremstyle{plain}
\newtheorem{theorem}{Theorem}[section]
\newtheorem{definition}[theorem]{Definition}
\newtheorem{lemma}[theorem]{Lemma}
\newtheorem{remark}[theorem]{Remark}
\title{When and why PINNs fail to train: A neural tangent kernel perspective}
\author{
  Sifan Wang \\
  Graduate Group in Applied Mathematics \\
  and Computational Science \\
  University of Pennsylvania\\
  Philadelphia, PA 19104 \\
  \texttt{sifanw@sas.upenn.edu} \\
   \And
     Xinling Yu \\
  Graduate Group in Applied Mathematics \\
  and Computational Science \\
  University of Pennsylvania\\
  Philadelphia, PA 19104 \\
  \texttt{xlyu@sas.upenn.edu} \\
   \And
  Paris Perdikaris \\
  Department of Mechanichal Engineering \\
  and Applied Mechanics\\
  University of Pennsylvania\\
  Philadelphia, PA 19104 \\
  \texttt{pgp@seas.upenn.edu} \\
}
\begin{document}
\maketitle

\begin{abstract}
Physics-informed neural networks (PINNs) have lately received great attention thanks to their flexibility in tackling a wide range of forward and inverse problems involving partial differential equations. However, despite their noticeable empirical success, little is known about how such constrained neural networks behave during their training via gradient descent. More importantly, even less is known about why such models sometimes fail to train at all.
In this work, we aim to investigate these questions through the lens of the Neural Tangent Kernel (NTK); a kernel that captures the behavior of fully-connected neural networks in the infinite width limit during training via gradient descent. Specifically, we derive the NTK
of PINNs and prove that, under appropriate conditions, it converges to a deterministic kernel that stays constant during training in the infinite-width limit. This allows us to analyze the training dynamics of PINNs through the lens of their limiting NTK and find a remarkable discrepancy in the convergence rate of the different loss components contributing to the total training error. To address this fundamental pathology, we propose a novel gradient descent algorithm that utilizes the eigenvalues of the NTK to adaptively calibrate the convergence rate of the total training error. Finally, we perform a series of numerical experiments to verify the correctness of our theory and the practical effectiveness of the proposed algorithms. The data and code accompanying this manuscript are publicly available at  \url{https://github.com/PredictiveIntelligenceLab/PINNsNTK}.
\end{abstract}

\keywords{Physics-informed neural networks \and Spectral bias \and Multi-task learning \and Gradient descent \and Scientific machine learning}

\section{Introduction}

Thanks to  the approximation capabilities of neural networks, physics-informed neural networks (PINNs) have already led to a series of remarkable results across a range of problems in computational science and engineering,  including   fluids mechanics \cite{raissi2020hidden,sun2020surrogate,raissi2019deep2,jin2020nsfnets}, bio-engineering    \cite{sahli2020physics, kissas2020machine}, meta-material design \cite{fang2019deep,liu2019multi,chen2020physics},  free boundary problems \cite{wang2020deep},
Bayesian networks and uncertainty quantification \cite{yang2019adversarial,zhu2019physics, yang2018physics,sun2020physics, yang2020b}, high-dimensional PDEs \cite{sirignano2018dgm, han2018solving},
stochastic differential equations \cite{zhang2020learning}, fractional differential equations \cite{pang2019fpinns, pang2020npinns}, and beyond \cite{tartakovsky2018learning,lu2019deeponet,tartakovsky2020physics, shin2020convergence}. However, PINNs using fully connected architectures often fail to achieve stable training and produce accurate predictions, especially when the underlying PDE solutions contain high-frequencies or multi-scale features \cite{tchelepi2020limitations,zhu2019physics,raissi2018deep}. Recent work by Wang {\em et. al.} \cite{wang2020understanding} attributed this pathological behavior to multi-scale interactions between different terms in the PINNs loss function, ultimately leading to stiffness in the gradient flow dynamics, which, consequently, introduces stringent stability requirements on the learning rate. To mitigate this pathology, Wang {\em et. al.} \cite{wang2020understanding} proposed an empirical learning-rate annealing scheme that utilizes the back-propagated gradient statistics during training to adaptively assign importance weights to different terms in a PINNs loss function, with the goal of balancing the magnitudes of the back-propagated gradients. Although this approach was demonstrated to produce significant and consistent improvements in the trainability and accuracy of PINNs, the fundamental reasons behind the practical difficulties of training fully-connected PINNs still remain unclear \cite{tchelepi2020limitations}.

Parallel to the development of PINNs, recent investigations have shed light into the representation shortcomings and training deficiencies of fully-connected neural networks. Specifically, it has been shown that conventional fully-connected architectures -- such as the ones typically used in PINNs -- suffer from ``spectral bias" and are incapable of learning functions with high frequencies, both in theory and in practice \cite{rahaman2019spectral,cao2019towards,tancik2020fourier,basri2020frequency}. These observations are rigorously grounded by the newly developed neural tangent kernel theory \cite{jacot2018neural,yang2019scaling} that, by exploring the connection between deep neural networks and kernel regression methods, elucidates the training dynamics of deep learning models. Specifically, the original work of Jacot {\em et. al.} \cite{jacot2018neural} proved that, at initialization, fully-connected networks are equivalent to Gaussian
processes in the infinite-width limit \cite{matthews2018gaussian, lee2017deep,mackay1998introduction}, while the evolution of a infinite-width network during training can also be described by another kernel, the so-called Neural Tangent Kernel (NTK) \cite{jacot2018neural}. Remarkably, this function-space viewpoint allows us then to rigorously analyze the training convergence of deep neural networks by examining the spectral properties of their limiting NTK \cite{jacot2018neural,yang2019scaling,basri2020frequency}.

Drawing motivation from the aforementioned developments, this work sets sail into investigating the training dynamics of PINNs. To this end, we rigorously study fully-connected PINNs models through the lens of their neural tangent kernel, and produce novel insights into when and why such models can be effectively trained, or not. Specifically, our main contributions can be summarized into the following points:
\begin{itemize}
    \item We prove that fully-connected PINNs converge to Gaussian processes at the infinite width limit for linear PDEs.
    \item We derive the neural tangent kernel (NTK) of PINNs and prove that, under suitable assumptions, it converges to a deterministic kernel and remains constant during training via gradient descent with an infinitesimally small learning rate.
    \item We show how the convergence rate of the total training error a PINNs model can be analyzed in terms of the spectrum of its NTK at initialization.
    \item We show that fully-connected PINNs not only suffer from spectral bias, but also from a remarkable discrepancy of convergence rate in the different components of their loss function.
    \item We propose a novel adaptive training strategy for resolving this pathological convergence behavior, and significantly enhance the trainablity and predictive accuracy of PINNs.
\end{itemize}
Taken together, these developments provide a novel path into analyzing the convergence of PINNs, and enable the design of novel training algorithms that can significantly improve their trainability, accuracy and robustness.

This paper are organized as follows. In section \ref{sec: infinite_wide_NN}, we present a brief overview of fully-connected neural networks and their behavior in the infinite-width limit following the original formulation of Jacot {\em et. al.} \cite{jacot2018neural}. Next, we derive the NTK of PINNs in a general setting and prove that, under suitable assumptions, it converges to a deterministic kernel and remains constant during training via gradient descent with an infinitesimally small learning rate, see section  \ref{sec: PINNs_formulation}, \ref{sec: PINN_dynamics}.
Furthermore, in section \ref{sec: spectral_bias} we analyze the training dynamics of PINNs, demonstrate that PINNs models suffer from spectral bias, and then propose a novel algorithm to improve PINNs' performance in practice. Finally we carry out a series of numerical experiments to verify the developed NTK theory and validate the effectiveness of the proposed algorithm.

\section{Infinitely Wide  Neural Networks}
\label{sec: infinite_wide_NN}
In this section, we revisit the definition of fully-connected neural networks and investigate their behavior under the infinite-width limit. Let us start by formally defining the forward pass of a scalar valued fully-connected network with $L$ hidden layers, with the input and output dimensions denoted as $d_0 = d$, and $d_{L+1} = 1$, respectively. For inputs $\bm{x} \in \R^d$ we also denote the input layer of the network as $\bm{f}^{(0)}(\bm{x}) = \bm{x}$ for convenience. Then a fully-connected neural network with $L$ hidden layers is defined recursively as
\begin{align}
    &\bm{f}^{(h)}(\bm{x}) =  \frac{1}{\sqrt{d_h}} \bm{W}^{(h)} \cdot \bm{g}^{(h)}  + \bm{b}^{(h)}  \in \R^{d_{h+1}}, \\
    &\bm{g}^{(h)}(\bm{x}) =
    \sigma (\bm{W}^{(h-1)}  \bm{f}^{(h-1)}(\bm{x})  + \bm{b}^{(h-1)}),
\end{align}
for $h = 1, \dots, L$, where $\bm{W}^{(h)} \in \R^{d_{h+1} \times d_{h}}$ are  weight matrices and $\bm{b}^{(h)} \in \R^{d_{h+1}}$ are bias vectors in the $h$-th hidden layer, and $\sigma: \R \rightarrow \R$ is a coordinate-wise smooth activation function. The final output of the neural network is given by
\begin{align}
    \label{eq: NTK_param}
    f(\bm{x}, \bm{\theta}) &= \bm{f}^{(L)}(\bm{x}) =  \frac{1}{\sqrt{d_L}}\bm{W}^{(L)} \cdot \bm{g}^{(L)}(\bm{x}) + \bm{b}^{(L)},
\end{align}
where $\bm{W}^{(L)} \in \R^{1 \times d_{L}}$ and $\bm{b}^{(L)} \in \R$ are the weight and bias parameters of the last layer. Here, $\bm{\theta} = \{\bm{W}^{(0)}, \bm{b}^{(0)}, \dots, \bm{W}^{(L)}, \bm{b}^{(L)}\}$ represents all parameters of the neural network. We remark that such a parameterization  is known as the ``NTK parameterization" following the original work of Jacot {\em et. al.} \cite{jacot2018neural}.

We initialize all the weights and biases to be independent and identically distributed (i.i.d.) as standard normal distribution $\mathcal{N}(0, 1)$ random variables, and consider the sequential limit of hidden widths $d_1, d_2, \dots, d_L \rightarrow \infty$.  As described in \cite{jacot2018neural, lee2017deep, matthews2018gaussian},  all coordinates of $\bm{f}^{(h)}$ at each hidden layer asymptotically converge to an i.i.d centered Gaussian process with covariance $\Sigma^{h-1}: \R^{d_{h-1}} \times \R^{d_{h-1}}\rightarrow \R$ defined recursively as
\begin{align}
    \label{eq: NN_Gaussian_cov}
    \begin{split}
        &\Sigma^{(0)}(\bm{x}, \bm{x}') = \bm{x}^T \bm{x}' + 1, \\
     &\bm{\Lambda}^{(h)}(\bm{x}, \bm{x}') =
    \begin{pmatrix}
    \Sigma^{(h-1)}\left(\bm{x}, \bm{x}\right) & \Sigma^{(h-1)}\left(\bm{x}, \bm{x}^{\prime}\right) \\
    \Sigma^{(h-1)}\left(\bm{x}', \bm{x}\right) & \Sigma^{(h-1)}\left(\bm{x}', \bm{x}'\right)
    \end{pmatrix} \in \R^{2 \times 2 }, \\
    & \Sigma^{(h)}(\bm{x}, \bm{x}') = \mathop{\mathbb{E}}\limits_{(u, v)\sim \mathcal{N}(0, \Lambda^{(1)})} [\sigma(u)\sigma(v)] + 1,
    \end{split}
\end{align}
for $h= 1, 2, \dots, L$.

To introduce the neural tangent kernel (NTK), we also need to define
\begin{align}
\label{eq: dot_Sigma}
    \dot{\Sigma}^{(h)}\left(\boldsymbol{x}, \boldsymbol{x}^{\prime}\right)=\underset{(u, v) \sim \mathcal{N}\left(0, \Lambda^{(h)}\right)}{\mathbb{E}}[\dot{\sigma}(u) \dot{\sigma}(v)]
\end{align}
where $\dot{\sigma}$ denotes the derivative of the activation function $\sigma$.

Following the derivation of \cite{jacot2018neural, arora2019exact}, the neural tangent kernel can be  generally defined at any time $t$, as the neural network parameters $\theta(t)$ are changing during model training by gradient descent. This definition takes the form
\begin{align}
    Ker_t(\bm{x}, \bm{x}') = \left\langle\frac{\partial f(\bm{x}, \boldsymbol{\theta}(t))}{\partial \boldsymbol{\theta}}, \frac{\partial f\left(\bm{x}' , \boldsymbol{\theta}(t) \right)}{\partial \boldsymbol{\theta}}\right\rangle,
\end{align}
and this kernel converges in probability to a deterministic kernel $\Theta^{(L)}(\bm{x}, \bm{x}')$  at random initialization as the width of hidden layers goes to infinity \cite{jacot2018neural}. Specifically,
\begin{align}
    &\lim _{d_{L} \rightarrow \infty} \cdots \lim _{d_{1} \rightarrow \infty}  Ker_0(\bm{x}, \bm{x}') =  \lim _{d_{L} \rightarrow \infty} \cdots \lim _{d_{1} \rightarrow \infty}  \left\langle\frac{\partial f(\bm{x}, \boldsymbol{\theta}(0))}{\partial \boldsymbol{\theta}}, \frac{\partial f\left(\bm{x}' , \boldsymbol{\theta}(0) \right)}{\partial \boldsymbol{\theta}}\right\rangle = \Theta^{(L)}(\bm{x}, \bm{x}').
\end{align}
Here $\Theta^{(L)}(\bm{x}, \bm{x}')$ is recursively defined by
\begin{align}
    \label{eq: NTK}
    &\Theta^{(L)}\left(\boldsymbol{x}, \boldsymbol{x}^{\prime}\right)=\sum_{h=1}^{L+1}\left(\Sigma^{(h-1)}\left(\boldsymbol{x}, \boldsymbol{x}^{\prime}\right) \cdot \prod_{h^{\prime}=h}^{L+1} \dot{\Sigma}^{\left(h^{\prime}\right)}\left(\boldsymbol{x}, \boldsymbol{x}^{\prime}\right)\right)
\end{align}
where $\dot{\Sigma}^{(L+1)}(\bm{x}, \bm{x}') = 1$ for convenience.
Moreover, Jacot {\em et. al.} \cite{jacot2018neural}  proved that, under some suitable conditions and training time $T$ fixed, $Ker_t$ converges to $ \Theta^{(L)}$ for all $0 \leq t \leq T$, as the width goes to infinity. As a consequence, a properly randomly initialized and sufficiently wide deep neural network trained by gradient descent is equivalent to a kernel regression with a deterministic kernel.

\section{Physics-informed Neural Networks (PINNs)}
\label{sec: PINNs_formulation}
In this section, we study physics-informed neural networks (PINNs) and their corresponding neural tangent kernels. To this end, we consider the following well-posed partial differential equation (PDE) defined on a bounded domain $\Omega \subset \R^d$
\begin{align}
    \label{eq: PDE}
    &\mathcal{L}[u](\bm{x}) = f(\bm{x}),  \quad x \in \Omega  \\
    \label{eq: PDE_BC}
    & u(\bm{x}) = g(\bm{x}),  \quad x \in \partial \Omega
\end{align}
where $\mathcal{L}$ is a differential operator and $u(\bm{x}): \overline{\Omega} \rightarrow \R$ is the unknown solution with
$\bm{x} = (x_1, x_2, \cdots, x_d)$. Here we remark that for time-dependent problems, we consider time $t$ as an additional coordinate in $\bm{x}$ and $\Omega$ denotes the spatio-temporal domain. Then, the initial condition can be simply treated as a special type of Dirichlet boundary condition and included in equation \ref{eq: PDE_BC}.

Following the original work of Raissi {\em et. al.} \cite{raissi2019physics}, we assume that the latent solution $u(\bm{x})$ can be approximated by a deep neural network $u(\bm{x}, \bm{\theta})$ with parameters $\bm{\theta}$, where $\bm{\theta}$ is a collection of all the parameters in the network.
We can then define the PDE residual $r(\bm{x}, \bm{\theta})$ as
\begin{align}
r(\bm{x}, \bm{\theta}) := \mathcal{L}u(\bm{x}, \bm{\theta}) - f(\bm{x}).
\end{align}
Note that the parameters of $u(\bm{x}, \bm{\theta})$ can be ``learned" by minimizing the following composite loss function
\begin{align}
    \label{eq: PINN_loss}
    \mathcal{L}(\bm{\theta}) = \mathcal{L}_b(\bm{\theta}) + \mathcal{L}_r(\bm{\theta}),
\end{align}
where
\begin{align}
    &\mathcal{L}_b(\bm{\theta}) = \frac{1}{2} \sum_{i=1}^{N_b}  |u(\bm{x}_b^i, \bm{\theta}) - g(\bm{x}_b^i)   |^2 \\
    & \mathcal{L}_r(\bm{\theta}) =  \frac{1}{2} \sum_{i=1}^{N_r} |r(\bm{x}_r^i, \bm{\theta})|^2.
\end{align}
Here $N_b$ and $N_r$ denote the batch sizes for the training data $\{ {\bm{x}_b^i, g(\bm{x}_b^i)} \}_{i=1}^{N_b}$ and  $\{ {\bm{x}_r^i, f(\bm{x}_b^i)} \}_{i=1}^{N_r}$ respectively, which can be randomly sampled at each iteration of a gradient descent algorithm.

\subsection{Neural tangent kernel theory for PINNs}
In this section we derive the neural tangent kernel of a physics-informed neural network.
To this end, consider minimizing the loss function \ref{eq: PINN_loss} by gradient descent with an infinitesimally small learning rate, yielding the continuous-time gradient flow system
\begin{align}
    \label{eq: gradient_flow}
    \frac{d \bm{\theta}}{dt} = - \nabla \mathcal{L}(\bm{\theta}),
\end{align}
and let $u(t) = u(\bm{x}_b, \bm{\theta}(t)) = \{u(\bm{x}_b^i, \bm{\theta}(t))\}_{i=1}^{N_b}$ and $\mathcal{L}u(t) = \mathcal{L}u(\bm{x}_r, \bm{\theta}(t))= \{ \mathcal{L}u(\bm{x}_r, \bm{\theta}(t))  \}_{i=1}^{N_r}$. Then the following lemma characterizes how $u(t)$ and $\mathcal{L}u(t)$ evolve during training by gradient descent.

\begin{lemma}
\label{lemma: PINN_ode}
Given the data points  $\{\bm{x}_b^i, g(\bm{x}_b^i)\}_{i=1}^{N_b}, \{\bm{x}_r^i, f(\bm{x}_r^i)\}_{i=1}^{N_r}$ and the gradient flow \ref{eq: gradient_flow},  $u(t)$ and $\mathcal{L}u(t)$ obey the following evolution
\begin{align}
\label{eq: PINN_ode}
    \begin{bmatrix}
    \frac{d u(\bm{x}_b, {\bm \theta}(t))}{dt}\\
    \frac{d \mathcal{L}u(\bm{x}_r, {\bm \theta}(t))}{dt}
    \end{bmatrix}
    =
       - \begin{bmatrix}
     \bm{K}_{uu}(t) & \bm{K}_{ur}(t) \\
     \bm{K}_{ru}(t) & \bm{K}_{rr}(t)
    \end{bmatrix}
    \cdot
       \begin{bmatrix}
    u(\bm{x}_b, {\bm \theta}(t)) - g(\bm{x}_b) \\
    \mathcal{L}u(\bm{x}_r, {\bm \theta}(t)) - f(\bm{x}_r)
    \end{bmatrix},
\end{align}
where $\bm{K}_{ru}(t) = \bm{K}_{ur}^T(t)$ and
$\bm{K}_{uu}(t) \in \R^{N_b \times N_b}, \bm{K}_{ur}(t) \in \R^{N_b \times N_r}, and \bm{K}_{rr}(t) \in \R^{N_r \times N_r}$ whose $(i,j)$-th entry is given by
\begin{align}
\label{eq: NTK_PINN}
  \begin{split}
       & (\bm{K}_{uu})_{ij}(t) =  \Big\langle  \frac{d u({\bm x}_b^i, {\bm \theta}(t))}{d{\bm \theta}},  \frac{d u(\bm{x}_b^j, {\bm \theta}(t))}{d{\bm \theta}}    \Big\rangle \\
   & (\bm{K}_{ur})_{ij}(t)  =  \Big\langle  \frac{d u(\bm{x}_b^i, {\bm \theta}(t))}{d{\bm \theta}} , \frac{d \mathcal{L}u(\bm{x}_r^j, {\bm \theta}(t))}{d{\bm \theta}}  \Big\rangle  \\
   & (\bm{K}_{rr})_{ij}(t)  =  \Big\langle \frac{d \mathcal{L}(\bm{x}_r^i, {\bm \theta}(t))}{d{\bm \theta}},  \frac{d \mathcal{L}(\bm{x}_r^j, {\bm \theta}(t))}{d{\bm \theta}}   \Big\rangle
  \end{split}
\end{align}

\end{lemma}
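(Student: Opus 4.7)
The plan is to derive \eqref{eq: PINN_ode} by a direct chain-rule calculation: differentiate $u(\bm{x}_b^i,\bm{\theta}(t))$ and $\mathcal{L}u(\bm{x}_r^i,\bm{\theta}(t))$ in $t$, plug in the gradient flow \eqref{eq: gradient_flow}, and then recognize the resulting sums of inner products as the four kernel blocks $\bm{K}_{uu},\bm{K}_{ur},\bm{K}_{ru},\bm{K}_{rr}$.

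First I would expand the gradient of the composite loss. Since $\mathcal{L}(\bm{\theta})=\mathcal{L}_b(\bm{\theta})+\mathcal{L}_r(\bm{\theta})$, the chain rule applied to the squared-error terms gives
\begin{align*}
\nabla_{\bm{\theta}} \mathcal{L}(\bm{\theta})
= \sum_{j=1}^{N_b}\bigl(u(\bm{x}_b^j,\bm{\theta})-g(\bm{x}_b^j)\bigr)\frac{du(\bm{x}_b^j,\bm{\theta})}{d\bm{\theta}}
+ \sum_{j=1}^{N_r}\bigl(\mathcal{L}u(\bm{x}_r^j,\bm{\theta})-f(\bm{x}_r^j)\bigr)\frac{d\mathcal{L}u(\bm{x}_r^j,\bm{\theta})}{d\bm{\theta}},
\end{align*}
where in the second sum I use $r(\bm{x},\bm{\theta})=\mathcal{L}u(\bm{x},\bm{\theta})-f(\bm{x})$ and the fact that $f$ does not depend on $\bm{\theta}$.

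Next I would differentiate the scalar $u(\bm{x}_b^i,\bm{\theta}(t))$ with respect to $t$ by the chain rule, which yields $\frac{du(\bm{x}_b^i,\bm{\theta}(t))}{dt}=\bigl\langle \tfrac{du(\bm{x}_b^i,\bm{\theta})}{d\bm{\theta}},\tfrac{d\bm{\theta}}{dt}\bigr\rangle$, and then substitute the gradient flow and the expansion of $\nabla_{\bm{\theta}}\mathcal{L}(\bm{\theta})$ above. Collecting terms, the coefficient of $u(\bm{x}_b^j,\bm{\theta}(t))-g(\bm{x}_b^j)$ is $-\bigl\langle \tfrac{du(\bm{x}_b^i,\bm{\theta}(t))}{d\bm{\theta}},\tfrac{du(\bm{x}_b^j,\bm{\theta}(t))}{d\bm{\theta}}\bigr\rangle=-(\bm{K}_{uu})_{ij}(t)$, and the coefficient of $\mathcal{L}u(\bm{x}_r^j,\bm{\theta}(t))-f(\bm{x}_r^j)$ is $-(\bm{K}_{ur})_{ij}(t)$. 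This is precisely the first block-row of \eqref{eq: PINN_ode}.

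The second block-row is then obtained by the same chain-rule argument applied to $\mathcal{L}u(\bm{x}_r^i,\bm{\theta}(t))$ in place of $u(\bm{x}_b^i,\bm{\theta}(t))$. The only nontrivial point is that $\frac{d}{dt}\mathcal{L}u(\bm{x}_r^i,\bm{\theta}(t))=\bigl\langle\tfrac{d\mathcal{L}u(\bm{x}_r^i,\bm{\theta})}{d\bm{\theta}},\tfrac{d\bm{\theta}}{dt}\bigr\rangle$, which requires that the differential operator $\mathcal{L}$ (acting on $\bm{x}$) commutes with the parameter derivative $\partial/\partial\bm{\theta}$; this is justified under standard smoothness of $u(\bm{x},\bm{\theta})$ in both arguments and is where I would need to be careful for higher-order operators. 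Granting this, the resulting coefficients are $-(\bm{K}_{ru})_{ij}(t)$ and $-(\bm{K}_{rr})_{ij}(t)$, and the symmetry $\bm{K}_{ru}(t)=\bm{K}_{ur}^{T}(t)$ is immediate from the inner-product definition. Assembling the $N_b$ and $N_r$ equations into a block vector yields the matrix ODE \eqref{eq: PINN_ode}. I do not expect a genuinely hard step here; the main bookkeeping burden is just organizing the two sums into the four kernel blocks.
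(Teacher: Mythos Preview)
Your proposal is correct and follows essentially the same route as the paper's proof: expand $\nabla_{\bm{\theta}}\mathcal{L}(\bm{\theta})$, apply the chain rule to $u(\bm{x}_b^i,\bm{\theta}(t))$ and $\mathcal{L}u(\bm{x}_r^i,\bm{\theta}(t))$ through $\bm{\theta}(t)$, substitute the gradient flow, and identify the resulting inner products as the kernel entries. The paper does not explicitly comment on the commutation of $\mathcal{L}$ with $\partial/\partial\bm{\theta}$, but otherwise the argument is identical.
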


\begin{proof}
The proof of lemma \ref{lemma: PINN_ode} is given in Appendix \ref{sec: proof_lemma_PINN_ode}.
\end{proof}

\begin{remark} $\langle \cdot, \cdot \rangle$ here denotes the inner product over all neural network parameters in $\bm{\theta}$. For example,
\begin{align*}
    (\bm{K}_{uu})_{ij}(t) = \sum_{\theta \in \bm{\theta}} \frac{d u({\bm x}_b^i, {\bm \theta}(t))}{d{ \theta}} \cdot \frac{d u(\bm{x}_b^j, {\bm \theta}(t))}{d{ \theta}}.
\end{align*}
\end{remark}

\begin{remark}
\label{remark: NTK_PSD}
We will denote the matrix $  \begin{bmatrix}
     \bm{K}_{uu}(t) & \bm{K}_{ur}(t) \\
     \bm{K}_{ru}(t) & \bm{K}_{rr}(t)
    \end{bmatrix}
     $ by $\bm{K}(t)$ in the following sections. It is easy to see that $\bm{K}_{uu}(t), \bm{K}_{rr}(t)$ and $\bm{K}(t)$ are all positive semi-definite matrices. Indeed, let $J_u(t)$ and $J_r(t)$ be the Jacobian matrices of $u(t)$ and $\mathcal{L}u(t)$ with respect to $\bm{\theta}$ respectively. Then, we can observe that
     \begin{align*}
         &\bm{K}_{uu}(t) = \bm{J}_u(t) \bm{J}_u^T(t), \quad \bm{K}_{rr}(t) = \bm{J}_r(t) \bm{J}_r^T(t), \quad \bm{K}(t) = \begin{bmatrix}
                        \bm{J}_u(t) \\
                        \bm{J}_r(t)
                        \end{bmatrix}
                         \begin{bmatrix}
                        \bm{J}_u^T(t) , \bm{J}_r^T(t)
                        \end{bmatrix}.
     \end{align*}
\end{remark}

\begin{remark}
\label{remark: NTK_general}
It is worth pointing out that equation \ref{eq: PINN_ode} holds for any differential operator $\mathcal{L}$ and any neural network architecture.
\end{remark}

The statement of Lemma \ref{lemma: PINN_ode} involves the matrix $\bm{K}(t)$, which we call the {\em neural tangent kernel of a physics-informed neural network (NTK of PINNs)}.  Recall that an infinitely wide neural network is a Gaussian process, and its NTK remains constant during training \cite{jacot2018neural}.
Now two natural questions arise: how does the PDE residual behave in the infinite width limit? Does the NTK of PINNs exhibit similar behavior as the standard NTK? If so, what is the expression of the corresponding kernel? In the next subsections, we will answer these questions and show that, in the infinite-width limit, the NTK of PINNs indeed converges to a deterministic kernel at initialization and then remains constant during training.

\section{Analyzing the training dynamics of PINNs through the lens of their NTK}
\label{sec: PINN_dynamics}
To simplify the proof and understand the key ideas clearly, we confine ourselves to a simple model problem using a fully-connected neural network with one hidden layer. To this end, we consider a one-dimensional Poisson equation as our model problem. Let $\Omega$ be a bounded open interval in $\R$. The partial differential equation is summarized as follows
\begin{align}
\label{eq: model problem}
    \begin{split}
        & u_{xx}(x) = f(x), \quad \forall x \in \Omega \\
    & u(x) = g(x), \quad x \in \partial \Omega.
    \end{split}
\end{align}
We proceed by approximating the solution $u(x)$ by a fully-connected neural network denoted by $u(x, \bm{\theta})$ with one hidden layer. Now we define the network explicitly:
\begin{align}
    u(x, \bm{\theta}) = \frac{1}{\sqrt{N}} \bm{W}^{(1)} \cdot \sigma(\bm{W}^{(0)}x + \bm{b}^{(0)}) + \bm{b}^{(1)},
\end{align}
where $\bm{W}^{(0)} \in \R^{N \times 1}, \bm{W}^{(1)} \in \R^{1 \times N}$ are weights,  $\bm{b}^{(0)} \in \R^N, \bm{b}^{(1)}\in \R^1 $ are biases  $\bm{\theta} = (\bm{W}^{(0)} , \bm{W}^{(1)}, \bm{b}^{(0)}, \bm{b}^{(1)})$ represents all parameters in the network , and $\sigma$ is a smooth activation function. Then it is straightforward to show that
\begin{align}
    \label{eq: u_xx}
    u_{xx}(x, \bm{\theta}) = \frac{1}{\sqrt{N}} \bm{W}^{(1)} \cdot \Big[  \Ddot{\sigma}(\bm{W}^{(0)}x + \bm{b}^{(0)} ) \odot \bm{W}^{(0)}  \odot \bm{W}^{(0)} \Big],
\end{align}
where $\odot$ denotes point-wise multiplication and $\Ddot{\sigma}$ denotes the second order derivative of the activation function $\sigma$.

We initialize all the weights and biases to be i.i.d. $\mathcal{N}(0, 1)$ random variables. Based on our presentation in section \ref{sec: infinite_wide_NN}, we already know that, in the infinite width limit, $u(x,\bm{\theta})$ is a centered Gaussian process with covariance matrix $\Sigma^{(1)}(x ,x')$ at initialization, which is defined in equation \ref{eq:  NN_Gaussian_cov}. The following theorem reveals that $u_{xx}(x, \bm{\theta})$ converges in distribution to another centered Gaussian process with a covariance $\Sigma_{xx}^{(1)}$ under the same limit.

\begin{theorem}
\label{theorem: u_xx_gaussian}
Assume that the activation function $\sigma$ is smooth and has a bounded second order derivative $\Ddot{\sigma}$. Then for a fully-connected neural network of one hidden layer at initialization,
\begin{align}
   &u(x, \bm{\theta}) \xrightarrow{\mathcal{D}} \mathcal{GP}(0, \Sigma^{(1)}(x,x')) \\
   &u_{xx}(x, \bm{\theta}) \xrightarrow{\mathcal{D}} \mathcal{GP}(0, \Sigma_{xx}^{(1)}(x,x')),
\end{align}
as $N \rightarrow \infty$,
where $\mathcal{D}$ means convergence in distribution and
\begin{align}
     \Sigma_{xx}^{(1)}(x, x') =  \mathop{\mathbb{E}}_{u,v \sim \mathcal{N}(0,1)} \Big[ u^4 \Ddot{\sigma}(ux + v) \Ddot{\sigma}(ux' + v)   \Big].
\end{align}
\end{theorem}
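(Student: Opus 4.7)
The plan is to reduce both convergence claims to the Central Limit Theorem applied to finite-dimensional marginals, with the Cramér--Wold device upgrading univariate to multivariate convergence. The first assertion, that $u(x,\bm{\theta})$ converges to $\mathcal{GP}(0,\Sigma^{(1)})$, is the classical infinite-width Gaussian-process result of Neal and of Lee \emph{et al.} already recalled in Section~\ref{sec: infinite_wide_NN}, so I would treat it as a warm-up; the real content is the analogous statement for $u_{xx}$, which requires controlling the pointwise factor $\Ddot{\sigma}(W^{(0)}_i x + b^{(0)}_i)(W^{(0)}_i)^2$ arising from equation \ref{eq: u_xx}.

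First I would rewrite that equation as a scalar sum of i.i.d.\ terms,
\begin{equation*}
u_{xx}(x,\bm{\theta}) \;=\; \frac{1}{\sqrt{N}} \sum_{i=1}^{N} Z_i(x), \qquad Z_i(x) \;:=\; W^{(1)}_i\,(W^{(0)}_i)^2\,\Ddot{\sigma}\bigl(W^{(0)}_i x + b^{(0)}_i\bigr),
\end{equation*}
which are i.i.d.\ across $i$ because the triples $(W^{(0)}_i, b^{(0)}_i, W^{(1)}_i)$ are. Using that $W^{(1)}_i \sim \mathcal{N}(0,1)$ is independent of $(W^{(0)}_i, b^{(0)}_i)$, I would read off $\mathbb{E}[Z_i(x)] = 0$ and
\begin{equation*}
\mathbb{E}[Z_i(x) Z_i(x')] \;=\; \mathbb{E}[(W^{(1)}_i)^2]\,\mathbb{E}\bigl[(W^{(0)}_i)^4\,\Ddot{\sigma}(W^{(0)}_i x + b^{(0)}_i)\,\Ddot{\sigma}(W^{(0)}_i x' + b^{(0)}_i)\bigr] \;=\; \Sigma^{(1)}_{xx}(x, x'),
\end{equation*}
after renaming $(W^{(0)}_i, b^{(0)}_i) \to (u,v)$, so the target covariance emerges automatically.

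Next, for any finite collection $x_1,\dots,x_m$ the random vectors $(Z_i(x_1),\dots,Z_i(x_m))$ are i.i.d., centered, with covariance matrix $[\Sigma^{(1)}_{xx}(x_k,x_l)]_{k,l}$. The hypothesis that $\Ddot{\sigma}$ is bounded, say by $M$, is used exactly once here: together with $\mathbb{E}[(W^{(1)}_i)^2]=1$ and $\mathbb{E}[(W^{(0)}_i)^4]=3$, it gives $\mathbb{E}[Z_i(x)^2] \le 3M^2 < \infty$, which validates both the Fubini step inside the covariance computation and the second-moment hypothesis of the multivariate CLT. The standard i.i.d.\ multivariate CLT then yields
\begin{equation*}
\bigl(u_{xx}(x_1,\bm{\theta}),\dots,u_{xx}(x_m,\bm{\theta})\bigr) \;\xrightarrow{\mathcal{D}}\; \mathcal{N}\bigl(\bm{0},\,[\Sigma^{(1)}_{xx}(x_k,x_l)]_{k,l}\bigr),
\end{equation*}
and since this holds for every $m$ and every choice of points, the process $u_{xx}(\cdot,\bm{\theta})$ converges to $\mathcal{GP}(0,\Sigma^{(1)}_{xx})$ in the sense of finite-dimensional distributions, which is the convention used throughout the NTK literature.

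The main obstacle I expect is bookkeeping rather than any deep analytic step: one has to separate the $W^{(1)}_i$ factor from the remaining pieces so that the mean vanishes and the covariance factorizes cleanly, and one has to pin down what ``convergence in distribution of a process'' means here (full functional convergence would require a tightness argument on a path space, which is beyond the scope of the lemma as stated). Once those two points are settled, the argument collapses to a two-moment computation plus a one-line invocation of the i.i.d.\ CLT; the proof for $u$ is structurally identical with $Z_i(x) = W^{(1)}_i\,\sigma(W^{(0)}_i x + b^{(0)}_i)$ and the scalar bias $b^{(1)}$ added back at the end to produce the additive constant $+1$ appearing in $\Sigma^{(0)}$ and $\Sigma^{(1)}$.
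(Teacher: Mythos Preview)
Your proposal is correct and follows the same skeleton as the paper's argument: both rewrite $u_{xx}$ as a normalized sum of i.i.d.\ terms $Z_i(x)=W^{(1)}_i (W^{(0)}_i)^2\,\Ddot{\sigma}(W^{(0)}_i x+b^{(0)}_i)$ and invoke the CLT, using the boundedness of $\Ddot{\sigma}$ together with the finiteness of Gaussian moments to supply the required second-moment control.

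The organizational difference is that you compute the covariance $\Sigma^{(1)}_{xx}(x,x')$ \emph{directly} on the summands $Z_i$ and then apply the multivariate CLT (via Cram\'er--Wold) to obtain joint Gaussianity of the finite-dimensional marginals in one stroke. The paper instead applies the univariate CLT at a single point, establishes uniform integrability of $|u_{xx}|^2$ from the bound $|\Ddot{\sigma}|\le C$, and uses that to pass the limit through the expectation $\mathbb{E}[u_{xx}(x,\bm\theta)u_{xx}(x',\bm\theta)]$ to identify $\Sigma^{(1)}_{xx}(x,x')$. Your route is slightly more economical: it sidesteps the uniform-integrability argument altogether and makes the joint Gaussianity of $(u_{xx}(x_1),\dots,u_{xx}(x_m))$ explicit, a point the paper leaves implicit. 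The paper's route, on the other hand, shows how the limiting covariance can be read off from the limit of second moments, which is the computation one would generalize to deeper networks.
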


\begin{proof}
The proof can be found in Appendix \ref{sec: proof_u_xx_gaussian}.
\end{proof}

\begin{remark}
By induction, the proof of Theorem \ref{theorem: u_xx_gaussian} can be extended to differential operators of any order and fully-connected neural networks with multiple hidden layers. Observe that a linear combination of Gaussian processes is still a Gaussian process. Therefore, Theorem \ref{theorem: u_xx_gaussian} can be generalized to any linear partial differential operator under appropriate regularity conditions.
\end{remark}

As an immediate corollary, a sufficiently wide physics-informed neural network for model problem \ref{eq: model problem} induces a joint Gaussian process (GP) between the function values and the PDE residual at initialization, indicating a PINNs-GP correspondence for linear PDEs.

The next question we investigate is whether the NTK of PINNs behaves similarly as the NTK of standard neural networks. The next theorem proves that indeed the kernel $\bm{K}(0)$ converges in probability to a certain deterministic kernel matrix as the width of the network goes to infinity.

\begin{theorem}
\label{theorem: NTK_PINNs_init}
For a physics-informed network with one hidden layer at initialization, and in the limit as the layer's width $N \rightarrow \infty$, the NTK $\bm{K}(t)$ of the PINNs model defined in equation \ref{eq: NTK_PINN} converges in probability to a deterministic limiting kernel, i.e,
\begin{align}
    \bm{K}(0) =
    \begin{bmatrix}
     \bm{K}_{uu}(0) & \bm{K}_{ur}(0) \\
     \bm{K}_{ru}(0) & \bm{K}_{rr}(0)
    \end{bmatrix}
    \rightarrow
     \begin{bmatrix}
     \Theta_{uu}^{(1)} &   \Theta_{ur}^{(1)} \\
       \Theta_{ru}^{(1)} & \Theta_{rr}^{(1)}
    \end{bmatrix} := \bm{K}^*,
\end{align}
where the explicit expression of $\bm{K}^*$ is provided in appendix \ref{sec: proof_NTK_PINN_init}.
\end{theorem}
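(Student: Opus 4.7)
The plan is to prove convergence block-by-block by decomposing $\bm{K}(0)$ according to the four parameter groups $\bm{W}^{(0)}, \bm{W}^{(1)}, \bm{b}^{(0)}, \bm{b}^{(1)}$ and recognizing each piece as an empirical mean of i.i.d.\ random variables that will converge by a weak law of large numbers. First I would write out all the partial derivatives explicitly. For $u(x,\bm{\theta})$ the gradient components are the standard ones of a one-hidden-layer network; for $u_{xx}(x,\bm{\theta})$ I would differentiate \eqref{eq: u_xx} carefully: the derivative with respect to $W^{(1)}_k$ produces $\tfrac{1}{\sqrt{N}}\Ddot{\sigma}(W^{(0)}_k x+b^{(0)}_k)(W^{(0)}_k)^2$, while the derivatives with respect to $W^{(0)}_k$ and $b^{(0)}_k$ involve $\dddot{\sigma}$ and extra $W^{(0)}_k$ factors arising from the chain rule. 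Each NTK block entry is then the sum of four inner products, one per parameter group.

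Next I would observe that, after pulling out the $1/N$ prefactor, each group's contribution takes the form $\tfrac{1}{N}\sum_{k=1}^N Z_k^{(i,j)}$ where the random variables $Z_k^{(i,j)}$ depend only on the triple $(W^{(0)}_k, W^{(1)}_k, b^{(0)}_k)$, which are i.i.d.\ across $k$ by the initialization. For example, the $W^{(1)}$-contribution to $(\bm{K}_{uu})_{ij}$ is $\tfrac{1}{N}\sum_k \sigma(W^{(0)}_k x_b^i + b^{(0)}_k)\sigma(W^{(0)}_k x_b^j + b^{(0)}_k)$, and the $W^{(1)}$-contribution to $(\bm{K}_{rr})_{ij}$ is $\tfrac{1}{N}\sum_k (W^{(0)}_k)^4 \Ddot{\sigma}(W^{(0)}_k x_r^i + b^{(0)}_k)\Ddot{\sigma}(W^{(0)}_k x_r^j + b^{(0)}_k)$, and so on. The $b^{(1)}$-contribution is simply $1$ for $\bm{K}_{uu}$ and $0$ for $\bm{K}_{rr}$ and $\bm{K}_{ur}$, requiring no limit.

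I would then verify integrability: since $\sigma$ is assumed smooth with bounded derivatives (extending the hypothesis of Theorem \ref{theorem: u_xx_gaussian} to the orders that actually appear, i.e.\ up to $\dddot{\sigma}$), and since all moments of standard Gaussians are finite, each $Z_k^{(i,j)}$ has a finite second moment. A Chebyshev argument, or equivalently the weak law of large numbers, then gives convergence in probability of each empirical average to the Gaussian expectation
\[
\Theta^{(1)}_{\bullet\bullet}(x,x') \;=\; \mathbb{E}_{u,v\sim\mathcal{N}(0,1)}\big[Z^{(x,x')}(u,v)\big],
\]
and summing the finitely many group contributions (and applying the continuous mapping theorem, since finite-dimensional sums preserve convergence in probability) yields convergence of each block of $\bm{K}(0)$ to the corresponding block of $\bm{K}^*$. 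The explicit expressions $\Theta^{(1)}_{uu}, \Theta^{(1)}_{ur}, \Theta^{(1)}_{rr}$ are then just the catalogue of these expectations, matching the form recorded in Appendix \ref{sec: proof_NTK_PINN_init}.

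The main technical obstacle is bookkeeping rather than analysis: correctly enumerating all the terms in $\partial u_{xx}/\partial \bm{\theta}$ (there are three additive contributions at $W^{(0)}_k$ via the product rule on $\Ddot{\sigma}(W^{(0)}_k x+b^{(0)}_k)\,(W^{(0)}_k)^2$) and making sure every cross-term between different parameter groups actually vanishes in the inner product, since only parameters that index the same hidden neuron $k$ can co-appear. A minor secondary obstacle is formally strengthening the smoothness hypothesis on $\sigma$ so that the relevant $Z_k^{(i,j)}$ have finite second moments uniformly in $N$; this is harmless in practice but should be stated explicitly alongside Theorem \ref{theorem: NTK_PINNs_init}.
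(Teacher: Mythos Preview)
Your proposal is correct and follows essentially the same route as the paper's proof: explicit computation of $\partial u/\partial\bm{\theta}$ and $\partial u_{xx}/\partial\bm{\theta}$ for each parameter group, recognition of every NTK block entry as $\tfrac{1}{N}\sum_k Z_k$ with i.i.d.\ summands in $(W^{(0)}_k,W^{(1)}_k,b^{(0)}_k)$, and a law-of-large-numbers limit term by term. One small bookkeeping correction: the product rule on $(W^{(0)}_k)^2\,\Ddot{\sigma}(W^{(0)}_k x+b^{(0)}_k)$ yields two terms (one with coefficient $2$), not three, so $\partial u_{xx}/\partial W^{(0)}_k = \tfrac{1}{\sqrt{N}}W^{(1)}_kW^{(0)}_k\big[W^{(0)}_k\dddot{\sigma}\cdot x + 2\Ddot{\sigma}\big]$; this does not affect the argument.
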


\begin{proof}
The proof can be found in Appendix \ref{sec:proof_NTK_PINNs_init}.
\end{proof}

Our second key result is that the NTK of PINNs stays asymptotically constant during training, i.e $\bm{K}(t) \approx \bm{K}(0)$ for all $t$.  To state and prove the theorem rigorously, we may assume that all parameters and the loss function do not blow up and are uniformly bounded during training. The first two assumptions are both reasonable and practical, otherwise one would obtain unstable and divergent training processes. In addition, the activation has to be $4$-th order smooth and all its derivatives are bounded. The last assumption is not a strong restriction since it is satisfied by most of the activation functions commonly used for PINNs such as sigmoids, hyperbolic tangents, sine functions, etc.
\begin{theorem}
\label{theorem: kernel_constant}
For the model problem \ref{eq: model problem} with a fully-connected neural network of one hidden layer, consider minimizing the loss function \ref{eq: PINN_loss} by gradient descent with an infinitesimally small learning rate. For any $T>0$ satisfying the following assumptions:
\begin{enumerate}[label=(\roman*),leftmargin=*]
    \item  there exists a constant $C >0 $ such that all parameters of the network is uniformly bounded for $t \in T$, i.e.
    \begin{align*}
        \sup_{t \in  [0,T]} \| \bm{\theta}(t)  \|_\infty   \leq C
    \end{align*}
    where $C$ does not depend on $N$.
    \item there exists a constant $C >0 $ such that
    \begin{align*}
        &\int_0^T \Big| \sum_{i = 1}^{N_b} \big(u(x_b^i,\bm{\theta}(\tau)) -g(x_b^i) \big)  \Big| d \tau \leq C \\
        &\int_0^T \Big| \sum_{i = 1}^{N_r} \big(u_{xx}(x_r^i,\bm{\theta}(\tau)) -f(x_r^i) \big)  \Big| d \tau \leq C
    \end{align*}
    \item the activation function $\sigma$ is smooth and  $|\sigma^{(k)}| \leq C$ for $k=0,1,2,3, 4$, where $\sigma^{(k)}$ denotes $k$-th order derivative of $\sigma$.
\end{enumerate}
Then we have
\begin{align}
    \lim_{N \rightarrow \infty} \sup_{t \in [0,T]} \| \bm{K}(t) - \bm{K}(0) \|_2  = 0,
\end{align}
where $\bm{K}(t)$ is the corresponding NTK of PINNs.
\end{theorem}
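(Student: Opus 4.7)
The overall plan is to reduce the kernel-constancy claim to a parameter-drift estimate: show that every individual parameter changes by only $O(1/\sqrt{N})$ uniformly on $[0,T]$, and that every entry of $\bm{K}$ is Lipschitz in $\bm{\theta}$ with a Lipschitz constant that does not blow up as $N\to\infty$. Since $\bm{K}(t)$ has fixed dimension $(N_b+N_r)\times(N_b+N_r)$, entrywise control automatically yields operator $2$-norm control, so the work reduces to these two ingredients. Throughout, I would measure parameters in the $\ell^\infty$ metric, which is the natural choice given the per-neuron structure of a one-hidden-layer network.

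First, I would bound the parameter drift via the gradient flow \eqref{eq: gradient_flow}. Writing
\begin{align*}
    \partial_{\theta_k}\mathcal{L}(\bm{\theta}) = \sum_{i=1}^{N_b}\bigl(u(x_b^i,\bm{\theta})-g(x_b^i)\bigr)\,\partial_{\theta_k}u(x_b^i,\bm{\theta}) + \sum_{i=1}^{N_r}\bigl(u_{xx}(x_r^i,\bm{\theta})-f(x_r^i)\bigr)\,\partial_{\theta_k}u_{xx}(x_r^i,\bm{\theta}),
\end{align*}
the explicit form of $u_{xx}$ in \eqref{eq: u_xx}, together with assumption (i) on parameter boundedness and assumption (iii) on bounded derivatives of $\sigma$, yields $|\partial_{\theta_k}u(x,\bm{\theta})|,\,|\partial_{\theta_k}u_{xx}(x,\bm{\theta})| \leq C/\sqrt{N}$ for every hidden-layer parameter $\theta_k\in\{W_k^{(0)},b_k^{(0)},W_k^{(1)}\}$, with $C$ independent of $N$. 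Factoring this $1/\sqrt{N}$ prefactor outside the sums over data points and invoking assumption (ii) to control the remaining time integrals of the residual contributions, I would obtain $\sup_{t\in[0,T]}|\theta_k(t)-\theta_k(0)| = O(1/\sqrt{N})$. The output bias $b^{(1)}$ is handled directly by assumption (ii), since its gradient coincides with the summed boundary residual.

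Second, I would establish entrywise Lipschitz dependence of $\bm{K}$ on $\bm{\theta}$. Each of $(\bm{K}_{uu})_{ij},(\bm{K}_{ur})_{ij},(\bm{K}_{rr})_{ij}$ is, by \eqref{eq: NTK_PINN}, a sum over the $N$ hidden neurons of products of $\partial_\theta u$ or $\partial_\theta u_{xx}$ at two input points, each summand carrying a $1/N$ prefactor. Differentiating once more in any single parameter $\theta_j$, and using $|\sigma^{(k)}|\leq C$ for $k\leq 4$ from assumption (iii), produces $|\partial_{\theta_j}\bm{K}_{pq}|\leq C/\sqrt{N}$ entrywise. A mean-value estimate along the trajectory $\bm{\theta}(t)$, combined with the drift bound from the previous step, then gives $|\bm{K}_{pq}(t)-\bm{K}_{pq}(0)| = O(1/\sqrt{N})$ uniformly on $[0,T]$, and passing to the operator $2$-norm completes the proof.

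The main technical obstacle lies in the Lipschitz step for the residual blocks $\bm{K}_{ur}$ and $\bm{K}_{rr}$. Because $u_{xx}$ contains cubic monomials $W_k^{(1)}(W_k^{(0)})^2\Ddot{\sigma}(\cdot)$ in the hidden weights, its parameter gradient already involves $\sigma^{(3)}$ and weight products of degree four, and differentiating a kernel entry once more brings in $\sigma^{(4)}$ together with degree-six weight monomials when both parameters lie in the same neuron. The bookkeeping required to verify that every such term retains the decisive $1/\sqrt{N}$ prefactor---and that no double sum over neurons secretly produces an extra factor of $N$---is the most delicate part of the argument, and is precisely why assumption (iii) imposes boundedness up to order four rather than just order two.
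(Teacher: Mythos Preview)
Your overall plan is sound and is close to a valid alternative to the paper's argument, but the key quantitative estimate in Step~2 is one power of $\sqrt{N}$ too weak, and with the bound as you state it the argument does not close. A mean-value estimate along the trajectory gives
\[
\bigl|\bm{K}_{pq}(t)-\bm{K}_{pq}(0)\bigr|\;\le\;\sum_{j}\bigl|\partial_{\theta_j}\bm{K}_{pq}\bigr|\cdot\bigl|\theta_j(t)-\theta_j(0)\bigr|,
\]
and with $O(N)$ hidden-layer parameters, each $|\partial_{\theta_j}\bm{K}_{pq}|\le C/\sqrt{N}$ as you claim, and each drift $|\theta_j(t)-\theta_j(0)|=O(1/\sqrt{N})$ from Step~1, the right-hand side is only $O(1)$, not $o(1)$. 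What is actually true---and what makes your route work---is the sharper bound $|\partial_{\theta_j}\bm{K}_{pq}|\le C/N$: a single hidden parameter $\theta_m$ enters $\bm{K}_{pq}$ only through the $O(1)$ summands attached to neuron $m$; each such summand is a product of two $O(1/\sqrt{N})$ gradients and hence $O(1/N)$, and differentiating it once more in $\theta_m$ keeps it $O(1/N)$ (this is precisely where $|\sigma^{(4)}|\le C$ is needed, for the $\bm{K}_{rr}$ block). With that corrected bound the mean-value sum becomes $N\cdot(C/N)\cdot(C/\sqrt{N})=O(1/\sqrt{N})$ and your conclusion follows. Your own final paragraph flags exactly the right concern---that no hidden sum over neurons introduces an extra factor of $N$---but the stated $C/\sqrt{N}$ bound does not yet exploit that locality.

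For comparison, the paper does not differentiate $\bm{K}$ in $\bm\theta$ at all. It writes $\bm{K}(t)=\bm{J}(t)\bm{J}(t)^T$ with $\bm{J}$ the stacked Jacobian of $(u,u_{xx})$, and shows directly that the full gradient vectors $\partial u/\partial\bm\theta$ and $\partial u_{xx}/\partial\bm\theta$ remain close to their initial values in $\ell^2$ (Lemma~\ref{lemma: output_grad_constant}), building on a scaled $\ell^2$ weight-drift bound $\|N^{-1/2}(\bm{W}^{(l)}(t)-\bm{W}^{(l)}(0))\|_2\to 0$ (Lemma~\ref{lemma: weight_change_little}) rather than your per-parameter $\ell^\infty$ bound. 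Once patched to $C/N$, your approach is more direct and makes the role of the fourth-derivative assumption on $\sigma$ more transparent; the paper's route avoids second derivatives of the network in $\bm\theta$ at the cost of a longer chain of intermediate lemmas.
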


\begin{proof}
The proof can be found in Appendix \ref{sec: proof_kernel_constant}.
\end{proof}



Here we provide some intuition behind the proof. The crucial observation is that all parameters of the network change little during training (see Lemma \ref{lemma: weight_change_little} in the Appendix).
By intuition, for sufficient wide neural networks, any slight movement of weights would contribute to a non-negligible change in the network output. As a result, the gradients of the outputs $u(x, \bm{\theta})$ and $u_{xx}(x, \bm{\theta})$ with respect to parameters barely change (see Lemma \ref{lemma: output_grad_constant}), and, therefore, the kernel remains almost static during training.

Combining Theorem \ref{theorem: NTK_PINNs_init} and Theorem \ref{theorem: kernel_constant} we may conclude that, for the model problem of equation \ref{eq: model problem}, we have
\begin{align*}
    \bm{K}(t) \approx \bm{K}(0) \approx \bm{K}^*, \quad \forall t,
\end{align*}
from which (and equation \ref{eq: PINN_ode}) we immediately obtain
\begin{align}
    \label{eq: PINN_ode_approx}
     \begin{bmatrix}
    \frac{d u(\bm{x}_b, {\bm \theta}(t))}{dt}\\
    \frac{d u_{xx}(\bm{x}_r, {\bm \theta}(t))}{dt}
    \end{bmatrix}
    =
       - \bm{K}(t)
    \cdot
       \begin{bmatrix}
    u(\bm{x}_b, {\bm \theta}(t)) - g(\bm{x}_b) \\
    u_{xx}(\bm{x}_r, {\bm \theta}(t)) - f(\bm{x}_r)
    \end{bmatrix}
    \approx - \bm{K}^*   \begin{bmatrix}
    u(\bm{x}_b, {\bm \theta}(t)) - g(\bm{x}_b) \\
    u_{xx}(\bm{x}_r, {\bm \theta}(t)) - f(\bm{x}_r).
    \end{bmatrix}.
\end{align}
Note that if the matrix $\bm{K}^*$ is invertible, then
according to \cite{lee2019wide, tancik2020fourier}, the network’s outputs $u(x, \bm{\theta})$ and $u_{xx}(x, \bm{\theta})$ can be approximated for any arbitrary test data $\bm{x}_{test}$ after $t$ steps of gradient descent as
\begin{align}
      \begin{bmatrix}
    u(\bm{x}_{test}, {\bm \theta}(t)) \\
    u_{xx}(\bm{x}_{test}, {\bm \theta}(t))
    \end{bmatrix}
    \approx \bm{K}_{test}^* (\bm{K}^*)^{-1} \left(I - e^{- \bm{K}^*t }   \right)
    \cdot
     \begin{bmatrix}
     g(\bm{x}_b) \\
    f(\bm{x}_r)
    \end{bmatrix},
 \end{align}
 where  $\bm{K}_{test}$ is the NTK matrix between
all points in $\bm{x}_{test}$ and all training data. Letting $t \rightarrow \infty$, we obtain
\begin{align*}
      \begin{bmatrix}
    u(\bm{x}_{test}, {\bm \theta}(\infty)) \\
    u_{xx}(\bm{x}_{test}, {\bm \theta}(\infty))
    \end{bmatrix}
    \approx \bm{K}_{test}^* (\bm{K}^*)^{-1}
    \cdot
     \begin{bmatrix}
     g(\bm{x}_b) \\
    f(\bm{x}_r)
    \end{bmatrix}.
\end{align*}
This implies that, under the assumption that $\bm{K}^*$ is invertible, an infinitely wide physics-informed neural network for model problem \ref{eq: model problem} is also equivalent to a kernel regression. However, from the authors' experience, the NTK of PINNs is always degenerate (see Figures \ref{fig: eigval_change_width_500}, \ref{fig: eigval_change_3_hidden_layer}) which means that we may not be able to casually perform kernel regression predictions in practice.

\section{Spectral bias in physics-informed neural networks}
\label{sec: spectral_bias}

In this section, we will utilize the developed theory to investigate whether physics-informed neural networks are spectrally biased. The term ``spectral bias" \cite{rahaman2019spectral, xu2019frequency, basri2020frequency} refers to a well known pathology that prevents deep fully-connected networks from learning high-frequency functions.

Since the NTK of PINNs barely changes during training, we may rewrite  equation  \ref{eq: PINN_ode_approx} as
\begin{align}
     \begin{bmatrix}
    \frac{d u(\bm{x}_b, {\bm \theta}(t))}{dt}\\
    \frac{d u_{xx}(\bm{x}_r, {\bm \theta}(t))}{dt}
    \end{bmatrix}
    \approx - \bm{K}(0)  \begin{bmatrix}
    u(\bm{x}_b, {\bm \theta}(t)) - g(\bm{x}_b) \\
    u_{xx}(\bm{x}_r, {\bm \theta}(t)) - f(\bm{x}_r)
    \end{bmatrix},
\end{align}
which leads to
\begin{align}
     \begin{bmatrix}
    \frac{d u(\bm{x}_b, {\bm \theta}(t))}{dt}\\
    \frac{d u_{xx}(\bm{x}_r, {\bm \theta}(t))}{dt}
    \end{bmatrix}
    \approx \left( I - e^{- \bm{K}(0)t} \right)  \cdot
     \begin{bmatrix}
     g(\bm{x}_b) \\
    f(\bm{x}_r)
    \end{bmatrix}.
\end{align}
As mentioned in remark \ref{remark: NTK_PSD}, the NTK of PINNs is also positive semi-definite. So we can take its spectral decomposition $\bm{K}(0) = \bm{Q}^T\bm{\Lambda} \bm{Q}$, where $\bm{Q}$ is an orthogonal matrix and $\bm{\Lambda}$ is a diagonal matrix whose entries are the eigenvalues $\lambda_i \geq 0 $ of $\bm{K}(0)$. Consequently,
the training error is given by
\begin{align*}
     \begin{bmatrix}
    \frac{d u(\bm{x}_b, {\bm \theta}(t))}{dt}\\
    \frac{d u_{xx}(\bm{x}_r, {\bm \theta}(t))}{dt}
    \end{bmatrix} -
     \begin{bmatrix}
     g(\bm{x}_b) \\
    f(\bm{x}_r)
    \end{bmatrix}
    &\approx    \left( I - e^{- \bm{K}(0)t} \right)  \cdot
     \begin{bmatrix}
     g(\bm{x}_b) \\
    f(\bm{x}_r)
    \end{bmatrix}  -
    \begin{bmatrix}
     g(\bm{x}_b) \\
    f(\bm{x}_r)
    \end{bmatrix} \\
    & \approx - \bm{Q}^T e^{- \bm{\Lambda}t} \bm{Q}  \cdot
    \begin{bmatrix}
     g(\bm{x}_b) \\
    f(\bm{x}_r)
    \end{bmatrix},
\end{align*}
which is equivalent to
\begin{align}
    \label{eq: training_error}
    \bm{Q} \left(  \begin{bmatrix}
    \frac{d u(\bm{x}_b, {\bm \theta}(t))}{dt}\\
    \frac{d u_{xx}(\bm{x}_r, {\bm \theta}(t))}{dt}
    \end{bmatrix} -
     \begin{bmatrix}
     g(\bm{x}_b) \\
    f(\bm{x}_r)
    \end{bmatrix}
      \right) \approx  -  e^{- \bm{\Lambda}t} \bm{Q}  \cdot
    \begin{bmatrix}
     g(\bm{x}_b) \\
    f(\bm{x}_r)
    \end{bmatrix}.
\end{align}
This implies that the $i$-th component of the left hand side in equation \ref{eq: training_error} will decay approximately at the rate $e^{- \lambda_i t}$. In other words, the eigenvalues of the kernel characterize how fast the absolute training error decreases. Particularly, components of the target function that correspond to kernel eigenvectors with larger eigenvalues will be learned faster.  For fully-connected networks, the eigenvectors corresponding to higher eigenvalues of the NTK matrix generally exhibit lower frequencies \cite{rahaman2019spectral,ronen2019convergence,basri2020frequency}. From Figure \ref{fig: eigval_diff_a}, one can observe that the eigenvalues of the NTK of PINNs decay rapidly. This results in extremely slow convergence to the high-frequency components of the target
function. Thus we may conclude that PINNs suffer from the spectral bias either.


More generally, the NTK of PINNs after $t$ steps of gradient descent is given by
\begin{align*}
   \bm{K}(t) =
    \begin{bmatrix}
     \bm{K}_{uu}(t) & \bm{K}_{ur}(t) \\
     \bm{K}_{ru}(t) & \bm{K}_{rr}(t)
    \end{bmatrix}
    =
   \begin{bmatrix}
                        \bm{J}_u(t) \\
                        \bm{J}_r(t)
                        \end{bmatrix}
                         \begin{bmatrix}
                        \bm{J}_u^T(t) , \bm{J}_r^T(t)
                        \end{bmatrix}
                        = \bm{J}(t)\bm{J}^T(t).
\end{align*}
It follows that
\begin{align*}
     \sum_{i=1}^{N_b + N_r} \lambda_i(t) &= Tr\left(\bm{K}(t)\right)
     = Tr\left( \bm{J}(t)\bm{J}^T(t)\right) =  Tr\left( \bm{J}^T(t )\bm{J}(t)\right) \\
     &= Tr\left( \bm{J}_u^T(t) \bm{J}_u(t) + \bm{J}_r^T(t) \bm{J}_r(t)\right) = Tr \left( \bm{J}_u(t) \bm{J}^T_u(t)\right) + Tr \left( \bm{J}_r(t) \bm{J}_r^T(t) \right) \\
     &=  \sum_{i=1}^{N_b} \lambda^{uu}_i(t) + \sum_{i=1}^{N_r} \lambda^{rr}_i(t),
\end{align*}
where $\lambda_i(t), \lambda^{uu}_i(t)$ and $\lambda^{rr}_i(t)$ denote the eigenvalues of $\bm{K}(t),\bm{K}_{uu}(t)$ and $\bm{K}_{rr}(t)$, respectively. This reveals that the overall convergence rate of the total training error is characterized by the eigenvalues of $\bm{K}_{uu}$
and $\bm{K}_{rr}$ together. Meanwhile, the separate training error of $u(\bm{x}_b,\bm{\theta})$ and $u_{xx}(\bm{x}_r,\bm{\theta})$ is determined by the eigenvalues of  $\bm{K}_{uu}$
and  $\bm{K}_{rr}$, respectively. The above observation motivates us to give the following definition.
\begin{definition}
\label{def: average_convergence_rate}
For a positive semi-definite kernel matrix $\bm{K} \in \R^{n \times n}$, the average convergence rate $c$ is defined as the mean of all its eigenvalues $\lambda_i$'s, i.e.
\begin{align}
    c = \frac{\sum_{i=1}^n \lambda_i}{n} = \frac{Tr(K)}{n}.
\end{align}
In particular, for any two kernel matrices $\bm{K}_1, \bm{K}_2$ with average convergence rate $c_1$ and $c_2$ respectively, we say that $\bm{K}_1$ dominates $\bm{K}_2$ if $c_1 \gg c_2$.
\end{definition}

As a concrete example, we train a fully-connected neural network with one hidden layer and $100$ neurons to solve the model problem \ref{sec: convergence_NTK} with a fabricated solution $u(x) = \sin(a \pi  x)$ for different frequency amplitudes $a$. Figure \ref{fig: eigval_diff_a} shows all eigenvalues of $\bm{K}, \bm{K}_{uu}$ and $\bm{K}_{rr}$ at initialization in descending order.
As with conventional deep fully-connected networks, the eigenvalues of the PINNs' NTK decay rapidly and most of the eigenvalues are near zero. Moreover, the distribution of eigenvalues of $\bm{K}$ looks similar for different frequency functions (different $a$), which may heuristically explain that PINNs tend to learn all frequencies almost simultaneously, as observed in Lu {\em et. al.} \cite{lu2019deepxde}.

Another key observation here is that the eigenvalues of $\bm{K}_{rr}$ are much greater than $\bm{K}_{uu}$, namely $\bm{K}_{rr}$ dominates $\bm{K}_{uu}$ by definition \ref{def: average_convergence_rate}. As a consequence, the PDE residual converges much faster than fitting the PDE boundary conditions, which may prevent the network from approximating the correct solution. From the authors' experience, high frequency functions typically lead to high eigenvalues in $\bm{K}_{rr}$, but in some cases $\bm{K}_{uu}$ can dominate $\bm{K}_{rr}$. We believe that such a discrepancy between $\bm{K}_{uu}$ and $\bm{K}_{rr}$ is one of the key fundamental reasons behind why PINNs can often fail to train and yield accurate predictions. In light of this evidence, in the next section, we describe a practical technique to address this pathology by appropriately assigning weights to the different terms in a PINNs loss function.

\begin{figure}
    \centering
    \includegraphics[width=0.8\textwidth]{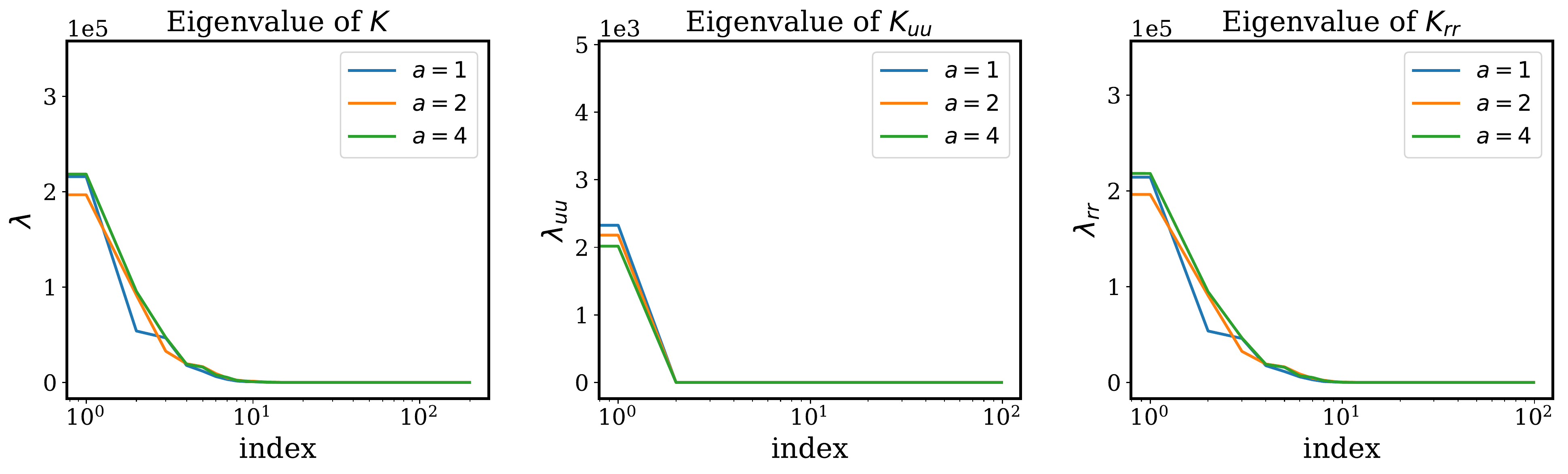}
    \caption{{\em Model problem (1D Poisson equation):} The eigenvalues of $\bm{K}, \bm{K}_{uu}$ and $\bm{K}_{rr}$ at initialization in descending order for different fabricated solutions $u(x) = \sin(a \pi x)$ where $a =1,2,4$. }
    \label{fig: eigval_diff_a}
\end{figure}

\section{Practical insights}
In this section, we consider gerenal PDEs of the form \ref{eq: PDE} - \ref{eq: PDE_BC} by leveraging the NTK theory we developed for PINNs. We approximate the latent solution $u(\bm{x})$ by a fully-connected neural network $u(\bm{x}, \bm{\theta})$  with multiple hidden layers, and train its parameters $\bm{\theta}$ by minimizing the following composite loss function
\begin{align}
    \label{eq: PINN_loss_prac}
    \mathcal{L}(\bm{\theta}) &= \mathcal{L}_b(\bm{\theta}) + \mathcal{L}_r(\bm{\theta}) \\
    & = \frac{\lambda_b}{2 N_b} \sum_{i=1}^{N_b}  |u(\bm{x}_b^i, \bm{\theta}) - g(\bm{x}_b^i)   |^2  + \frac{\lambda_r}{2 N_r} \sum_{i=1}^{N_r} |r(\bm{x}_r^i, \bm{\theta})|^2,
\end{align}
where $\lambda_b$ and $\lambda_r$ are some hyper-parameters which may be tuned manually or automatically by utilizing the back-propagated gradient statistics during training \cite{wang2020understanding}. Here, the training data $\{ {\bm{x}_b^i, g(\bm{x}_b^i)} \}_{i=1}^{N_b}$ and  $\{ {\bm{x}_r^i, f(\bm{x}_b^i)}$ may correspond to the full data-batch or mini-batches that are randomly sampled at each iteration of gradient descent.

Similar to the proof of Lemma \ref{lemma: PINN_ode}, we can derive the dynamics of the outputs $u(\bm{x},\bm{\theta})$ and $u(\bm{x}, \bm{\theta})$ corresponding to the above loss function as
\begin{align}
    \label{eq: PINN_ode_practice}
   \begin{bmatrix}
    \frac{d u(\bm{x}_b, {\bm \theta}(t))}{dt}\\
    \frac{d \mathcal{L}u(\bm{x}_r, {\bm \theta}(t))}{dt}
    \end{bmatrix}
    &=
       - \begin{bmatrix}
   \frac{\lambda_b}{N_b}  \bm{K}_{uu}(t) & \frac{\lambda_r}{N_r} \bm{K}_{ur}(t) \\
    \frac{\lambda_b}{N_b}  \bm{K}_{ru}(t) &  \frac{\lambda_r}{N_r} \bm{K}_{rr}(t)
    \end{bmatrix}
    \cdot
       \begin{bmatrix}
    u(\bm{x}_b, {\bm \theta}(t)) - g(\bm{x}_b) \\
    \mathcal{L}u(\bm{x}_r, {\bm \theta}(t)) - f(\bm{x}_r)
    \end{bmatrix}   \\
    &: = \widetilde{\bm{K}}(t)  \cdot
       \begin{bmatrix}
    u(\bm{x}_b, {\bm \theta}(t)) - g(\bm{x}_b) \\
    \mathcal{L}u(\bm{x}_r, {\bm \theta}(t)) - f(\bm{x}_r)
    \end{bmatrix},
\end{align}
where $\bm{K}_{uu}, \bm{K}_{ur}$ and $\bm{K}_{rr}$ are defined to be the same as in equation \ref{eq: NTK_PINN}. From simple stability analysis of a gradient descent (i.e. forward Euler \cite{moin2010fundamentals}) discretization of above ODE system, the maximum learning rate should be less than or equal to $2 /\lambda_{\max}(\widetilde{\bm{K}}(t))$. Also note that an alternative mechanism for controlling stability is to increase the batch size, which effectively corresponds to decreasing the learning rate.
Recall that the current setup in the main theorems put forth in this work holds for the model problem in equation \ref{eq: model problem} and fully-connected networks of one hidden layer with an NTK parameterization. This implies that, for general nonlinear PDEs, the NTK of PINNs may not remain fixed during training. Nevertheless, as mentioned in Remark \ref{remark: NTK_general}, we emphasize that, given an infinitesimal learning rate, equation \ref{eq: PINN_ode_practice} holds for any network architecture and any differential operator. Similarly, the singular values of NTK $\widetilde{\bm{K}}(t)$ determine the convergence rate of the training error using singular value decomposition, since $\widetilde{\bm{K}}(t)$ may not necessarily be semi-positive definite. Therefore, we can still understand the training dynamics of PINNs by tracking their NTK $\widetilde{\bm{K}}(t)$ during training, even for general nonlinear PDE problems.

A key observation here is that the magnitude of $\lambda_b, \lambda_r$, as well as the size of mini-batch would have a crucial impact on the singular values of $\widetilde{\bm{K}}(t)$, and, thus, the convergence rate of the training error of $u(\bm{x}_b, \bm{\theta})$ and $\mathcal{L}u(\bm{x}_r, \bm{\theta})$.
For instance, if we increase $\lambda_b$ and fix the batch size $N_b, N_r$ and the weight $\lambda_r$, then this will improve the convergence rate of $u(\bm{x}_b, \bm{\theta})$.
Furthermore, in the sense of convergence rate, changing the weights $\lambda_b$ or $\lambda_r$ is equivalent to changing the corresponding batch size $N_b, N_r$. Based on these observations, we can overcome the discrepancy between $\bm{K}_{uu}$ and $\bm{K}_{rr}$ discussed in section \ref{sec: spectral_bias} by calibrating the weights or batch size such that each component of of $u(\bm{x}_r, \bm{\theta})$ and $\mathcal{L}u(\bm{x}_r, \bm{\theta})$ has
similar convergence rate in magnitude. Since manipulating the batch size may involve extra computational costs (e.g., it may result to prohibitively very large batches), here we fix the batch size and just consider adjusting the weights $\lambda_b$ or $\lambda_r$ according to Algorithm \ref{alg: adatpive_weights}.

\begin{algorithm}
\SetAlgoLined
Consider a physics-informed neural network $u(\bm{x}, \bm{\theta})$ with parameters $\bm{\theta}$
and a loss function
\begin{align*}
\mathcal{L}(\bm{\theta}) :=   \lambda_b \mathcal{L}_b(\bm{\theta}) + \lambda_r \mathcal{L}_r(\bm{\theta}),
\end{align*}
where $\mathcal{L}_r(\bm{\theta})$ denotes the PDE residual loss and $\mathcal{L}_r(\theta)$ corresponds to  boundary conditions.
$\lambda_b = \lambda_r=1$  are free parameters used to overcome the discrepancy between $\bm{K}_{uu}$ and $\bm{K}_{rr}$.  Then use $S$ steps of a gradient descent algorithm to update the parameters $\bm{\theta}$ as:

 \For{$n = 1, \dots, S$}{
  (a) Compute $\lambda_b$ and $\lambda_r$ by
  \begin{align}
        \label{eq: lambda_b_update}
      & \lambda_b =  \frac{\sum_{i=1}^{N_r + N_b}\lambda_i(n)}{\sum_{i=1}^{N_b}\lambda_i^{uu}(n)} = \frac{Tr(\bm{K}(n))}{Tr(\bm{K}_{uu}(n))} \\
        \label{eq: lambda_r_update}
      &\lambda_r =  \frac{\sum_{i=1}^{N_r + N_b}\lambda_i(n)}{\sum_{i=1}^{N_r}\lambda_i^{rr}(n)} = \frac{Tr(\bm{K}(n))}{Tr(\bm{K}_{rr}(n))}
  \end{align}
  where $\lambda_i(n), \lambda_i^{uu}$ and $\lambda_i^{rr}(n)$ are eigenvalues of $\bm{K}(n), \bm{K}_{uu}(n), \bm{K}_{rr}(n)$ at $n$-th iteration.

  (b) Update the parameters $\theta$ via gradient descent
  \begin{align}
  \label{eq:theta_update}
      \bm{\theta}_{n+1} = \bm{\theta}_{n} - \eta \nabla_{\bm{\theta}}\mathcal{L}(\bm{\theta}_n)
  \end{align}
}
\caption{Adaptive weights for physics-informed neural networks}
\label{alg: adatpive_weights}
\end{algorithm}

First we remark that  the updates in equations \ref{eq: lambda_b_update} and \ref{eq: lambda_r_update} can either take place at every iteration of the gradient descent loop, or at a frequency specified by the user (e.g., every 10 gradient descent steps). To compute the sum of eigenvalues, it suffices to compute the trace of the corresponding NTK matrices, which can save some computational resources. Besides, we point out that the computation of the NTK $\bm{K}(t)$ is associated with the training data points fed to the network at each iteration, which means that the values of the kernel are not necessarily same at each iteration. However, if we assume that all training data  points are sampled from the same distribution and the change of NTK at each iteration is negligible, then the computed kernel should be approximately equal up to a permutation matrix. As a result,  the change of eigenvalues of $\bm{K}(t)$ at each iteration is also negligible and thus  the training process of Algorithm \ref{alg: adatpive_weights} should be stable. In section \ref{sec: alg_adaptive_weights}, we performed detailed numerical experiments to validate  the effectiveness of the proposed algorithm.

Here we also note that, in previous work, Wang {\em et. al.} introduced an alternative empirical approach for automatically tuning the weights $\lambda_b$ or $\lambda_r$ with the goal of balancing the magnitudes of the back-propagated gradients originating from different terms in a PINNs loss function. While effective in practice, this approach lacked any theoretical justification and did not provide a deeper insight into the training dynamics of PINNs. In contrast, the approach presented here follows naturally
from the NTK theory derived in section \ref{sec: PINN_dynamics}, and aims to trace and tackle the
the pathological convergence behavior of PINNs at its root.

\section{Numerical Experiments}
In this section, we provide a series of numerical studies that aim to validate our theory or access the performance of the proposed algorithm against the standard PINNs \cite{raissi2018deep} for inferring the solution of PDEs. Throughout numerical experiments we will approximate the latent variables by fully-connected neural networks with NTK parameterization \ref{eq: NTK_param} and  hyperbolic tangent activation functions. All networks are trained using standard stochastic gradient descent, unless otherwise specified. Finally, all results presented in this section can be reproduced using
our publicly available code  \url{https://github.com/PredictiveIntelligenceLab/PINNsNTK}.

\subsection{Convergence of the NTK of PINNs}
\label{sec: convergence_NTK}
As our first numerical example, we still focus on the model problem \ref{eq: model problem}  and verify the convergence of the PINNs' NTK. Specifically, we set $\Omega$ to be the unit interval $[0,1]$ and fabricate the exact solution to this problem taking the form $ u(x) = \sin(\pi x)$. The corresponding $f$ and $g$ are given by
\begin{align*}
   & f(x) = - \pi^2 \sin(\pi x), \quad x \in [0, 1] \\
   & g(x) = 0, \quad x = 0,1.
\end{align*}
We proceed by approximating the latent solution $u(x)$ by a fully-connected neural network $u(x, \bm{\theta})$ of one hidden layer with NTK parameterization (see equation \ref{eq: NTK_param}), and a hyperbolic tangent activation function.
The corresponding loss function is given by
\begin{align}
     \mathcal{L}(\bm{\theta}) &= \mathcal{L}_b(\bm{\theta}) + \mathcal{L}_r(\bm{\theta}) \\
     &= \frac{1}{2} \sum_{i=1}^{N_b}  |u(\bm{x}_b^i, \bm{\theta}) - g(\bm{x}_b^i)   |^2
     + \frac{1}{2} \sum_{i=1}^{N_r}  |u_{xx}(\bm{x}_r^i, \bm{\theta}) - f(\bm{x}_r^i)   |^2
\end{align}
Here we choose $N_b = N_r = 100$ and the collocation points $\{x_r^i\}_{i=1}^{N_r}$ are uniformly spaced in the unit interval. To monitor the change of the NTK $\bm{K}(t)$ for this PINN model, we train the network for different widths and for $10,000$ iterations by minimizing the loss function given above using standard full-batch gradient descent with a learning rate of $10^{-5}$. Here we remark that, in order to keep the gradient descent dynamics  \ref{eq: PINN_ode} steady,
the learning rate should be less than $2 / \lambda_{\max}$, where $\lambda_{\max}$ denotes the maximum eigenvalue of $\bm{K}(t)$.

Figure  \ref{fig: weight_change_one_hidden_layer} and  \ref{fig: kernel_change_one_hidden_layer} present the relative change in the norm of network's weights and NTK (starting from a random initialization) during training. As it can be seen, the change of both the weights and the NTK tends to zero as the width of the network grows to infinity, which is consistent with  Lemma \ref{lemma: weight_change_little} and  Theorem \ref{theorem: kernel_constant}. Moreover, we know that convergence in a matrix norm implies convergence in eigenvalues, and eigenvalues characterize the properties of a given matrix. To this end, we compute and monitor all eigenvalues of $\bm{K}$(t) of the network for width$=500$ at initialization and  after $10,000$ steps of gradient and plot them in descending order in Figure \ref{fig: eigval_change_width_500}. As expected, we see that all eigenvalues barely change for these two snapshots. Based on these observations, we may conclude that the NTK of PINNs with one hidden layer stays almost fixed during training.

However, PINNs of multiple hidden layers are not covered by our theory at the moment. Out of interest, we also investigate the relative change of weights, kernel, as well as the kernel's eigenvalues for a fully-connected network with three hidden layers (see Figure \ref{fig: three_hidden_layer}). We can observe that the change in both the weights and the NTK behaves almost identical to the case of a fully-connected network with one hidden layer shown in Figure \ref{fig: one_hidden_layer}.
Therefore we may conjecture that, for any linear or even nonlinear PDEs, the NTK of PINNs converges to a deterministic kernel and remains constant during training in the infinite width limit.

\begin{figure}
     \centering
     \begin{subfigure}[b]{0.3\textwidth}
         \centering
         \includegraphics[width=\textwidth]{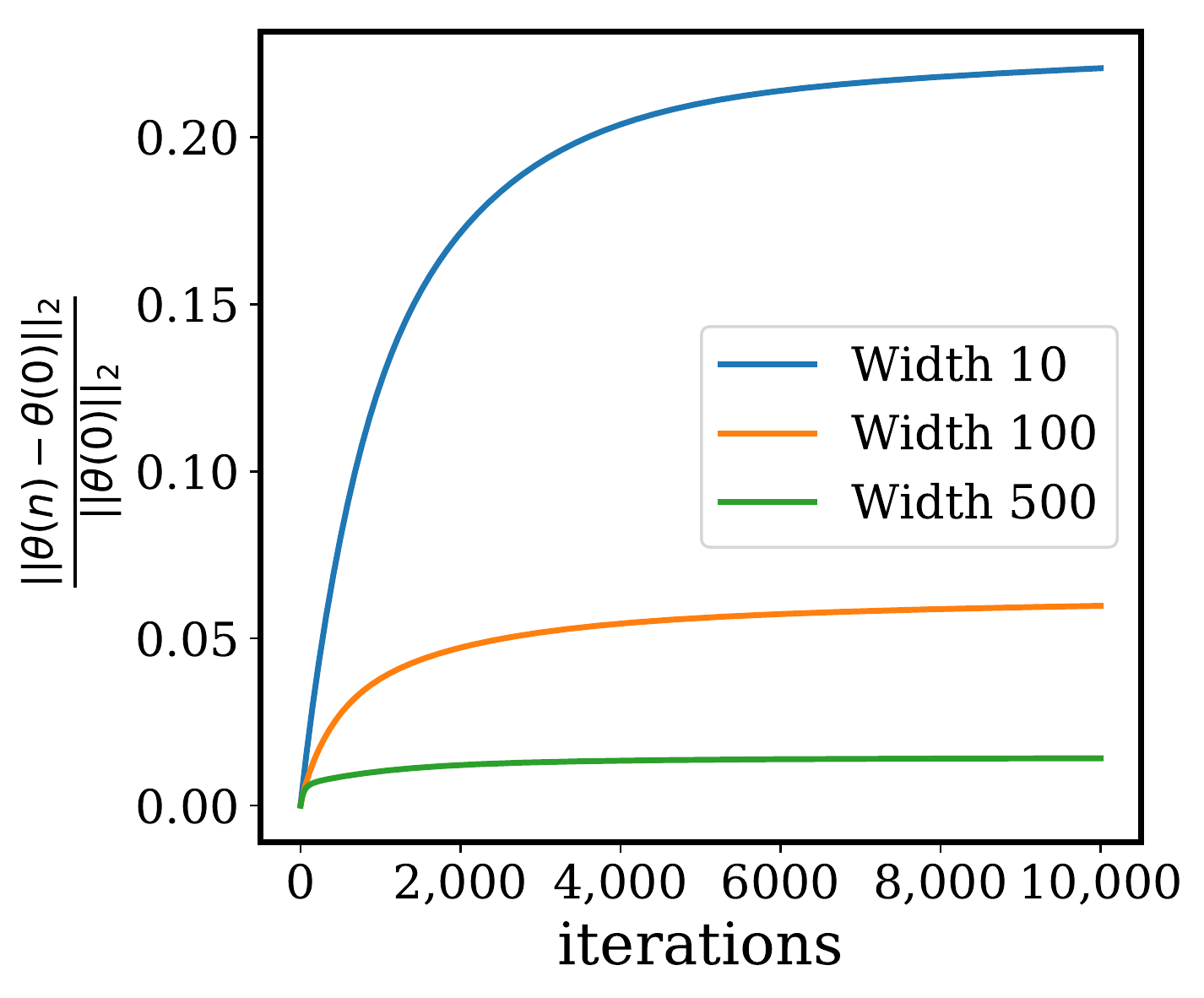}
         \caption{}
         \label{fig: weight_change_one_hidden_layer}
     \end{subfigure}
     \begin{subfigure}[b]{0.3\textwidth}
         \centering
         \includegraphics[width=\textwidth]{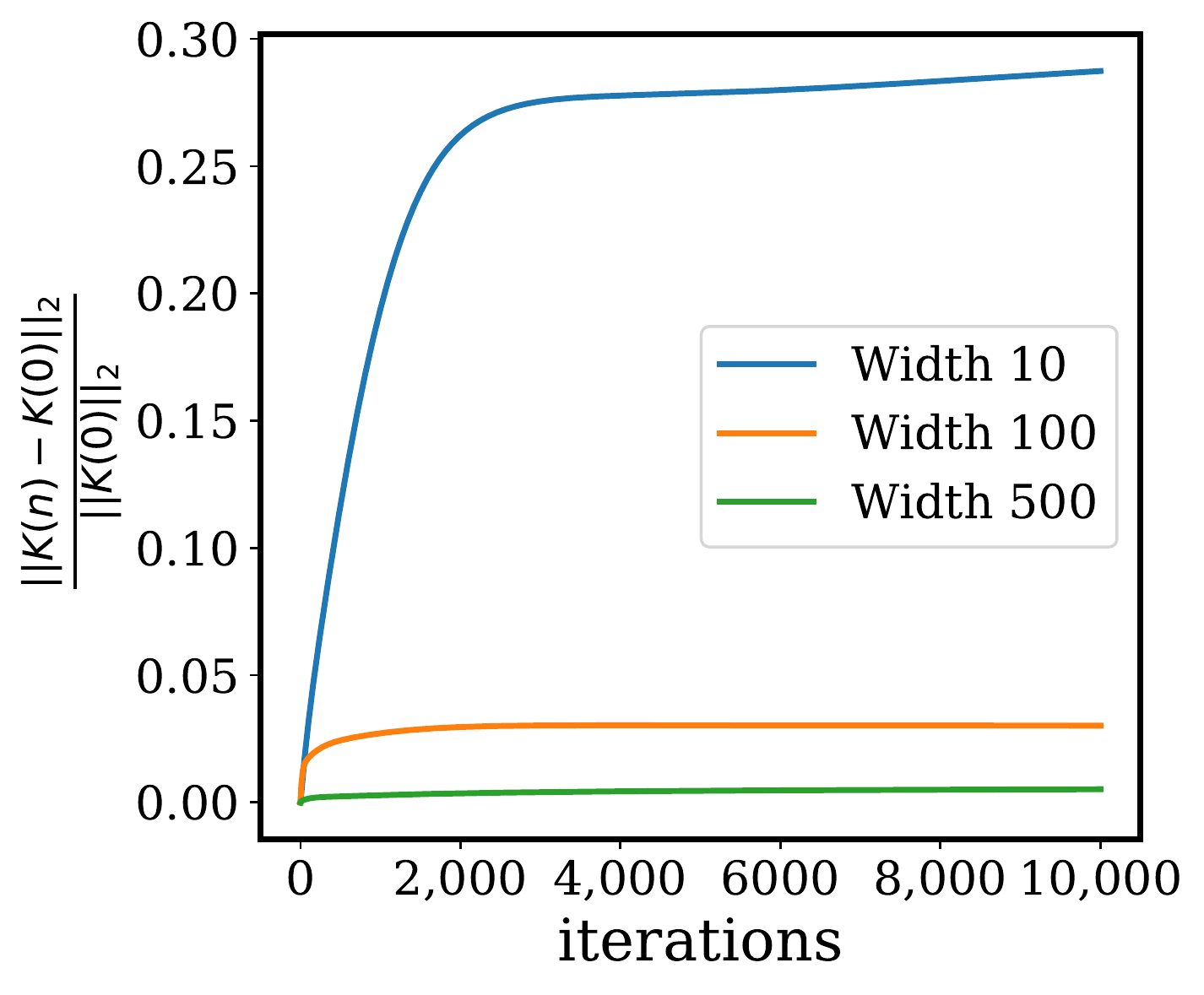}
         \caption{}
         \label{fig: kernel_change_one_hidden_layer}
     \end{subfigure}
      \begin{subfigure}[b]{0.3\textwidth}
         \centering
         \includegraphics[width=\textwidth]{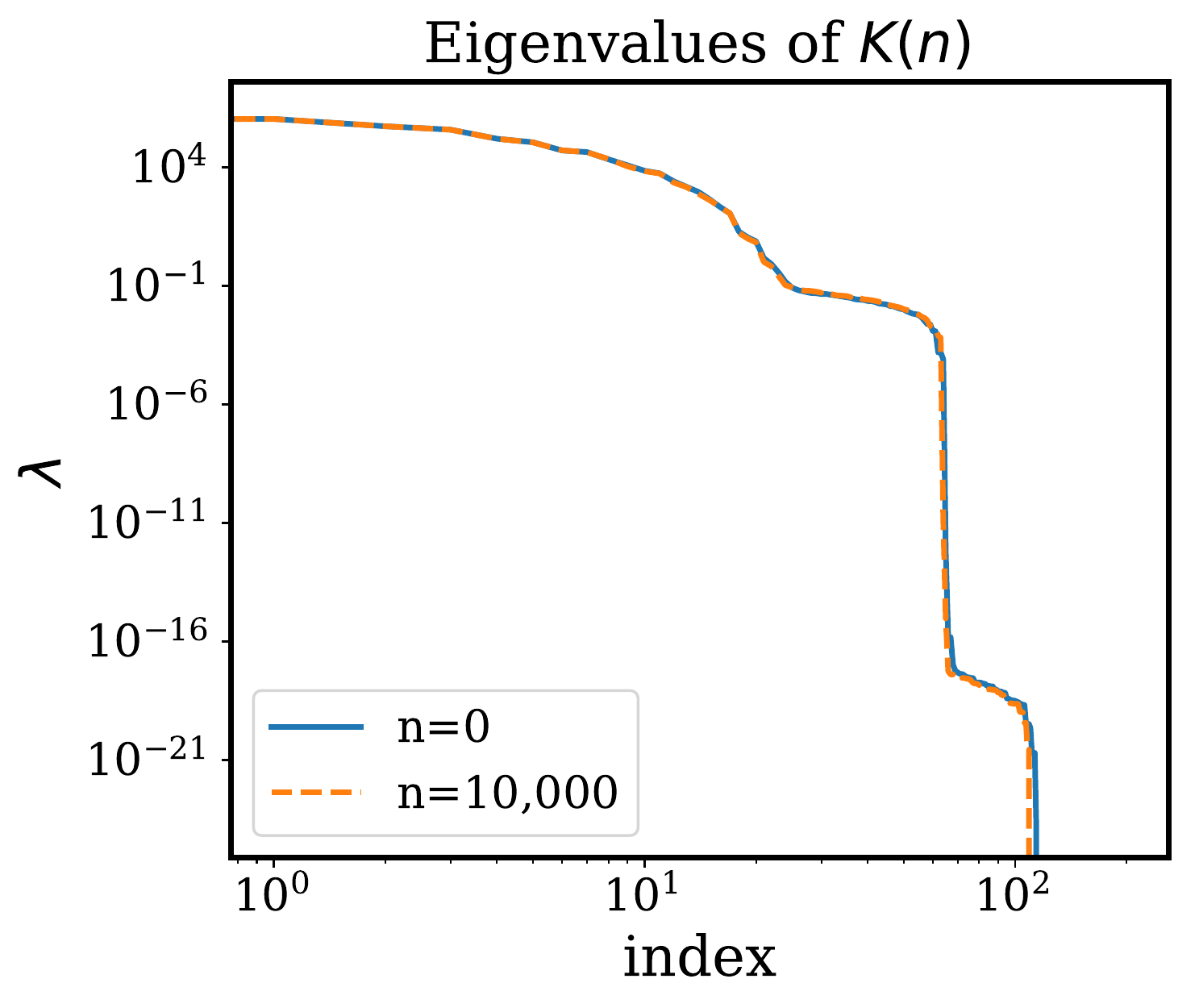}
         \caption{}
         \label{fig: eigval_change_width_500}
     \end{subfigure}
        \caption{{\em Model problem \ref{sec: convergence_NTK} (1D Poisson equation):} (a) (b) The relative change of  parameters $\bm{\theta}$ and the NTK of PINNs $\bm{K}$  obtained by training a fully-connected neural network with one hidden layer and different widths ($10, 100, 500$) via $10,000$ iterations of full-batch gradient descent with a learning rate of $10^{-5}$. (c) The eigenvalues of the NTK $\bm{K}$ at initialization and at the last step of training ($n = 10,000$).}
        \label{fig: one_hidden_layer}
\end{figure}

\begin{figure}
     \centering
     \begin{subfigure}[b]{0.3\textwidth}
         \centering
         \includegraphics[width=\textwidth]{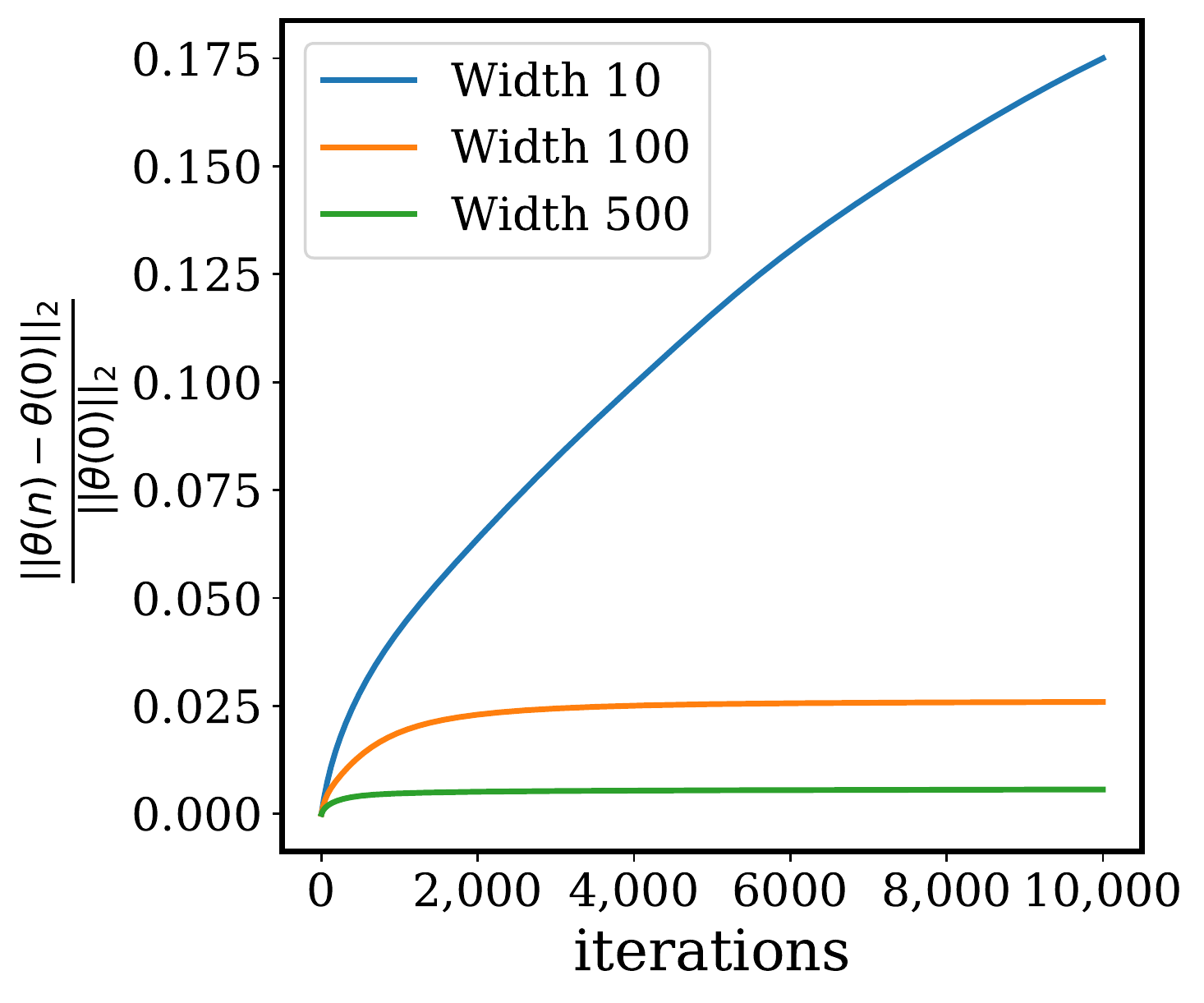}
         \caption{}
         \label{fig: weight_change_3_hidden_layer}
     \end{subfigure}
     \begin{subfigure}[b]{0.3\textwidth}
         \centering
         \includegraphics[width=\textwidth]{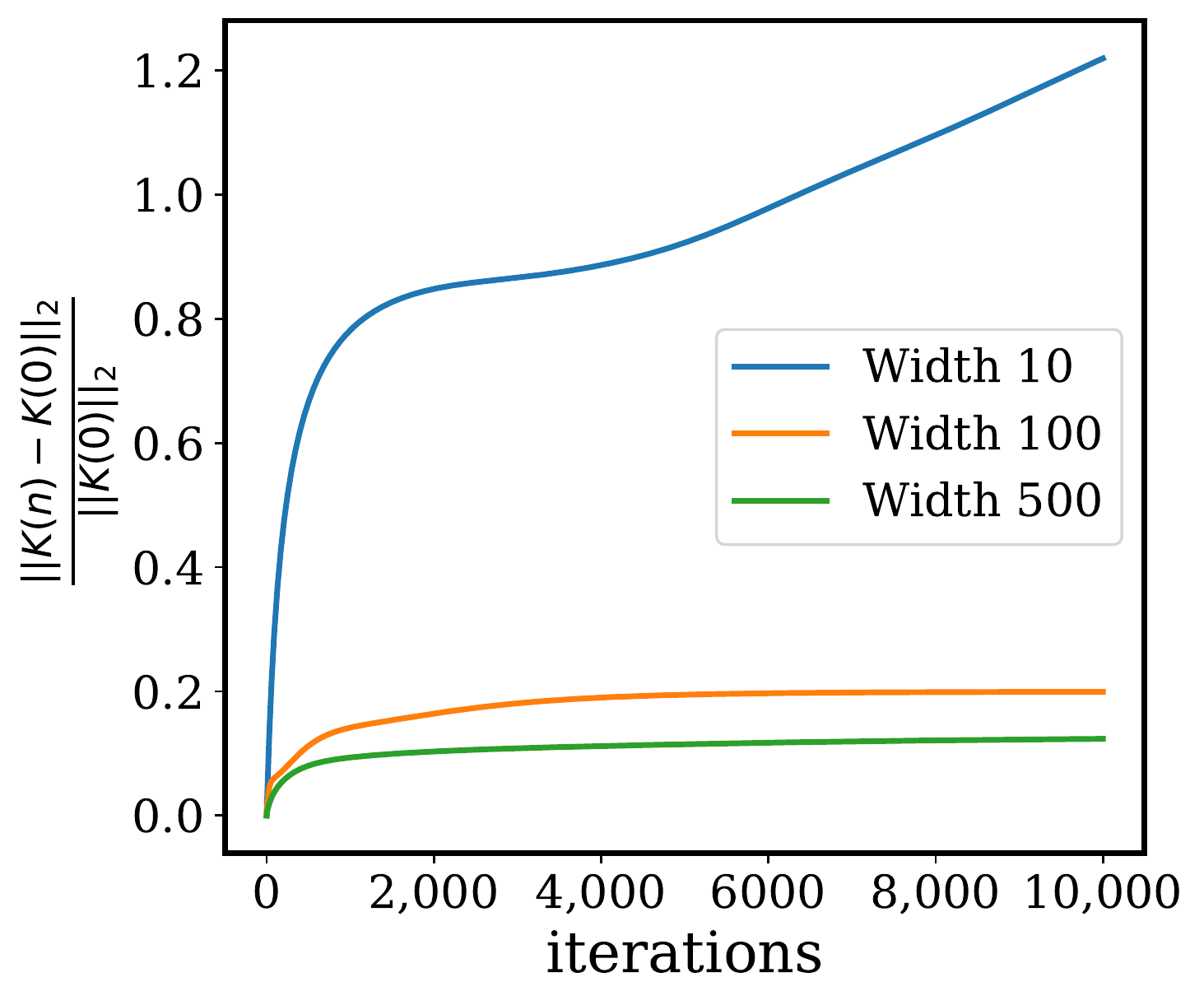}
         \caption{}
         \label{fig: kernel_change_3_hidden_layer}
     \end{subfigure}
      \begin{subfigure}[b]{0.3\textwidth}
         \centering
         \includegraphics[width=\textwidth]{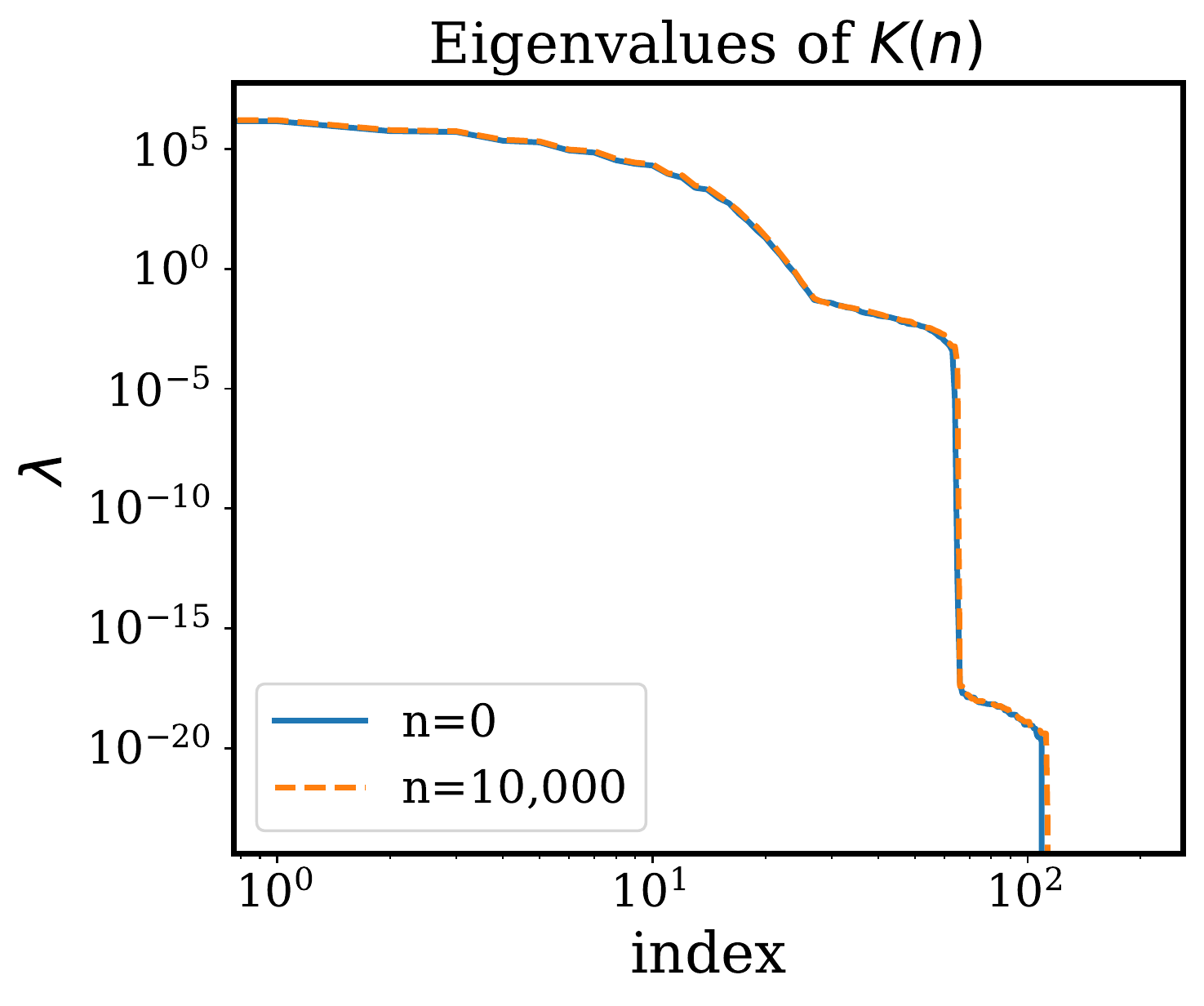}
         \caption{}
         \label{fig: eigval_change_3_hidden_layer}
     \end{subfigure}
         \caption{{\em Model problem \ref{sec: convergence_NTK} (1D Poisson equation):} (a) (b) The relative change of  parameters $\bm{\theta}$ and the NTK of PINNs $\bm{K}$  obtained by training a fully-connected neural network with three hidden layers and different widths ($10, 100, 500$) via $10,000$ iterations of full-batch gradient descent with a learning rate of $10^{-5}$. (c) The eigenvalues of the NTK $\bm{K}$ at initialization and at the last step of training ($n = 10,000$).}
        \label{fig: three_hidden_layer}
\end{figure}

\subsection{Adaptive training for PINNs}
\label{sec: alg_adaptive_weights}


In this section, we aim to validate the developed theory and examine the effectiveness of the proposed adaptive training algorithm on the model problem of equation \ref{sec: convergence_NTK}. To this end, we consider a fabricated exact solution of the form $u(x) = \sin (4 \pi x)$, inducing a corresponding forcing term $f$ and Dirichlet boundary condition $g$ given by
\begin{align*}
    & f(x) = - 16 \pi^2 \sin (4 \pi x), \quad x \in [0,1]\\
    & g(x) = 0, \quad x = 0,1.
\end{align*}
We  proceed by approximating the latent solution $u(x)$ by a fully-connected neural network with one hidden layer and width set to $100$.  Recall from  Theorem  \ref{theorem: NTK_PINNs_init} and Theorem \ref{theorem: kernel_constant}, that the NTK barely changes during training. This implies that the weights $\lambda_b, \lambda_r$ are determined by NTK at initialization and thus they can be regarded as fixed weights during training. Moreover, from Figure \ref{fig: eigval_diff_a}, we already know that $\bm{K}_{rr}$ dominates $\bm{K}_{uu}$ for this example. Therefore, the updating rule for hyper-parameters $\lambda_b,\lambda_r$ at $t$ step of gradient descent can be reduced to
\begin{align}
    \label{eq: fix_weights_1}
        &\lambda_b = \frac{\sum_{i=1}^{N_b+N_r} \lambda_i(t)}{\sum_{i=1}^{N_b} \lambda_i^{uu}(t)} \approx \frac{\sum_{i=1}^{N_r} \lambda_i^{rr}(t)}{\sum_{i=1}^{N_b} \lambda_i^{uu}(t)} \approx \frac{Tr(\bm{K}_{rr}(0))}{Tr(\bm{K}_{uu}(0))} \\
        \label{eq: fix_weights_2}
    & \lambda_r =  \frac{\sum_{i=1}^{N_b+N_r} \lambda_i(t)}{\sum_{i=1}^{N_r} \lambda_i^{rr}(t)} \approx 1.
\end{align}
We proceed by training the network via full-batch gradient descent with a learning rate of $10^{-5}$ to minimize the following loss function
\begin{align*}
      \mathcal{L}(\bm{\theta}) &= \mathcal{L}_b(\bm{\theta}) + \mathcal{L}_r(\bm{\theta}) \\
     &= \frac{\lambda_b}{2N_b} \sum_{i=1}^{N_b}  |u(\bm{x}_b^i, \bm{\theta}) - g(\bm{x}_b^i)   |^2
     + \frac{\lambda_r}{2N_r} \sum_{i=1}^{N_r}  |u_{xx}(\bm{x}_r^i, \bm{\theta}) - f(\bm{x}_r^i N_r)|^2,
\end{align*}
where the batch sizes are $N_b = N_r = 100$, $\lambda_r =1$ and the computed $\lambda_b \approx 100$.

A comparison of predicted solution $u(x)$ between the original PINNs ($\lambda_b =\lambda_r = 1$)  and PINNs with adaptive weights after $40,000$ iterations are shown in figure \ref{fig: comparsion_pred_u}. It can be observed that the proposed algorithm yields a much more accurate predicted solution and improves the relative $L^2$ error by about two orders of magnitude. Furthermore, we also investigate how the predicted performance of PINNs depends on the choice of different weights in the loss function. To this end, we fix $\lambda_r =1$ and train the same network, but now we manually tune $\lambda_b$. Figure \ref{fig: l2_error_diff_weights} presents a visual assessment of relative $L^2$ errors of predicted solutions for different $\lambda_b \in [1, 500]$ averaged over ten independent trials. One can see that the relative $L^2$ error decreases rapidly
to a local minimum as $\lambda_b$ increases from $1$ to about $100$ and then shows oscillations as $\lambda_b$ continues to increase. Moreover, a large magnitude of $\lambda_b$ seems to lead to a large standard deviation in the $L^2$ error, which may be due to the imaginary eigenvalues of the indefinite kernel $\widetilde{\bm{K}}$ resulting in an unstable training process.
This empirical simulation study confirms that the weights $\lambda_r =1$ and $\lambda_b$ suggested by our theoretical analysis based on analyzing the NTK spectrum are robust and closely agree with the optimal weights obtained via manual hyper-parameter tuning.

\begin{figure}
     \centering
     \begin{subfigure}[b]{0.8\textwidth}
         \centering
         \includegraphics[width=\textwidth]{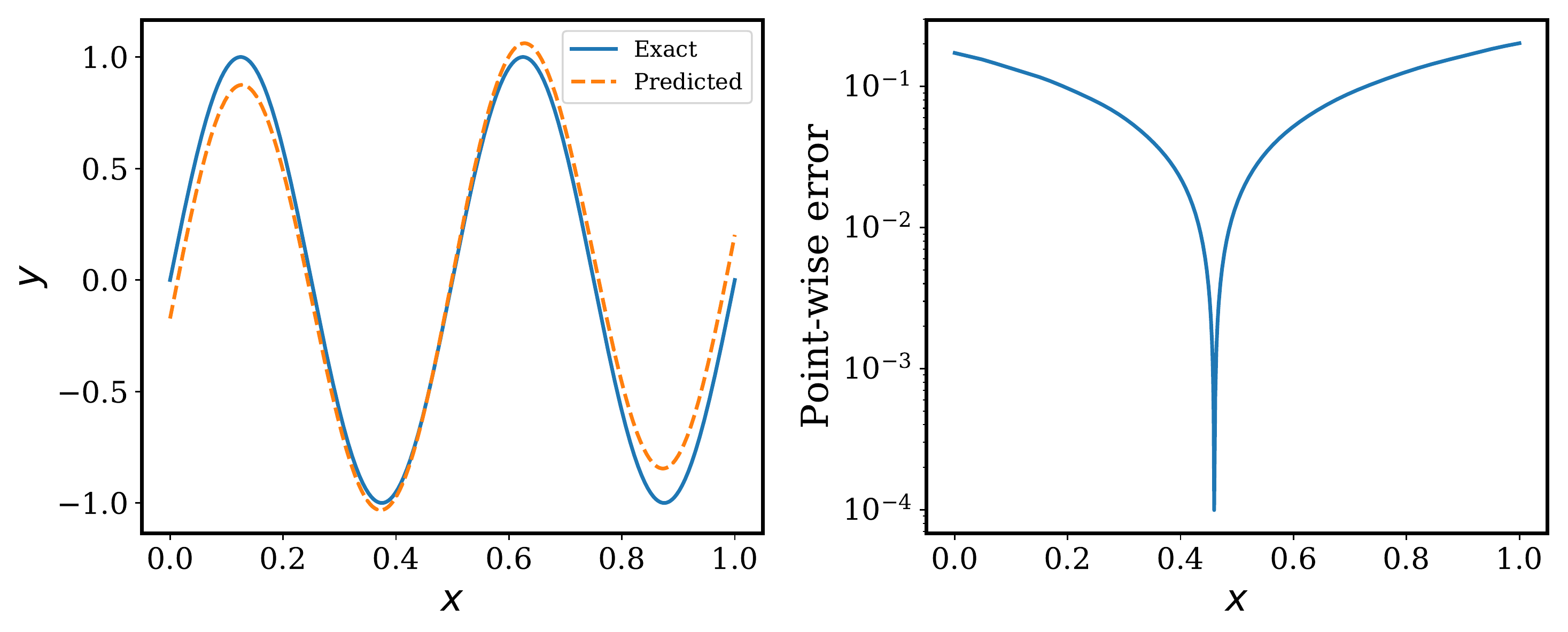}
         \caption{}
         \label{fig: pred_u_original}
     \end{subfigure}
     \begin{subfigure}[b]{0.8\textwidth}
         \centering
         \includegraphics[width=\textwidth]{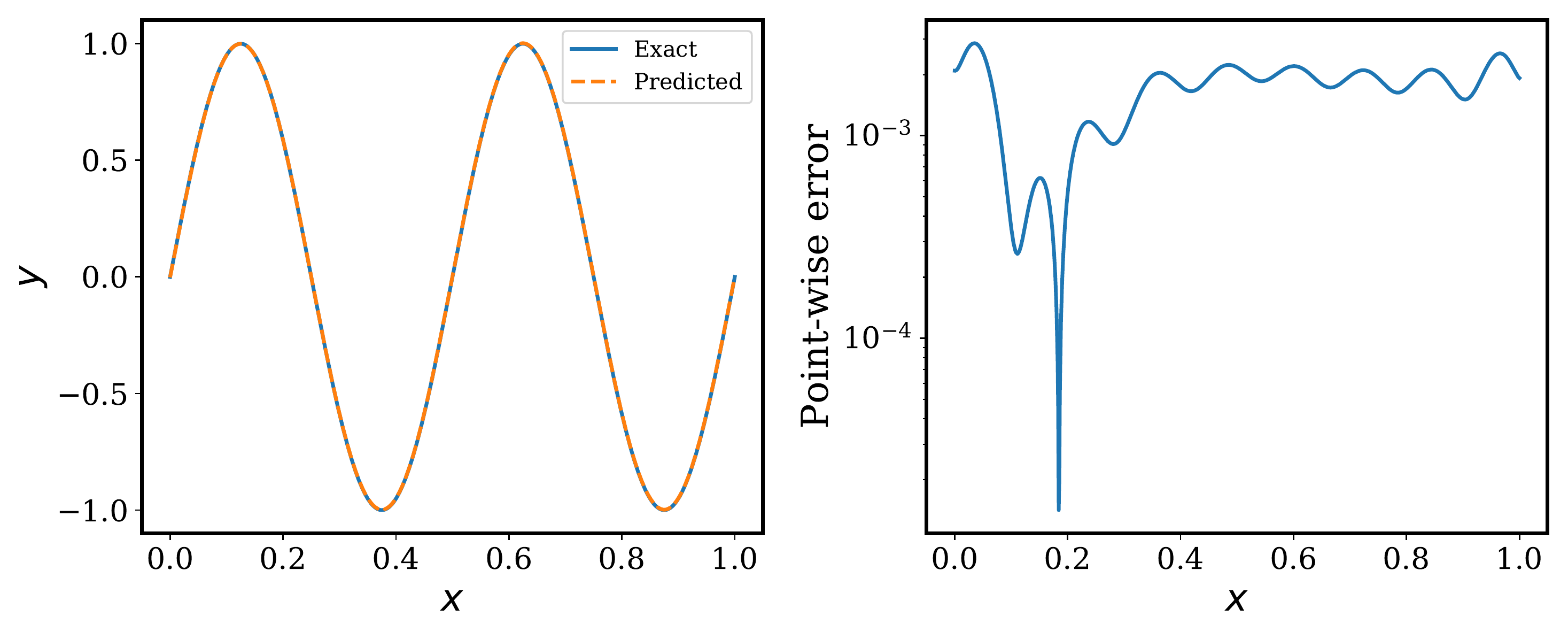}
         \caption{}
         \label{fig: pred_u_adaptive}
     \end{subfigure}
         \caption{{\em Model problem \ref{sec: alg_adaptive_weights} (1D Poisson equation):} (a) The predicted solution against the exact solution obtained by training a fully-connected neural network of one hidden layer with width $=100$ via $40,000$ iterations of full-batch gradient descent with a learning rate of $10^{-5}$ . The relative $L^2$ error is $2.40e-01$. (b) The predicted solution against the exact solution obtained by training the same neural network using fixed weights $\lambda_b= 100, \lambda_r =1$ via $40,000$ iterations of full-batch gradient descent with a learning rate of $10^{-5}$. The relative $L^2$ error is $1.63e-03$.}
        \label{fig: comparsion_pred_u}
\end{figure}

\begin{figure}
    \centering
    \includegraphics[width=0.4\textwidth]{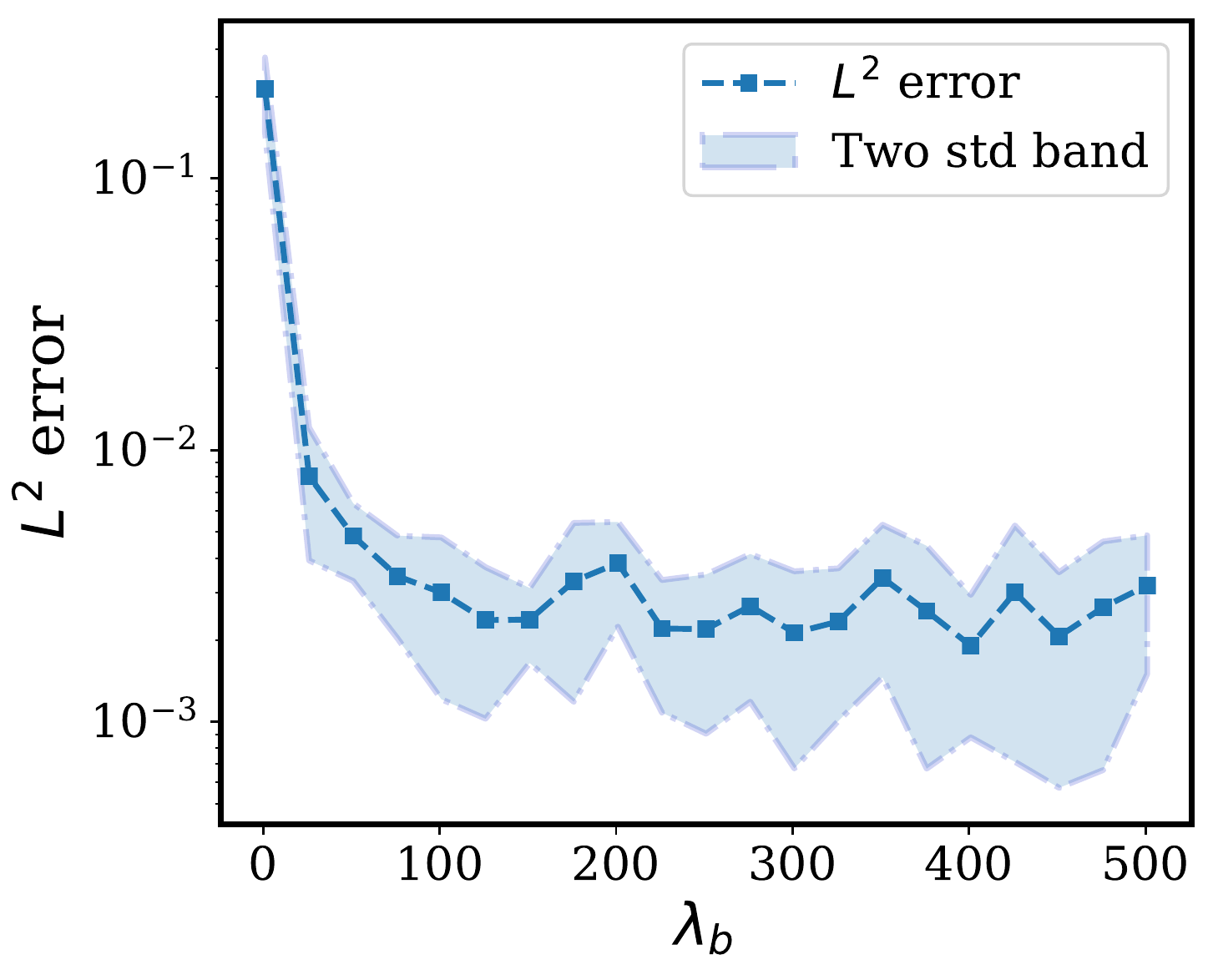}
    \caption{{\em Model problem of equation \ref{sec: alg_adaptive_weights} (1D Poisson equation):} The relative $L^2$ error of predicted solutions averaged over 10 independent trials by training a fully-connected neural network of one hidden layer with width $=100$ using different fixed weights   $\lambda_b \in [1, 500]$ for $40,000$ gradient descent iterations.}
    \label{fig: l2_error_diff_weights}
\end{figure}

\subsection{One-dimensional wave equation}
As our last example, we present a study that demonstrates the effectivenss of Algorithm  \ref{alg: adatpive_weights} in a practical problem for which conventional PINNs models face severe diffuculties.
To this end, we consider a one-dimensional wave equation in the domain $\Omega = [0,1 ] \times [0,1]$ taking the the form
\begin{align}
    \label{eq: wave_pde}
    &u_{tt}(x, t) - 4 u_{xx}(x, t) = 0, \quad (x, t) \in (0,1) \times (0,1) \\
      \label{eq: wave_bc}
    & u(0, t) = u(1, t) = 0, \quad t \in [0,1] \\
    \label{eq:   wave_u_ic}
    & u(x,0) = \sin( \pi x) + \frac{1}{2} \sin(4 \pi x), \quad x\in [0,1 ]\\
    \label{eq:   wave_ut_ic}
    & u_t(x,0) = 0, \quad x\in [0,1 ].
\end{align}
First, by d'Alembert's formula \cite{evans1998partial}, the solution $u(x,t)$ is given by
\begin{align}
    u(x, t) =  \sin( \pi x) \cos(2 \pi t) + \frac{1}{2} \sin(4 \pi x) \cos(8\pi t).
\end{align}
Here we treat the temporal coordinate $t$ as an additional spatial coordinate in $\bm{x}$ and then the initial condition \ref{eq:   wave_u_ic} can be included in the boundary condition \ref{eq: wave_bc}, namely
\begin{align*}
    u(\bm{x}) = g(\bm{x}), \quad x \in \partial \Omega
\end{align*}
Now we approximate the solution $u$ by a 5-layer deep fully-connected network $u(\bm{x}, \bm{\theta})$ with $500$ neurons per hidden layer, where $\bm{x} = (x,t)$. Then we can formulate a ``physics-informed" loss function by
\begin{align}
    \label{eq: loss_wave}
    \mathcal{L}(\bm{\theta}) &=  \mathcal{L}_u(\bm{\theta}) +  \mathcal{L}_{u_t}(\bm{\theta}) +  \mathcal{L}_r(\bm{\theta}) \\
                             &=  \frac{\lambda_u}{2 N_u} \sum_{i=1}^{N_u} |u(\bm{x}_u^i,\bm{\theta}) - g(\bm{x}_u^i)  |^2
                             +  \frac{\lambda_{u_t}}{2 N_{u_t}} \sum_{i=1}^{N_{u_t}} |u_t(\bm{x}_{u_t}^i,\bm{\theta})   |^2
                             +  \frac{\lambda_r}{2 N_r} \sum_{i=1}^{N_r} | \mathcal{L} u(\bm{x}_r^i,\bm{\theta}) |^2,
 \end{align}
where the hyper-parameters  $\lambda_u, \lambda_{u_t},\lambda_r$ are initialized to $1$, the batch sizes are set to $N_u = N_{u_t} = N_r = 300$, and $\mathcal{L} = \partial_{tt} - 4 \partial_{xx}$. Here
all training data are uniformely sampling inside the computational domain at each gradeint descent iteration. The network $u(\bm{x}, \bm{\theta})$ is initialized using the standard Glorot scheme \cite{glorot2010understanding} and then trained by minimizing the above loss function via stochastic gradient descent using the Adam optimizer with default settings \cite{kingma2014adam}. Figure \ref{fig: wave_pred_u_original} provides a comparison between the predicted solution against the ground truth obtained after $80,000$ training iterations. Clearly the original PINN model fails to approximate the ground truth solution and the relative $L^2$ error is above $40 \%$.

To explore the reason behind PINN's failure for this example, we compute its NTK and track it during training. Similar to the proof of Lemma \ref{lemma: PINN_ode}, the corresponding NTK can be derived from the loss function \ref{eq: loss_wave}
\begin{align}
    \label{eq: PINN_NTK_wave}
   \begin{bmatrix}
    \frac{d u(\bm{x}_u, {\bm \theta}(t))}{dt}\\
     \frac{d u_t(\bm{x}_{u_t}, {\bm \theta}(t))}{dt} \\
    \frac{d \mathcal{L}u(\bm{x}_r, {\bm \theta}(t))}{dt}
    \end{bmatrix}
    &: = \widetilde{\bm{K}}(t)  \cdot
       \begin{bmatrix}
    u(\bm{x}_b, {\bm \theta}(t)) - g(\bm{x}_b) \\
      u_t(\bm{x}_{u_t}, {\bm \theta}(t)) \\
    \mathcal{L}u(\bm{x}_r, {\bm \theta}(t))
    \end{bmatrix},
\end{align}
where
\begin{align*}
    &\widetilde{\bm{K}}(t) = \begin{bmatrix}
      \frac{\lambda_u}{N_u} \bm{J}_u(t) \\
       \frac{\lambda_{u_t}}{N_{u_t}} \bm{J}_{u_t}(t) \\
        \frac{\lambda_r}{N_r}  \bm{J}_r(t)
      \end{bmatrix}
        \cdot \begin{bmatrix}
          \bm{J}^T_u(t) , \bm{J}_{u_t}^T(t) , \bm{J}^T_r(t)
            \end{bmatrix},  \\
       & \left[\bm{K}_u(t)\right]_{ij} = \left[\bm{J}_u(t) \bm{J}^T_u(t)\right]_{ij} =
       \left\langle   \frac{d u({\bm x}_u^i, {\bm \theta}(t))}{d{\bm \theta}},  \frac{d u(\bm{x}_u^j, {\bm \theta}(t))}{d{\bm \theta}}            \right\rangle  \\
      & \left[\bm{K}_{u_t}(t)\right]_{ij} = \left[\bm{J}_{u_t}(t) \bm{J}^T_{u_t}(t)\right]_{ij} = 
       \left\langle   \frac{d u_t({\bm x}_{u_t}^i, {\bm \theta}(t))}{d{\bm \theta}},  \frac{d u(\bm{x}_{u_t}^j, {\bm \theta}(t))}{d{\bm \theta}}            \right\rangle  \\
      & \left[\bm{K}_r(t)\right]_{ij} = \left[\bm{J}_r(t) \bm{J}^T_r(t)\right]_{ij} = 
       \left\langle   \frac{d\mathcal{L} u({\bm x}_r^i, {\bm \theta}(t))}{d{\bm \theta}},  \frac{d\mathcal{L} u(\bm{x}_r^j, {\bm \theta}(t))}{d{\bm \theta}}            \right\rangle.
\end{align*}
A visual assessment of the eigenvalues of $\bm{K}_u, \bm{K}_{u_t}$ and $\bm{K}_r$ at initialization and the last step of gradient descent are presented in Figure \ref{fig: wave_NTK_eigval}. It can be observed that the NTK does not remain fixed and all eigenvalues move ``outward" in the beginning of the training, and then remain almost static such that $\bm{K}_r$ and $\bm{K}_{u_t}$ dominate $\bm{K}_u$ during training. Consequently, the components of $u_t(\bm{x}_{u_t},\bm{\theta})$ and $\mathcal{L}u(\bm{x}_r,\bm{\theta}))$ converge much faster than the  loss of boundary conditions, and, therefore, introduce a severe discrepancy in the convergence rate of each different term in the loss, causing this standard PINNs model to collapse. To verify our hypothesis, we also train the same network using Algorithm \ref{alg: adatpive_weights} with
the following generalized updating rule for hyper-parameters $\lambda_u, \lambda_{u_t}$ and $\lambda_r$
\begin{align}
    &\lambda_u = \frac{Tr(\bm{K}_u) + Tr(\bm{K}_{u_t}) + Tr(\bm{K}_r) }{Tr(\bm{K}_u)} \\
    &\lambda_{u_t} = \frac{Tr(\bm{K}_u) + Tr(\bm{K}_{u_t}) + Tr(\bm{K}_r) }{Tr(\bm{K}_{u_t})} \\
        &\lambda_r = \frac{Tr(\bm{K}_u) + Tr(\bm{K}_{u_t}) + Tr(\bm{K}_r) }{Tr(\bm{K}_r)}.
\end{align}
In particular, we update these weights every $1,000$ training iterations, hence the extra computational costs compared to a standard PINNs approach is negligible. The results of this experiment are shown in Figure \ref{fig: wave_pred_u_adaptive}, from which one can easily see that the predicted solution obtained using the proposed adaptive training scheme achieves excellent agreement with the ground truth and the relative $L^2$ error is $1.73e-3$. To quantify the effect of the hyper-parameters $\lambda_u,\lambda_{u_t}$ and $\lambda_r$ on the NTK, we also compare the eigenvalues of $\bm{K}_u, \bm{K}_{u_t}$ and $\bm{K}_r$ multiplied with or without the hyper-parameters at last step of gradient descent. As it can be seen in Figure \ref{fig: wave_lambda_K_change}, the discrepancy of the convergence rate of different components in total training errors is considerably resolved. Furthermore, Figure \ref{fig: wave_lambda_change} presents the change of weights during training  and we can see that $\lambda_u,\lambda_{u_t}$ increase rapidly and then remain almost fixed while $\lambda_r$ is near 1 for all time. So we may conclude that the overall training process using Algorithm \ref{alg: adatpive_weights} is stable.

\begin{figure}
     \centering
     \begin{subfigure}[b]{0.8\textwidth}
         \centering
         \includegraphics[width=\textwidth]{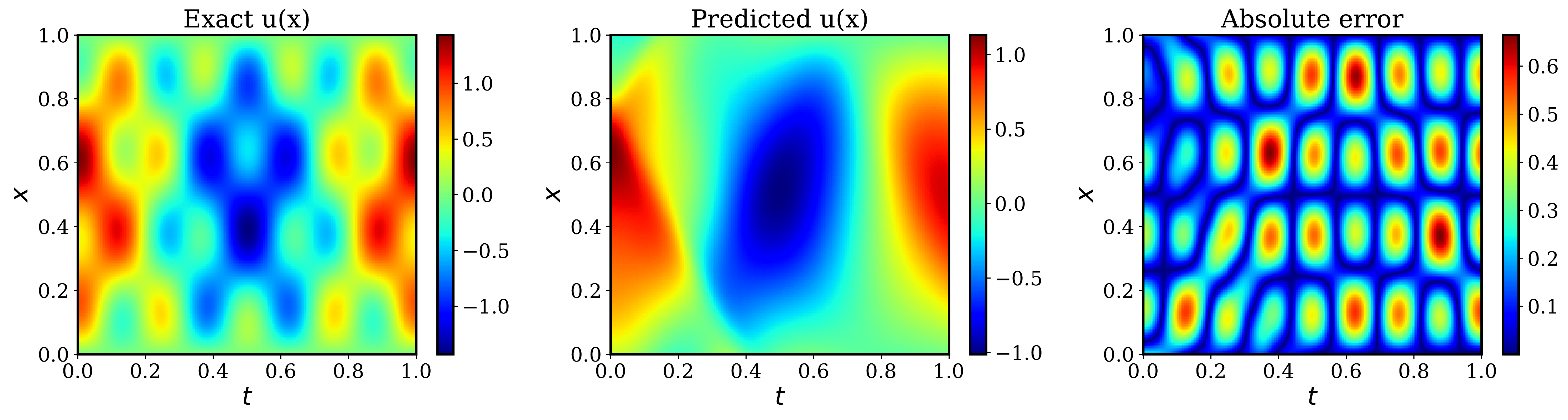}
         \caption{}
         \label{fig: wave_pred_u_original}
     \end{subfigure}
     \begin{subfigure}[b]{0.8\textwidth}
         \centering
         \includegraphics[width=\textwidth]{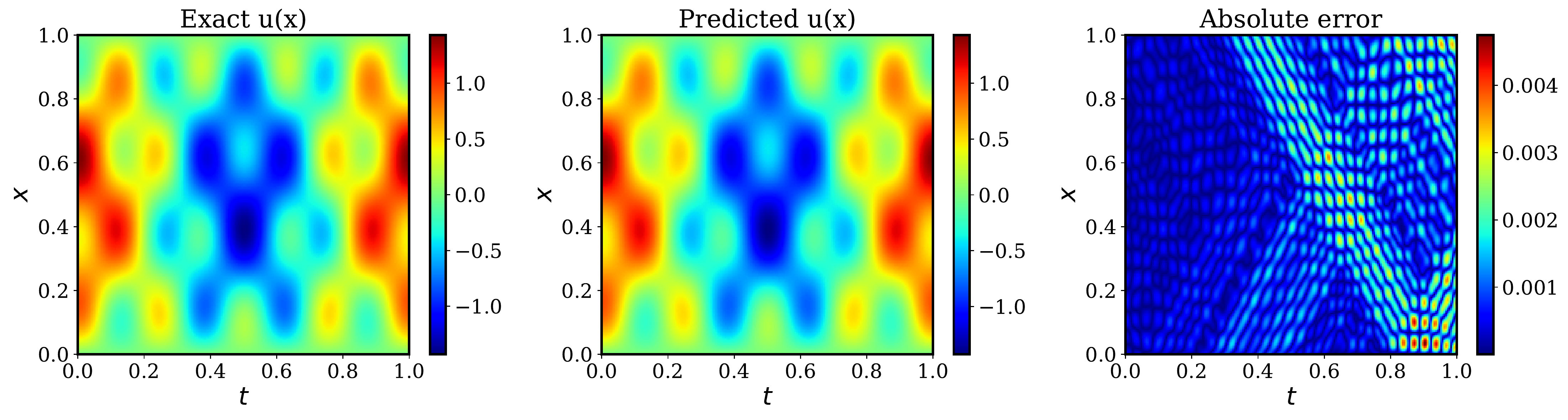}
         \caption{}
         \label{fig: wave_pred_u_adaptive}
     \end{subfigure}
        \caption{{\em One-dimensional wave equation:} (a) The predicted solution versus the exact solution by training a fully-connected neural network with five hidden layers and $500$ neurons per layer using the Adam optimizer with default settings \cite{kingma2014adam} after $80,000$ iterations. The relative $L^2$ error is $4.518e-01$. (b)  The predicted solution versus the exact solution by training the same network using Algorithm \ref{alg: adatpive_weights} after $80,000$ iterations. The relative $L^2$ error is $1.728e-03$.}
        \label{fig: wave_pred_u}
\end{figure}

\begin{figure}
    \centering
    \includegraphics[width=0.8\textwidth]{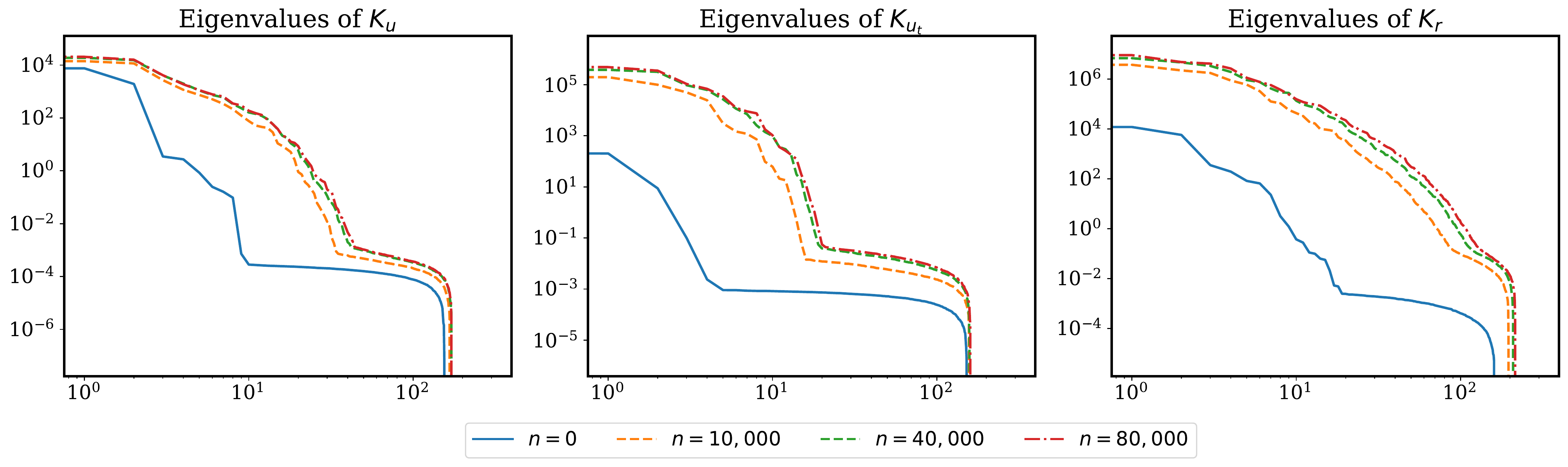}
    \caption{{\em One-dimensional wave equation:} Eigenvalues of $\bm{K}_{u}, \bm{K}_{u_t}$ and $\bm{K}_r$ at different snapshots during training, sorted in descending order.}
    \label{fig: wave_NTK_eigval}
\end{figure}

\begin{figure}
     \centering
     \begin{subfigure}[b]{0.3\textwidth}
         \centering
         \includegraphics[width=\textwidth]{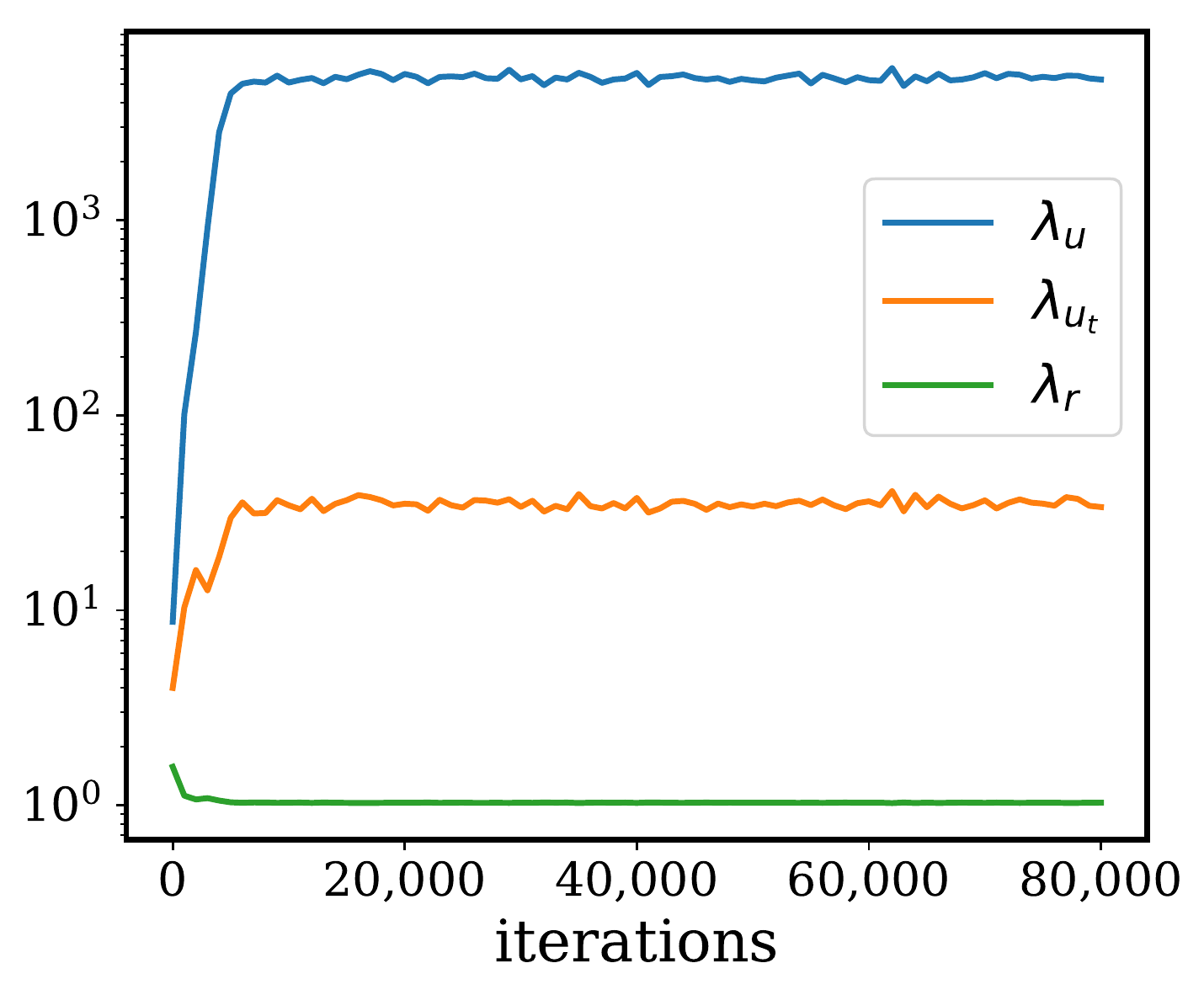}
         \caption{ }
         \label{fig: wave_lambda_change}
     \end{subfigure}
     \begin{subfigure}[b]{0.3\textwidth}
         \centering
         \includegraphics[width=\textwidth]{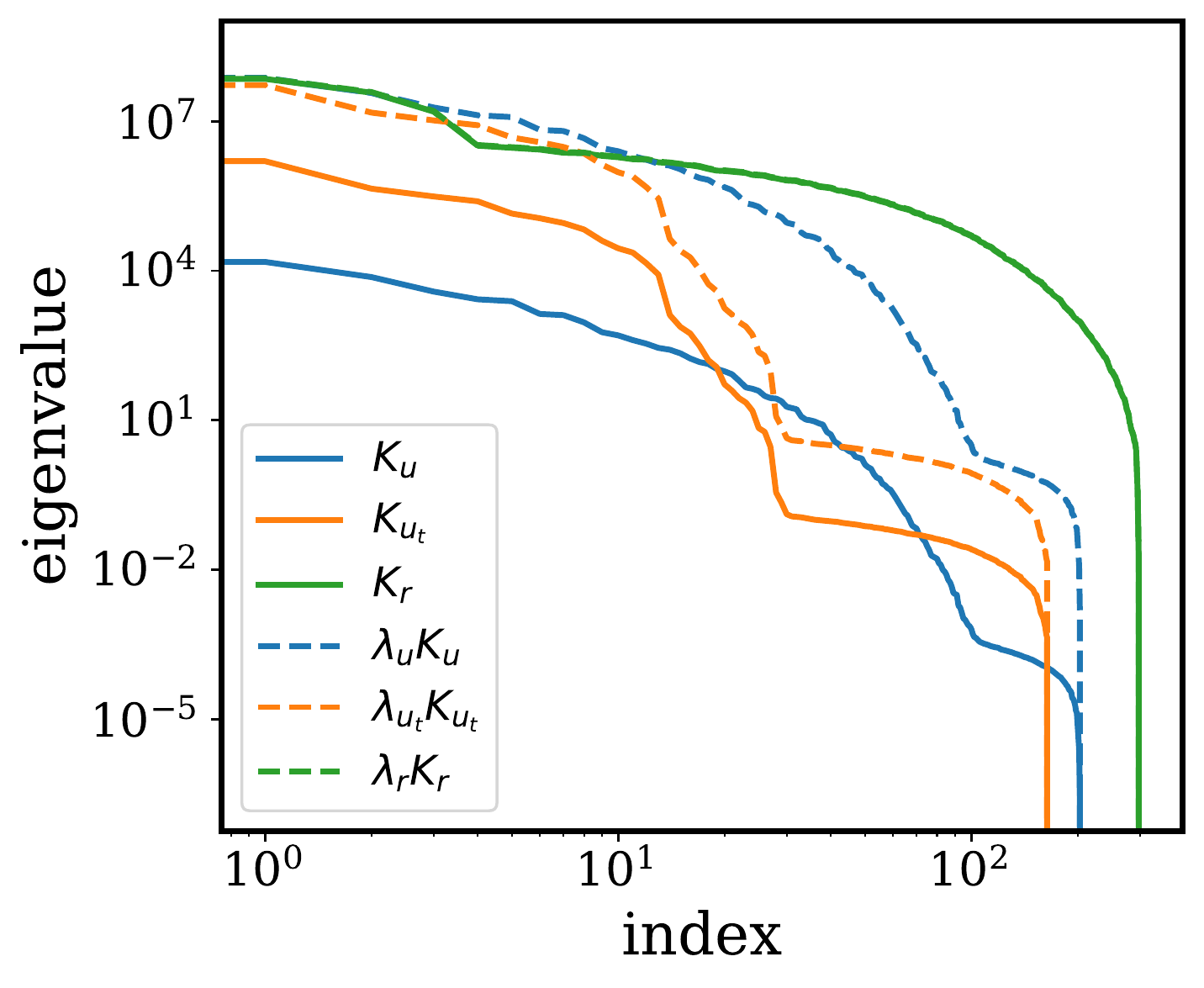}
         \caption{}
         \label{fig: wave_lambda_K_change}
     \end{subfigure}
        \caption{{\em One-dimensional wave equation:} (a) The evolution of hyper-parameters $\lambda_u, \lambda_{u_t}$ and $\lambda_r$ during training of a five-layer deep fully-connected neural network with $500$ neurons per layer using Algorithm \ref{alg: adatpive_weights}. (b) The eigenvalues of $\bm{K}_{u}, \bm{K}_{u_t}, \bm{K}_r$ and $\lambda_u \bm{K}_{u}, \lambda_{u_t}\bm{K}_{u_t}, \lambda_r\bm{K}_r$ at last step of training. }
\end{figure}

\section{Discussion}

This work has produced a novel theoretical understanding of physics-informed neural networks
by deriving and analyzing their limiting neural tangent kernel. Specifically,
we first show that infinitely wide physics-informed neural networks under the NTK parameterization converge to Gaussian processes. Furthermore, we derive NTK of PINNs and show that, under suitable assumptions, it converges to a deterministic kernel and barely changes during training as the width of the network grows to infinity. To provide further insight, we analyze the training dynamics of fully-connected PINNs through the lens of their NTK and show that not only they suffer from spectral bias, but they also exhibit a discrepancy in the convergence rate among the different loss components contributing to the total training error. To resolve this discrepancy, we propose a novel algorithm such that the coefficients of different terms in a PINNs' loss function can be dynamically updated according to balance the average convergence rate of different components in the total training error. Finally, we carry out a series of numerical experiments to verify our theory and validate the effectiveness of the proposed algorithms.

Although this work takes an essential step towards understanding PINNs and their training dynamics, there are many open questions worth exploring. Can the proposed NTK theory for PINNs be extended fully-connected networks with multiple hidden layers, nonlinear equations, as well as the neural network architectures such as convolutional neural networks, residual networks, etc.?
To which extend do these architecture suffer from spectral bias or exhibit similar discrepancies in their convergence rate? In a parallel thrust, it is well-known that PINNs perform much better for inverse problems than for forward problems, such as the ones considered in this work. Can we incorporate the current theory to analyze inverse problems and explain they are better suited to PINNs? Moreover, going beyond vanilla gradient descent dynamics, how do the training dynamics of PINNs and their corresponding NTK evolve via gradient descent with momentum (e.g. Adam \cite{kingma2014adam})? In practice, despite some improvements in the performance of PINNs brought by assigning appropriate weights to the loss function, we emphasize that such methods cannot change the distribution of eigenvalues of the NTK and, thus, cannot directly resolve spectral bias. Apart from this, assigning weights may result in indefinite kernels which can have imaginary eigenvalues and thus yield unstable training processes. Therefore, can we come up with better methodologies to resolve spectral bias using specialized network architectures, loss functions, etc.?
We believe that answering these questions not only paves a new way to better understand PINNs and its training dynamics, but also opens a new door for developing scientific machine learning algorithms with provable convergence guarantees, as needed for many critical applications in computational science and engineering.

\section*{Acknowledgements}
This work received support from the US Department of Energy under the Advanced Scientific Computing Research program (grant DE-SC0019116) and the Air Force Office of Scientific Research (grant FA9550-20-1-0060).







\bibliographystyle{unsrt}

\bibliography{references}

\appendix
\section{Proof of Lemma \ref{lemma: PINN_ode}}
\label{sec: proof_lemma_PINN_ode}

\begin{proof}
Recall that for given training data $\{\bm{x}_b^i, g(\bm{x}_b^i)\}_{i=1}^{N_b}, \{\bm{x}_r^i, f(\bm{x}_r^i)\}_{i=1}^{N_r}$, the loss function is given by
\begin{align*}
     \mathcal{L}(\bm{\theta}) &= \mathcal{L}_b(\bm{\theta}) + \mathcal{L}_r(\bm{\theta}) \\
     & = \frac{1}{2} \sum_{i=1}^{N_b}  |u(\bm{x}_b^i, \bm{\theta}) - g(\bm{x}_b^i)   |^2  +  \frac{1}{2} \sum_{i=1}^{N_r} |r(\bm{x}_r^i, \bm{\theta})|^2.
\end{align*}
Now let us consider the corresponding gradient flow
\begin{align*}
        \frac{d {\bm \theta}}{d t} = - \nabla_{\bm \theta} \mathcal{L}(\bm{\theta}) = - [\sum_{i = 1}^{N_b} (u(\bm{x}_b^i,{\bm \theta}(t)) -g(\bm{x}_b^i)) \frac{\partial u}{\partial \bm{\theta}}(\bm{x}_b^i,{\bm \theta}(t))  +\sum_{i = 1}^{N_r} (\mathcal{L}(\bm{x}_r^i,{\bm \theta}(t)) -f(\bm{x}_r^i))\frac{\partial \mathcal{L}u}{\partial \bm{\theta}}(\bm{x}_r^i,{\bm \theta}(t))].
\end{align*}
It follows that for $0 \leq j \leq N_b$,
\begin{align*}
    & \frac{d u(\bm{x}_b^j, {\bm \theta}(t))}{dt} =   \frac{d u(\bm{x}_b^j, {\bm \theta}(t))}{d{\bm \theta}}^T  \cdot \frac{d {\bm \theta}}{d t} \\
    & = - \frac{d u(\bm{x}_b^j, {\bm \theta}(t))}{d{\bm \theta}}^T \cdot \Big[\sum_{i = 1}^{N_b} (u(\bm{x}_b^i,{\bm \theta}(t)) -g(\bm{x}_b^i)) \frac{\partial u}{\partial \bm{\theta}}(\bm{x}_b^i,{\bm \theta}(t))  +\sum_{i = 1}^{N_r} (\mathcal{L}(\bm{x}_r^i,{\bm \theta}(t)) -f(\bm{x}_r^i))\frac{\partial \mathcal{L}u}{\partial \bm{\theta}}(\bm{x}_r^i,{\bm \theta}(t))\Big] \\
    &= - \sum_{i = 1}^{N_b} (u(\bm{x}_b^i,{\bm \theta}) -g(\bm{x}_b^i)) \Big\langle  \frac{d u(\bm{x}_b^i, {\bm \theta}(t))}{d{\bm \theta}},  \frac{d u(\bm{x}_b^j, {\bm \theta}(t))}{d{\bm \theta}}    \Big\rangle
    \\
    &-  \sum_{i = 1}^{N_r} (\mathcal{L}u(\bm{x}_r^i,{\bm \theta}) -f(\bm{x}_r^i)) \Big\langle  \frac{\mathcal{L}u(\bm{x}_r^i, {\bm \theta}(t))}{d{\bm \theta}},  \frac{d u(\bm{x}_b^j, {\bm \theta}(t))}{d{\bm \theta}}  \Big \rangle.
\end{align*}
Similarly,
\begin{align*}
   & \frac{d \mathcal{L}u(\bm{x}_r^j, {\bm \theta}(t))}{dt} = \frac{d \mathcal{L}u(\bm{x}_r^j, {\bm \theta}(t))}{d{\bm \theta}}^T  \cdot \frac{d {\bm \theta}}{d t} \\
   &= \frac{d \mathcal{L}u(\bm{x}_r^j, {\bm \theta}(t))}{d{\bm \theta}}^T  \cdot \Big[\sum_{i = 1}^{N_b} (u(\bm{x}_b^i,{\bm \theta}(t)) -g(\bm{x}_b^i)) \frac{\partial u}{\partial \bm{\theta}}(\bm{x}_b^i,{\bm \theta}(t))  +\sum_{i = 1}^{N_r} (\mathcal{L}(\bm{x}_r^i,{\bm \theta}(t)) -f(\bm{x}_r^i))\frac{\partial \mathcal{L}u}{\partial \bm{\theta}}(\bm{x}_r^i,{\bm \theta}(t))\Big] \\
   &=  - \sum_{i = 1}^{N_b} (u(\bm{x}_b^i,{\bm \theta}) -g(\bm{x}_b^i)) \Big\langle  \frac{d u(\bm{x}_b^i, {\bm \theta}(t))}{d{\bm \theta}},  \frac{d \mathcal{L}u(\bm{x}_b^j, {\bm \theta}(t))}{d{\bm \theta}}    \Big\rangle  \\
   &-  \sum_{i = 1}^{N_r} (\mathcal{L}u(\bm{x}_r^i,{\bm \theta}) -f(\bm{x}_r^i)) \Big\langle  \frac{\mathcal{L}u(\bm{x}_r^i, {\bm \theta}(t))}{d{\bm \theta}},  \frac{d \mathcal{L}u(\bm{x}_r^j, {\bm \theta}(t))}{d{\bm \theta}}   \Big\rangle.
\end{align*}
Then we can rewritte the above equations as
\begin{align}
    &\frac{d u(\bm{x}_b, \bm{\theta}(t))}{dt} = - \bm{K}_{uu}(t) \cdot (u(\bm{x}_b, \bm{\theta}(t)) - g(\bm{x}_b)) - \bm{K}_{ur}(t) \cdot (\mathcal{L}u(\bm{x}_r, \bm{\theta}(t)) - f(\bm{x}_r))\\
    & \frac{d \mathcal{L}u(\bm{x}_r, \bm{\theta}(t))}{dt} = - \bm{K}_{ru}(t) \cdot (u(\bm{x}_b, \bm{\theta}(t)) - g(\bm{x}_b)) - \bm{K}_{rr}(t)  \cdot (\mathcal{L}u(\bm{x}_r, \bm{\theta}(t)) - f(\bm{x}_r)),
\end{align}
where $\bm{K}_{ru}(t) = \bm{K}^T_{ur}(t)$ and the $(i,j)$-th entries are given by
\begin{align*}
      & (\bm{K}_{uu})_{ij}(t) =  \Big\langle  \frac{d u({\bm x}_b^i, {\bm \theta}(t))}{d{\bm \theta}},  \frac{d u(\bm{x}_b^j, {\bm \theta}(t))}{d{\bm \theta}}    \Big\rangle \\
   & (\bm{K}_{ur})_{ij}(t)  =  \Big\langle  \frac{d u(\bm{x}_b^i, {\bm \theta}(t))}{d{\bm \theta}} , \frac{d \mathcal{L}u(\bm{x}_r^j, {\bm \theta}(t))}{d{\bm \theta}}  \Big\rangle  \\
   & (\bm{K}_{rr})_{ij}(t)  =  \Big\langle \frac{d \mathcal{L}(\bm{x}_r^i, {\bm \theta}(t))}{d{\bm \theta}},  \frac{d \mathcal{L}(\bm{x}_r^j, {\bm \theta}(t))}{d{\bm \theta}}   \Big\rangle.
\end{align*}

\end{proof}

\section{Proof of Theorem  \ref{theorem: u_xx_gaussian}}
\label{sec: proof_u_xx_gaussian}
\begin{proof}
Recall equation \ref{eq: u_xx} and that all weights and biases are initialized by independent standard Gaussian distributions. Then by the central limit theorem, we have
\begin{align*}
        u_{xx}(x, \bm{\theta}) &= \frac{1}{\sqrt{N}} \bm{W}^{(1)} \cdot \Big[  \Ddot{\sigma}(\bm{W}^{(0)}x + \bm{b}^{(0)} ) \odot \bm{W}^{(0)}  \odot \bm{W}^{(0)} \Big]  \\
        &= \frac{1}{\sqrt{N}} \sum_{k=1}^N  \bm{W}_k^{(1)} \Ddot{\sigma} (\bm{W}_k^{(0)}x + \bm{b}^{(0)}_k) (\bm{W}_k^{(0)})^2  \\
        & \xrightarrow{\mathcal{D}} \mathcal{N}(0, \Sigma(x)) \triangleq Y(x),
\end{align*}
as $N \rightarrow \infty$, where $\mathcal{D}$ denotes convergence in distribution and $Y(x)$ is a centered Gaussian random variable with covariance
\begin{align*}
    \Sigma(x) = \text{Var} [   \bm{W}_k^{(1)} \Ddot{\sigma} (\bm{W}_k^{(0)}x + \bm{b}^{(0)}_k) (\bm{W}_k^{(0)})^2 ].
\end{align*}
Since $\Ddot{\sigma}$ is bounded, we may assume that $|\Ddot{\sigma}| \leq C$. Then we have
\begin{align*}
    \sup_{N}  \mathbb{E} \Big[ |u_{xx}(x,\theta)|^2 \Big]  &= \sup_{N}  \mathbb{E} \Big[ \mathbb{E} \big[|u_{xx}(x,\theta)|^2 | \bm{W}^{(0)}, \bm{b}^{(0)}    \big]  \Big] \\
    &=  \sup_{N} \mathbb{E}\Big[ \frac{1}{N} \sum_{k=1}^N \big(\bm{W}_k^{(0)} \big)^4 \big(\Ddot{\sigma}(\bm{W}_k^{(0)}x + \bm{b}^{(0)}_k) \big)^2   \Big] \\
    &\leq C \mathbb{E} \big[ \big(\bm{W}_k^{(0)} \big)^4  \big] < \infty.
\end{align*}
This implies that $u_{xx}(x, \bm{\theta})$ is uniformly integrable with respect to $N$. Now for any given point $x,x'$, we have
\begin{align*}
     \Sigma^{(1)}_{xx}(x, x') &\triangleq  \mathbb{E} \big[Y(x)Y(x') \big]  =   \lim_{N \rightarrow \infty} \mathbb{E} \big[  u_{xx}(x,\bm{\theta}) u_{xx}(x',\bm{\theta})) \big]
     \\
     &=  \lim_{N \rightarrow \infty} \mathbb{E}\Big[ \mathbb{E}[ u_{xx}(x,\bm{\theta}) u_{xx}(x',\bm{\theta})     | \bm{W^{(0)}}, \bm{b}^{(0)} ]  \Big] \\
     & = \lim_{N \rightarrow \infty} \mathbb{E}   \Bigg[\frac{1}{N} \Big( \Ddot{\sigma}(\bm{W}^{(0)}x + \bm{b}^{(0)} ) \odot \bm{W}^{(0)}  \odot \bm{W}^{(0)} \Big)^T \Big( \Ddot{\sigma}(\bm{W}^{(0)}x' + \bm{b}^{(0)} ) \odot \bm{W}^{(0)}  \odot \bm{W}^{(0)}  \Big)  \Bigg]\\
     & =  \lim_{N \rightarrow \infty}\mathbb{E}   \Big[\frac{1}{N} \sum_{k=1}^N (\bm{W}_k^{(0)})^4 \Ddot{\sigma}(\bm{W}_k^{(0)}x + \bm{b}^{(0)}_k) \Ddot{\sigma}(\bm{W}_k^{(0)}x' + \bm{b}^{(0)}_k)   \Big]  \\
     & = \mathop{\mathbb{E}}_{u,v \sim \mathcal{N}(0,1)} \Big[ u^4 \Ddot{\sigma}(ux + v) \Ddot{\sigma}(ux' + v)   \Big].
\end{align*}
This concludes the proof.

\end{proof}

\section{Proof of Theorem \ref{theorem: NTK_PINNs_init}}
\label{sec:proof_NTK_PINNs_init}

\begin{proof}
\label{sec: proof_NTK_PINN_init}
To warm up, we first compute $\bm{K}_{uu}(0)$ and its infinite width limit, which is already covered in \cite{jacot2018neural}. By the definition of $\bm{K}_{uu}(0)$, for any two given input $x, x'$ we have
\begin{align*}
      \bm{K}_{uu}(0) =  \Big\langle  \frac{d u(x, {\bm \theta}(0))}{d{\bm \theta}},  \frac{d u(x', {\bm \theta}(0))}{d{\bm \theta}}    \Big\rangle.
\end{align*}
Recall that
\begin{align*}
     u(x, \bm{\theta}) = \frac{1}{\sqrt{N}} \bm{W}^{(1)} \cdot \sigma(\bm{W}^{(0)}x + \bm{b}^{(0)}) + \bm{b}^{(1)} = \frac{1}{\sqrt{N}} \sum_{k=1}^N \bm{W}^{(1)}_k \sigma( \bm{W}^{(0)}_kx +  \bm{b}^{(0)}_k ) + \bm{b}^{(1)},
\end{align*}
and $\bm{\theta} = (\bm{W}^{(0)} , \bm{W}^{(1)}, \bm{b}^{(0)}, \bm{b}^{(1)})$. Then we have
\begin{align*}
    &\frac{\partial u(x, \bm{\theta})}{\partial \bm{W}^{(0)}_k}=  \frac{1}{\sqrt{N}} \bm{W}^{(1)}_k
    \Dot{\sigma}(\bm{W}_k^{(0)} x + \bm{b}^{(0)}_k)x
    \\
     &\frac{\partial u(x, \bm{\theta})}{\partial \bm{W}^{(1)}_k}=  \frac{1}{\sqrt{N}} \sigma(\bm{W}_k^{(0)} x + \bm{b}^{(0)}_k) \\
     & \frac{\partial u(x, \bm{\theta})}{\partial \bm{b}^{(0)}_k} =  \frac{1}{\sqrt{N}}   \bm{W}^{(1)}_k  \Dot{\sigma}(\bm{W}_k^{(0)} x + \bm{b}^{(0)}_k) \\
     &\frac{\partial u(x, \bm{\theta})}{\partial \bm{b}^{(1)}} = 1.
\end{align*}
Then by the law of large numbers we have
\begin{align*}
     \sum_{k=1}^{N}  \frac{\partial u(x, \bm{\theta})}{\partial \bm{W}^{(0)}_k}  \frac{\partial u(x', \bm{\theta})}{\partial \bm{W}^{(0)}_k} &=  \frac{1}{N}
    \sum_{k=1}^{N}  \Big[ \bm{W}_k^{(1)} \Dot{\sigma} (\bm{W}_k^{(0)}x +  \bm{b}_k^{(0)}) x  \Big]  \cdot  \Big[          \bm{W}_k^{(1)} \Dot{\sigma} (\bm{W}_k^{(0)}x' +  \bm{b}_K^{(0)}) x'   \Big]          \\
    &=  \frac{1}{N}
    \Big( \sum_{k=1}^{N}  (\bm{W}_k^{(1)})^2 \Dot{\sigma} (\bm{W}_k^{(0)}x +  \bm{b}_k^{(0)})  \Dot{\sigma} (\bm{W}_k^{(0)}x' +  \bm{b}_k^{(0)}) \Big)  (x x') \\
    & \overset{\mathcal{P}}{\longrightarrow} \mathbb{E}\bigg[ (\bm{W}_k^{(1)})^2  \Dot{\sigma} (\bm{W}_k^{(0)}x +  \bm{b}_k^{(0)})  \Dot{\sigma} (\bm{W}_k^{(0)}x' +  \bm{b}_k^{(0)}) \bigg]  (x x') \\
    &=  \mathbb{E} \Big[ (\bm{W}_k^{(1)})^2     \Big] \mathbb{E} \Big[  \Dot{\sigma} (\bm{W}_k^{(0)}x +  \bm{b}_k^{(0)})  \Dot{\sigma} (\bm{W}_k^{(0)}x' +  \bm{b}_k^{(0)})   \Big] (x x' )
    = \Dot{\Sigma}^{(1)}(x,x')(xx'),
\end{align*}
as ${N} \rightarrow \infty$,  where $ \Dot{\Sigma}^{(1)}(x,x')$ is defined in equation \ref{eq: dot_Sigma}.

Moreover,
\begin{align*}
      \sum_{k=1}^{N} \frac{\partial u(x, \bm{\theta})}{\partial \bm{W}^{(1)}_k}  \frac{\partial u(x', \bm{\theta})}{\partial \bm{W}^{(1)}_k}  &= \frac{1}{N}\sum_{k=1}^{N} \sigma(\bm{W}_k^{(0)}x +\bm{b}_k^{0})\sigma(\bm{W}_k^{(0)}x' + \bm{b}_k^{(0)}) \\
    & \overset{\mathcal{P}}{\longrightarrow} \mathbb{E} \Big[  \sigma(\bm{W}_k^{(0)}x +\bm{b}_k^{(0)})\sigma(\bm{W}_k^{(0)}x' +\bm{b}_k^{(0)}) \Big] = \Sigma^{(1)}(x, x'),
\end{align*}
as ${N} \rightarrow \infty$, where $\Sigma^{(1)}(x, x') $ is defined in equation \ref{eq: NN_Gaussian_cov}.

Also,
\begin{align*}
       \sum_{k=1}^{N} \frac{\partial u(x, \bm{\theta})}{\partial \bm{b}^{(0)}_k}  \frac{\partial u(x', \bm{\theta})}{\partial \bm{b}^{(0)}_k}  &= \frac{1}{N}\sum_{k=1}^{N}  \Big[ \big(  \bm{W}^{(1)}_k  \big)^2 \Dot{\sigma}(\bm{W}_k^{(0)} x + \bm{b}^{(0)}_k)  \Dot{\sigma}(\bm{W}_k^{(0)} x' + \bm{b}^{(0)}_k) \Big] \\
       &\overset{\mathcal{P}}{\longrightarrow} \mathbb{E} \Big[  \Dot{\sigma} (\bm{W}_k^{(0)}x +  \bm{b}_k^{(0)})  \Dot{\sigma} (\bm{W}_k^{(0)}x' +  \bm{b}_k^{(0)})   \Big]= \Dot{\Sigma}^{(1)}(x,x').
\end{align*}
Then plugging all these together we obtain
\begin{align*}
     \bm{K}_{uu}(0) &=  \Big\langle  \frac{d u(x, {\bm \theta}(0))}{d{\bm \theta}},  \frac{d u(x', {\bm \theta}(0))}{d{\bm \theta}}    \Big\rangle \\
     &= \sum_{l=0}^{1}  \sum_{k=1}^{N}  \frac{\partial u(x, \bm{\theta})}{\partial \bm{W}^{(l)}_k}  \frac{\partial u(x', \bm{\theta})}{\partial \bm{W}^{(l)}_k} + \sum_{k=1}^{N} \frac{\partial u(x, \bm{\theta})}{\partial \bm{b}^{(0)}_k}  \frac{\partial u(x', \bm{\theta})}{\partial \bm{b}^{(0)}_k} + \frac{\partial u(x, \bm{\theta})}{\partial \bm{b}^{(1)}}  \frac{\partial u(x', \bm{\theta})}{\partial \bm{b}^{(1)}} \\
     &  \overset{\mathcal{P}}{\longrightarrow} \Dot{\Sigma}^{(1)}(x,x')(xx') + \Sigma^{(1)}(x, x')  + \Dot{\Sigma}^{(1)}(x,x') + 1 \triangleq \Theta_{uu}^{(1)},
\end{align*}
as $N \rightarrow \infty$. This formula is also consistent with equation \ref{eq: NTK}.

Next, we compute $\bm{K}_{rr}(0)$.
To this end, recall that
\begin{align*}
         u_{xx}(x, \bm{\theta}) = \frac{1}{\sqrt{N}} \bm{W}^{(1)} \cdot \Big[  \Ddot{\sigma}(\bm{W}^{(0)}x + \bm{b}^{(0)} ) \odot \bm{W}^{(0)}  \odot \bm{W}^{(0)} \Big] =  \frac{1}{\sqrt{N}} \sum_{k=1}^N \bm{W}^{(1)}_k (\bm{W}^{(0)}_k)^2
    \Ddot{\sigma}( \bm{W}^{(0)}_kx +  \bm{b}^{(0)}_k ).
\end{align*}
It is then easy to compute that
\begin{align*}
     &\frac{\partial u_{xx}(x, \bm{\theta})}{\partial \bm{W}^{(0)}_k}= \frac{1}{\sqrt{N}} \bm{W}^{(1)}_k \bm{W}^{(0)}_k  \Big[\bm{W}_k^{(0)} \dddot{\sigma}(\bm{W}_k^{(0)} x + \bm{b}^{(0)}_k ) x + 2 \ddot{\sigma}(\bm{W}_k^{(0)} x + \bm{b}^{(0)}_k) \Big]
    \\
     &\frac{\partial u_{xx}(x, \bm{\theta})}{\partial \bm{W}^{(1)}_k}=  \frac{1}{\sqrt{N}}  (\bm{W}_k^{(0)})^2   \Ddot{\sigma}(\bm{W}_k^{(0)} x + \bm{b}^{(0)}_k)\\
     & \frac{\partial u_{xx}(x, \bm{\theta})}{\partial \bm{b}^{(0)}_k} =  \frac{1}{\sqrt{N}}  \bm{W}^{(1)}_k (\bm{W}^{(0)}_k)^2
    \dddot{\sigma}( \bm{W}^{(0)}_kx +  \bm{b}^{(0)}_k ),
\end{align*}
where $ \dddot{\sigma}$ denotes third order derivative of $\sigma$. Then we have
\begin{align*}
      \sum_{k=1}^{N}  \frac{\partial u_{xx}(x, \bm{\theta})}{\partial \bm{W}^{(0)}_k}  \frac{\partial u_{xx}(x', \bm{\theta})}{\partial \bm{W}^{(0)}_k}
        &= \frac{1}{N}  \sum_{k=1}^{N} \Big(\bm{W}^{(1)}_k \bm{W}^{(0)}_k \Big)^2  \bigg(  \Big[\bm{W}_k^{(0)} \dddot{\sigma}(\bm{W}_k^{(0)} x + \bm{b}^{(0)}_k ) x + 2 \ddot{\sigma}(\bm{W}_k^{(0)} x + \bm{b}^{(0)}_k) \Big]       \bigg)\\
        &\cdot
       \bigg(  \Big[\bm{W}_k^{(0)} \dddot{\sigma}(\bm{W}_k^{(0)} x' + \bm{b}^{(0)}_k ) x' + 2 \ddot{\sigma}(\bm{W}_k^{(0)} x' + \bm{b}^{(0)}_k) \Big]       \bigg) \\
       &= I_1 + I_2 + I_3 + I_4,
\end{align*}
where
\begin{align*}
    &I_1 =  \frac{1}{N} \sum_{k=1}^{N} \big(\bm{W}^{(1)}_k  \big)^2 \big( \bm{W}^{(0)}_k \big)^4 \dddot{\sigma}(\bm{W}_k^{(0)} x + \bm{b}^{(0)}_k ) x \cdot \dddot{\sigma}(\bm{W}_k^{(0)} x' + \bm{b}^{(0)}_k ) x' \\
    & I_2 =  \frac{2}{N} \sum_{k=1}^{N} \Big(\bm{W}^{(1)}_k  \Big)^2  \Big( \bm{W}^{(0)}_k \Big)^3 \dddot{\sigma}(\bm{W}_k^{(0)} x + \bm{b}^{(0)}_k )  \cdot \ddot{\sigma}(\bm{W}_k^{(0)} x' + \bm{b}^{(0)}_k) x \\
    & I_3 =  \frac{2}{N} \sum_{k=1}^{N} \Big(\bm{W}^{(1)}_k  \Big)^2  \Big( \bm{W}^{(0)}_k \Big)^3 \dddot{\sigma}(\bm{W}_k^{(0)} x' + \bm{b}^{(0)}_k )  \cdot \ddot{\sigma}(\bm{W}_k^{(0)} x + \bm{b}^{(0)}_k)x' \\
    & I_4  =  \frac{4}{N} \sum_{k=1}^{N}  \Big(\bm{W}^{(1)}_k \bm{W}^{(0)}_k \Big)^2  \ddot{\sigma}(\bm{W}_k^{(0)} x + \bm{b}^{(0)}_k) \cdot  \ddot{\sigma}(\bm{W}_k^{(0)} x' + \bm{b}^{(0)}_k).
\end{align*}
By the law of large numbers, letting $N \rightarrow \infty$ gives
\begin{align*}
    & I_1 \overset{\mathcal{P}}{\longrightarrow} \mathbb{E} \Big[ \big( \bm{W}^{(0)}_k \big)^4 \dddot{\sigma}(\bm{W}_k^{(0)} x + \bm{b}^{(0)}_k )  \cdot \dddot{\sigma}(\bm{W}_k^{(0)} x' + \bm{b}^{(0)}_k ) \Big] x x' : = J_1 \\
    & I_2  \overset{\mathcal{P}}{\longrightarrow}  \mathbb{E} \Big[ \big( \bm{W}^{(0)}_k \big)^3  \dddot{\sigma}(\bm{W}_k^{(0)} x + \bm{b}^{(0)}_k )  \cdot \ddot{\sigma}(\bm{W}_k^{(0)} x' + \bm{b}^{(0)}_k)    \Big] x   : = J_2  \\
    & I_3  \overset{\mathcal{P}}{\longrightarrow}  \mathbb{E} \Big[ \big( \bm{W}^{(0)}_k \big)^3  \dddot{\sigma}(\bm{W}_k^{(0)} x' + \bm{b}^{(0)}_k )  \cdot \ddot{\sigma}(\bm{W}_k^{(0)} x + \bm{b}^{(0)}_k)    \Big] x'  : = J_3  \\
    & I_4  \overset{\mathcal{P}}{\longrightarrow}  \mathbb{E} \Big[
    \big( \bm{W}^{(0)}_k \big)^2   \ddot{\sigma}(\bm{W}_k^{(0)} x + \bm{b}^{(0)}_k) \cdot  \ddot{\sigma}(\bm{W}_k^{(0)} x' + \bm{b}^{(0)}_k)  \Big] : = J_4.
\end{align*}
In conclusion we have
\begin{align*}
     \sum_{k=1}^{N}  \frac{\partial u_{xx}(x, \bm{\theta})}{\partial \bm{W}^{(0)}_k}  \frac{\partial u_{xx}(x', \bm{\theta})}{\partial \bm{W}^{(0)}_k}  \overset{\mathcal{P}}{\longrightarrow}  J_1 + J_2 + J_3 + J_4  : = A_{rr}.
\end{align*}
Moreover,
\begin{align*}
      \sum_{k=1}^{N}  \frac{\partial u_{xx}(x, \bm{\theta})}{\partial \bm{W}^{(1)}_k}  \frac{\partial u_{xx}(x', \bm{\theta})}{\partial \bm{W}^{(1)}_k} &=  \frac{1}{N} \sum_{k=1}^{N}
      \big(\bm{W}_k^{(0)} \big)^4   \Ddot{\sigma}(\bm{W}_k^{(0)} x + \bm{b}^{(0)}_k)  \cdot \Ddot{\sigma}(\bm{W}_k^{(0)} x + \bm{b}^{(0)}_k) \\
      &  \overset{\mathcal{P}}{\longrightarrow} \mathbb{E} \Big[      \big(\bm{W}_k^{(0)} \big)^4   \Ddot{\sigma}(\bm{W}_k^{(0)} x + \bm{b}^{(0)}_k)  \cdot \Ddot{\sigma}(\bm{W}_k^{(0)} x + \bm{b}^{(0)}_k)  \Big] : = B_{rr},
\end{align*}
and
\begin{align*}
     \sum_{k=1}^{N}  \frac{\partial u_{xx}(x, \bm{\theta})}{\partial \bm{b}^{(0)}_k}  \frac{\partial u_{xx}(x', \bm{\theta})}{\partial \bm{b}^{(0)}_k} &=  \frac{1}{N}  \sum_{k=1}^{N}
     (\bm{W}^{(1)}_k)^2 (\bm{W}^{(0)}_k)^4 \Big[
    \dddot{\sigma}( \bm{W}^{(0)}_k x +  \bm{b}^{(0)}_k ) \cdot
    \dddot{\sigma}( \bm{W}^{(0)}_k x' +  \bm{b}^{(0)}_k ) \Big] \\
    &  \overset{\mathcal{P}}{\longrightarrow} \mathbb{E} \Big[   (\bm{W}^{(0)}_k)^4 \big(
    \dddot{\sigma}( \bm{W}^{(0)}_k x +  \bm{b}^{(0)}_k ) \cdot
    \dddot{\sigma}( \bm{W}^{(0)}_k x' +  \bm{b}^{(0)}_k ) \big)   \Big] : = C_{rr}.
\end{align*}
Now, recall that
\begin{align*}
    \bm{K}_{rr}(0) = \Big \langle \frac{d u_{xx}(\bm{x}, {\bm \theta}(0))}{d{\bm \theta}},  \frac{d u_{xx}(\bm{x}', {\bm \theta}(0))}{d{\bm \theta}}   \Big\rangle.
\end{align*}
Thus we can conclude that as $N \rightarrow \infty$,
\begin{align*}
     \bm{K}_{rr}(0) \overset{\mathcal{P}}{\longrightarrow}  A_{rr} + B_{rr} + C_{rr} : = \Theta_{rr}(x,x').
\end{align*}
Finally, recall that $\bm{K}_{ur}(x, x') = \bm{K}_{ru}(x', x)$. So it suffices to compute $\bm{K}_{ur}(x, x')$ and its limit. To this end, recall that
\begin{align*}
     \bm{K}_{ur}(x, x') = \Big\langle  \frac{d u(\bm{x}, {\bm \theta}(t))}{d{\bm \theta}} , \frac{d u_{xx}(\bm{x}', {\bm \theta}(t))}{d{\bm \theta}}  \Big\rangle.
\end{align*}
Then letting $N \rightarrow \infty$ gives
\begin{align*}
    & \sum_{k=1}^{N}  \frac{\partial u(x, \bm{\theta})}{\partial \bm{W}^{(0)}_k}  \frac{\partial u_{xx}(x', \bm{\theta})}{\partial \bm{W}^{(0)}_k}
        \\
        &= \frac{1}{N}  \sum_{k=1}^{N} \big(  \bm{W}^{(1)}_k  \big)^2  \bm{W}^{(0)}_k   \Big[
    \Dot{\sigma}(\bm{W}_k^{(0)} x + \bm{b}^{(0)}_k)x  \Big] \cdot
    \Big[ \bm{W}^{(0)}_k   \dddot{\sigma}(\bm{W}_k^{(0)} x' + \bm{b}^{(0)}_k ) x' + 2 \ddot{\sigma}(\bm{W}_k^{(0)} x' + \bm{b}^{(0)}_k) \Big] \\
    &  \overset{\mathcal{P}}{\longrightarrow} \mathbb{E} \Big[ \big(   \bm{W}^{(0)}_k \big)^2      \Dot{\sigma}(\bm{W}_k^{(0)} x + \bm{b}^{(0)}_k)\cdot  \dddot{\sigma}(\bm{W}_k^{(0)} x' + \bm{b}^{(0)}_k )    \Big] (xx')
    + 2 \mathbb{E} \Big[  \bm{W}^{(0)}_k   \Dot{\sigma}(\bm{W}_k^{(0)} x + \bm{b}^{(0)}_k)\cdot \ddot{\sigma}(\bm{W}_k^{(0)} x' + \bm{b}^{(0)}_k)     \Big] x \\
    & : = A_{ur},
\end{align*}
and
\begin{align*}
   \sum_{k=1}^{N}  \frac{\partial u(x, \bm{\theta})}{\partial \bm{W}^{(1)}_k}  \frac{\partial u_{xx}(x', \bm{\theta})}{\partial \bm{W}^{(1)}_k} &= \frac{1}{N}  \sum_{k=1}^{N}  \Big[  \big(\bm{W}_k^{(0)} \big)^2  \sigma(\bm{W}_k^{(0)} x + \bm{b}^{(0)}_k) \cdot     \Ddot{\sigma}(\bm{W}_k^{(0)} x + \bm{b}^{(0)}_k)  \Big] \\
   & \overset{\mathcal{P}}{\longrightarrow} \mathbb{E} \Big[
   \big(\bm{W}_k^{(0)} \big)^2  \sigma(\bm{W}_k^{(0)} x + \bm{b}^{(0)}_k) \cdot     \Ddot{\sigma}(\bm{W}_k^{(0)} x + \bm{b}^{(0)}_k)
   \Big] : = B_{ur},
\end{align*}
and
\begin{align*}
     \sum_{k=1}^{N}  \frac{\partial u(x, \bm{\theta})}{\partial \bm{b}^{(0)}_k}  \frac{\partial u_{xx}(x', \bm{\theta})}{\partial \bm{b}^{(0)}_k}& =  \frac{1}{N}  \sum_{k=1}^{N} \Big[\bm{W}^{(1)}_k  \bm{W}^{(0)}_k \Big]^2 \cdot   \Big[
        \Dot{\sigma}(\bm{W}_k^{(0)} x + \bm{b}^{(0)}_k) \cdot
    \dddot{\sigma}( \bm{W}^{(0)}_kx' +  \bm{b}^{(0)}_k )
     \Big] \\
     & \overset{\mathcal{P}}{\longrightarrow} \mathbb{E} \Big[
     \big( \bm{W}^{(0)}_k \big)^2 \cdot \big(
     \Dot{\sigma}(\bm{W}_k^{(0)} x + \bm{b}^{(0)}_k) \cdot
    \dddot{\sigma}( \bm{W}^{(0)}_kx' +  \bm{b}^{(0)}_k )
     \big)^2
     \Big] : = C_{ur}.
\end{align*}
As a result, we obtain
\begin{align*}
     \bm{K}_{ur}(x, x') = \Big\langle  \frac{d u(\bm{x}, {\bm \theta}(t))}{d{\bm \theta}} , \frac{d u_{xx}(\bm{x}', {\bm \theta}(t))}{d{\bm \theta}}  \Big\rangle \overset{\mathcal{P}}{\longrightarrow} A_{ur} + B_{ur} + C_{ur} : \Theta_{ur}^{(1)},
\end{align*}
as $N \rightarrow \infty$. This concludes the proof.

\end{proof}

\section{Proof of Theorem \ref{theorem: kernel_constant}}
\label{sec: proof_kernel_constant}
Before we prove the main theorem, we need to prove a series of lemmas.

\begin{lemma}
\label{lemma: output_grad_bounded}
Under the setting of Theorem \ref{theorem: kernel_constant}, for $  l=0,1$, we have
\begin{align*}
    & \sup_{t \in [0,T]}  \left\| \frac{\partial u}{\partial \bm{W}^{(l)} } \right\|_\infty = \mathcal{O}(\frac{1}{\sqrt{N}}), \\
    & \sup_{t \in [0,T]}  \left\| \frac{\partial u_{xx}}{\partial \bm{W}^{(l)} } \right\|_\infty =  \mathcal{O}(\frac{1}{\sqrt{N}}),\\
    &  \sup_{t \in [0,T]}  \left\| \frac{\partial u}{\partial \bm{b}^{(0)} } \right\|_\infty = \mathcal{O}(\frac{1}{\sqrt{N}}),\\
    &\sup_{t \in [0,T]}  \left\| \frac{\partial u_{xx}}{\partial \bm{b}^{(0)} } \right\|_\infty = \mathcal{O}(\frac{1}{\sqrt{N}}).
\end{align*}

\end{lemma}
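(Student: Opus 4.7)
The plan is to establish each bound by directly computing the relevant partial derivative, exhibiting an explicit $1/\sqrt{N}$ prefactor, and then showing that the remaining factor is bounded uniformly in $k$ and in $t \in [0,T]$ using assumptions (i) and (iii).

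First I would record the closed-form partial derivatives. Differentiating $u(x,\bm{\theta}) = \frac{1}{\sqrt{N}}\sum_k \bm{W}^{(1)}_k \sigma(\bm{W}^{(0)}_k x + \bm{b}^{(0)}_k) + \bm{b}^{(1)}$ gives the four first-layer derivatives already used in the proof of Theorem \ref{theorem: NTK_PINNs_init}, namely
\begin{align*}
\frac{\partial u}{\partial \bm{W}^{(0)}_k} &= \tfrac{1}{\sqrt{N}}\bm{W}^{(1)}_k \dot{\sigma}(\bm{W}^{(0)}_k x + \bm{b}^{(0)}_k)\, x, \quad
\frac{\partial u}{\partial \bm{W}^{(1)}_k} = \tfrac{1}{\sqrt{N}}\sigma(\bm{W}^{(0)}_k x + \bm{b}^{(0)}_k), \\
\frac{\partial u}{\partial \bm{b}^{(0)}_k} &= \tfrac{1}{\sqrt{N}}\bm{W}^{(1)}_k \dot{\sigma}(\bm{W}^{(0)}_k x + \bm{b}^{(0)}_k).
\end{align*}
For $u_{xx}$ I would differentiate the formula \eqref{eq: u_xx} to obtain
\begin{align*}
\frac{\partial u_{xx}}{\partial \bm{W}^{(0)}_k} &= \tfrac{1}{\sqrt{N}}\bm{W}^{(1)}_k \bm{W}^{(0)}_k\bigl[\bm{W}^{(0)}_k \dddot{\sigma}(\bm{W}^{(0)}_k x + \bm{b}^{(0)}_k)\,x + 2\ddot{\sigma}(\bm{W}^{(0)}_k x + \bm{b}^{(0)}_k)\bigr], \\
\frac{\partial u_{xx}}{\partial \bm{W}^{(1)}_k} &= \tfrac{1}{\sqrt{N}}(\bm{W}^{(0)}_k)^2 \ddot{\sigma}(\bm{W}^{(0)}_k x + \bm{b}^{(0)}_k), \quad
\frac{\partial u_{xx}}{\partial \bm{b}^{(0)}_k} = \tfrac{1}{\sqrt{N}}\bm{W}^{(1)}_k (\bm{W}^{(0)}_k)^2 \dddot{\sigma}(\bm{W}^{(0)}_k x + \bm{b}^{(0)}_k).
\end{align*}

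Next, I would observe that in every expression above the factor multiplying $1/\sqrt{N}$ is a polynomial of bounded degree in the components of $\bm{\theta}(t)$, with coefficients that are either $1$, $x$, or a derivative $\sigma^{(k)}$ evaluated at $\bm{W}^{(0)}_k x + \bm{b}^{(0)}_k$. Assumption (iii) gives $|\sigma^{(k)}(\cdot)| \le C$ for $k=0,1,2,3,4$, so each derivative term is bounded by $C$ regardless of its argument. Assumption (i) gives $\sup_{t\in[0,T]} \|\bm{\theta}(t)\|_\infty \le C$, so each occurrence of $\bm{W}^{(0)}_k$, $\bm{W}^{(1)}_k$, $\bm{b}^{(0)}_k$ is bounded by $C$ uniformly in $k$ and $t$. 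Since $x$ lies in the bounded domain $\Omega$, $|x|$ is also bounded. Putting these together, the absolute value of each of the above partial derivatives is bounded by $C'/\sqrt{N}$ for a constant $C'$ independent of $k$, $t$, and $N$, which upon taking the supremum over $k$ yields the claimed $\mathcal{O}(1/\sqrt{N})$ bound on the $\ell^\infty$ norm.

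This lemma is almost entirely bookkeeping, so I do not anticipate a serious obstacle; the only subtlety is making sure that the $\dddot{\sigma}$ term in $\partial u_{xx}/\partial \bm{W}^{(0)}_k$ and $\partial u_{xx}/\partial \bm{b}^{(0)}_k$ is covered — which is precisely why assumption (iii) requires bounded derivatives of $\sigma$ up to order four (order three suffices here, but the fourth order will be needed for the companion lemmas controlling $\partial_t \bm{K}(t)$ in the main proof of Theorem \ref{theorem: kernel_constant}). It is worth noting that the dependence on $x$ is hidden in the implicit constant, so the $\mathcal{O}(1/\sqrt{N})$ is uniform in $x$ on any bounded set, which is exactly what subsequent lemmas will require when these gradients are evaluated at the training points $\bm{x}_b$ and $\bm{x}_r$.
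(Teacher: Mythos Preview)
Your proposal is correct and follows essentially the same approach as the paper's own proof: write out the explicit partial derivatives (which all carry a $1/\sqrt{N}$ prefactor), then bound the remaining factors uniformly using assumption (i) on $\|\bm{\theta}(t)\|_\infty$, assumption (iii) on $|\sigma^{(k)}|$, and the boundedness of $\Omega$. Your invocation of assumptions (i) and (iii) is in fact slightly more accurate than the paper's text, which cites ``(i), (ii)'' where (iii) is clearly meant.
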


\begin{proof}
For the given model problem, recall that
\begin{align*}
     u(x, \bm{\theta}) =   \frac{1}{\sqrt{N}}  \bm{W}^{(1)}\sigma(\bm{W}^{(0)}(t)x + \bm{b}^{(0)}) + \bm{b}^{(1)},
\end{align*}
and
\begin{align*}
     &\frac{\partial u(x, \bm{\theta})}{\partial \bm{W}^{(0)}_k}=  \frac{1}{\sqrt{N}} \bm{W}^{(1)}_k
    \Dot{\sigma}(\bm{W}_k^{(0)} x + \bm{b}^{(0)}_k)x
    \\
     &\frac{\partial u(x, \bm{\theta})}{\partial \bm{W}^{(1)}_k}=  \frac{1}{\sqrt{N}} \sigma(\bm{W}_k^{(0)} x + \bm{b}^{(0)}_k) \\
     & \frac{\partial u(x, \bm{\theta})}{\partial \bm{b}^{(0)}_k} =  \frac{1}{\sqrt{N}}   \bm{W}^{(1)}_k  \Dot{\sigma}(\bm{W}_k^{(0)} x + \bm{b}^{(0)}_k).
\end{align*}
Then by assumptions (i), (ii),
and given that $\Omega$ is bounded, we have
\begin{align*}
   &\sup_{t \in [0,T]}  \left\| \frac{\partial u}{\partial \bm{W}^{(l)} } \right\|_\infty \leq  \frac{C}{\sqrt{N}}, \quad  l=0,1. \\
   &\sup_{t \in [0,T]}  \left\| \frac{\partial u}{\partial \bm{b}^{(0)} } \right\|_\infty \leq  \frac{C}{\sqrt{N}}.
\end{align*}
Also,
\begin{align*}
         u_{xx}(x, \bm{\theta}) = \frac{1}{\sqrt{N}} \bm{W}^{(1)} \cdot \Big[  \Ddot{\sigma}(\bm{W}^{(0)}x + \bm{b}^{(0)} ) \odot \bm{W}^{(0)}  \odot \bm{W}^{(0)} \Big] =  \frac{1}{\sqrt{N}} \sum_{k=1}^N \bm{W}^{(1)}_k (\bm{W}^{(0)}_k)^2
    \Ddot{\sigma}( \bm{W}^{(0)}_kx +  \bm{b}^{(0)}_k ),
\end{align*}
and
\begin{align*}
     &\frac{\partial u_{xx}(x, \bm{\theta})}{\partial \bm{W}^{(0)}_k}= \frac{1}{\sqrt{N}} \bm{W}^{(1)}_k \bm{W}^{(0)}_k  \Big[\bm{W}_k^{(0)} \dddot{\sigma}(\bm{W}_k^{(0)} x + \bm{b}^{(0)}_k ) x + 2 \ddot{\sigma}(\bm{W}_k^{(0)} x + \bm{b}^{(0)}_k) \Big]
    \\
     &\frac{\partial u_{xx}(x, \bm{\theta})}{\partial \bm{W}^{(1)}_k}=  \frac{1}{\sqrt{N}}  (\bm{W}_k^{(0)})^2   \Ddot{\sigma}(\bm{W}_k^{(0)} x + \bm{b}^{(0)}_k)\\
     & \frac{\partial u_{xx}(x, \bm{\theta})}{\partial \bm{b}^{(0)}_k} =  \frac{1}{\sqrt{N}}  \bm{W}^{(1)}_k (\bm{W}^{(0)}_k)^2
    \dddot{\sigma}( \bm{W}^{(0)}_kx +  \bm{b}^{(0)}_k ).
\end{align*}
Again, using assumptions (i), (ii) gives
\begin{align*}
     &\sup_{t \in [0,T]}  \left\| \frac{\partial u_{xx}}{\partial \bm{W}^{(l)} } \right\|_\infty \leq  \frac{C^4}{\sqrt{N}}, \quad  l=0,1. \\
     &\sup_{t \in [0,T]}  \left\| \frac{\partial u_{xx}}{\partial \bm{b}^{(0)} } \right\|_\infty \leq  \frac{C^4}{\sqrt{N}}.
\end{align*}
This completes the proof.
\end{proof}

\begin{lemma}
\label{lemma: weight_change_little}
Under the setting of Theorem \ref{theorem: kernel_constant}, we have
\begin{align}
    &\lim_{N \rightarrow \infty} \sup_{t \in [0,T]} \Big\| \frac{1}{\sqrt{N}}  \Big(\bm{W}^{(l)}(t) - \bm{W}^{(l)}(0)   \Big) \Big\|_2 =0, \quad l = 0,1. \\
   &   \lim_{N \rightarrow \infty} \sup_{t \in [0,T]} \Big\| \frac{1}{\sqrt{N}}  \Big(\bm{b}^{(0)}(t) - \bm{b}^{(0)}(0)   \Big) \Big\|_2 =0.
\end{align}

\end{lemma}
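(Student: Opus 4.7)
The plan is to first establish, uniformly in $t\in[0,T]$, a per-coordinate displacement bound of size $\mathcal{O}(1/\sqrt{N})$ on every scalar parameter, and then aggregate the estimate over the $N$ entries of $\bm{W}^{(l)}$ and of $\bm{b}^{(0)}$. Starting from the gradient flow \ref{eq: gradient_flow}, every scalar parameter $\theta\in\bm{\theta}$ satisfies
\begin{equation*}
\theta(t)-\theta(0) \;=\; -\int_{0}^{t}\frac{\partial \mathcal{L}(\bm{\theta}(\tau))}{\partial \theta}\,d\tau,
\end{equation*}
and the chain rule gives
\begin{equation*}
\frac{\partial \mathcal{L}}{\partial \theta}(\bm{\theta}(\tau)) \;=\; \sum_{i=1}^{N_b}\bigl(u(x_b^i,\bm{\theta}(\tau))-g(x_b^i)\bigr)\frac{\partial u(x_b^i,\bm{\theta}(\tau))}{\partial \theta}\;+\;\sum_{i=1}^{N_r}\bigl(u_{xx}(x_r^i,\bm{\theta}(\tau))-f(x_r^i)\bigr)\frac{\partial u_{xx}(x_r^i,\bm{\theta}(\tau))}{\partial \theta}.
\end{equation*}

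Next, I would apply Lemma \ref{lemma: output_grad_bounded} to extract a uniform factor $C/\sqrt{N}$ from $|\partial u/\partial\theta|$ and $|\partial u_{xx}/\partial\theta|$ for every choice $\theta\in\{\bm{W}^{(0)}_k,\bm{W}^{(1)}_k,\bm{b}^{(0)}_k\}$, with $C$ depending only on the constants in hypotheses (i) and (iii). Pulling this factor outside the finite sums over data points and then invoking hypothesis (ii) (after commuting the absolute value with the data sum; see the obstacle below) yields, for every coordinate index $k$,
\begin{equation*}
\sup_{t\in[0,T]}|\theta(t)-\theta(0)| \;\leq\; \frac{C_1}{\sqrt{N}},
\end{equation*}
with $C_1$ independent of $N$ and of $k$. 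Aggregating this coordinate-wise bound over the $N$ entries gives
\begin{equation*}
\sup_{t\in[0,T]}\Bigl\|\tfrac{1}{\sqrt{N}}\bigl(\bm{W}^{(l)}(t)-\bm{W}^{(l)}(0)\bigr)\Bigr\|_{2}^{2} \;\leq\; \frac{1}{N}\sum_{k=1}^{N}\sup_{t}\bigl|\bm{W}^{(l)}_{k}(t)-\bm{W}^{(l)}_{k}(0)\bigr|^{2} \;\leq\; \frac{C_1^{2}}{N} \;\xrightarrow[N\to\infty]{}\; 0,
\end{equation*}
and the identical estimate delivers the claim for $\bm{b}^{(0)}$.

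The hard part is the step that replaces $|\sum_i(\cdot)|$ by $\sum_i|\cdot|$ in the pointwise bound on $|\partial_\theta \mathcal{L}|$. Hypothesis (ii) as stated controls $\int_0^T|\sum_i(u-g)|\,d\tau$, whereas the natural upper bound obtained from Lemma \ref{lemma: output_grad_bounded} is $(C/\sqrt{N})\,\sum_i|u-g|$, which features the sum of absolute values. The cleanest remedy is to read (ii) with the absolute value placed inside the data sum, or, equivalently, to exploit the monotonicity $\mathcal{L}(\bm{\theta}(t))\leq \mathcal{L}(\bm{\theta}(0))$ along gradient flow: since $\mathcal{L}(\bm{\theta}(0))=\mathcal{O}(1)$ at random initialization, Cauchy--Schwarz yields $\sum_i|u-g|\leq \sqrt{N_b}\sqrt{2\mathcal{L}(\bm{\theta}(0))}$ (and analogously for the residual sum), which integrates to an $\mathcal{O}(1)$ bound over $[0,T]$ and closes the argument.
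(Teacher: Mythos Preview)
Your plan is essentially the paper's proof: express the parameter displacement as a time integral of $\nabla_{\bm{\theta}}\mathcal{L}$, invoke Lemma \ref{lemma: output_grad_bounded} to extract a uniform $\mathcal{O}(1/\sqrt{N})$ factor from every $\partial u/\partial\theta$ and $\partial u_{xx}/\partial\theta$, and then use hypothesis (ii) to bound the remaining time integral. The only cosmetic difference is that you bound each scalar coordinate first and aggregate over $k$, while the paper carries the full $\ell_2$ norm and expands $\sqrt{\sum_k(\cdot)^2}$; the arithmetic is identical.

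The point you raise about $|\sum_i(\cdot)|$ versus $\sum_i|\cdot|$ is genuine, and the paper's own proof makes exactly the same move without comment: after pulling out $\|\partial u/\partial\bm{W}^{(l)}\|_\infty$, the paper writes $|\sum_i(u-g)|$ in place of $\sum_i|u-g|$ and invokes (ii) directly. So your proposal is not weaker than the paper on this score; you are simply more explicit about the gap. Your patch via monotonicity of $\mathcal{L}(\bm{\theta}(t))$ along gradient flow together with Cauchy--Schwarz (giving $\sum_i|u-g|\leq\sqrt{2N_b\,\mathcal{L}(\bm{\theta}(0))}=\mathcal{O}(1)$) is a clean and valid way to close it, and is arguably a tighter argument than reinterpreting hypothesis (ii).
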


\begin{proof}
Recall that the loss function for the model problem \ref{eq: model problem} is given by
\begin{align*}
      \mathcal{L}(\bm{\theta}) = \mathcal{L}_b(\bm{\theta}) + \mathcal{L}_r(\bm{\theta}) = \frac{1}{2} \sum_{i=1}^{N_b}  |u(x_b^i, \bm{\theta}) - g(x_b^i)   |^2  +  \frac{1}{2} \sum_{i=1}^{N_r} |u_{xx}(x_r^i, \bm{\theta}) - f(x_r^i)|^2.
\end{align*}
Consider minimizing the loss function $\mathcal{L}(\bm{\theta})$
by gradient descent with an infinitesimally
small learning rate:
\begin{align*}
     \frac{d \bm{\theta}}{dt} = - \nabla \mathcal{L}(\bm{\theta}).
\end{align*}
This implies that
\begin{align*}
    &\frac{d \bm{W}^{(l)}}{d t} = - \frac{\partial \mathcal{L}(\bm{\theta})}{\partial  \bm{W}^{(l)}},  \quad l=0, 1,\\
  & \frac{d \bm{b}^{(0)}}{d t} = - \frac{\partial \mathcal{L}(\bm{\theta})}{\partial  \bm{b}^{(0)}}.
\end{align*}
 Then  we have
\begin{align*}
   & \left\|\frac{1}{\sqrt{N}}\left(\bm{W}^{(l)}(t)-\bm{W}^{(l)}(0)\right)\right\|_{2}  = \left\|\frac{1}{\sqrt{N}} \int_0^t \frac{d \bm{W}^{(l)}(\tau)}{d \tau} d\tau    \right\|_{2} = \left\|\frac{1}{\sqrt{N}} \int_0^t \frac{\partial \mathcal{L}(\bm{\theta}(\tau))}{\partial \bm{W}^{(l)}}    d\tau    \right\|_{2} \\
    &=  \left\|\frac{1}{\sqrt{N}} \int_0^t \Big[\sum_{i = 1}^{N_b} (u(x_b^i,\bm{\theta}(\tau)) -g(x_b^i)) \frac{\partial u}{\partial \bm{W}^{(l)} }(x_b^i,{\bm \theta}(\tau))  +\sum_{i = 1}^{N_r} (u_{xx}(x_r^i,{\bm \theta}(\tau)) -f(x_r^i))\frac{\partial u_{xx}}{\partial {\bm W}^{(l)}}(x_r^i,{\bm \theta}(\tau))    \Big]            d\tau    \right\|_{2} \\
    & \leq I_1^{(l)} + I_2^{(l)},
\end{align*}
where
\begin{align*}
     &I_1^{(l)} = \left\|\frac{1}{\sqrt{N}} \int_0^t \Big[\sum_{i = 1}^{N_b} (u(x_b^i,\bm{\theta}(\tau)) -g(x_b^i)) \frac{\partial u}{\partial \bm{W}^{(l)} }(x_b^i,{\bm \theta}(\tau)) \Big]            d\tau    \right\|_{2}     \\
    & I_2^{(l)} = \left\|\frac{1}{\sqrt{N}} \int_0^t \Big[ \sum_{i = 1}^{N_r} (u_{xx}(x_r^i,{\bm \theta}(\tau)) -f(x_r^i))\frac{\partial u_{xx}}{\partial {\bm W^{(l)}}}(x_r^i,{\bm \theta}(\tau))    \Big]            d\tau    \right\|_{2}.
\end{align*}

We first process to estimate $I_1^{(l)}$ as
\begin{align*}
    I_1^{(l)} &\leq  \frac{1}{\sqrt{N}} \bigintsss_0^t  \left\| \Big[\sum_{i = 1}^{N_b} (u(x_b^i,\bm{\theta}(\tau)) -g(x_b^i)) \frac{\partial u}{\partial \bm{W}^{(l)} }(x_b^i,{\bm \theta}(\tau)) \Big]   \right\|_{2}             d\tau   \\
    & =  \frac{1}{\sqrt{N}} \bigintsss_0^t  \sqrt{ \sum_{k=1}^N  \Big(\sum_{i = 1}^{N_b} \big(u(x_b^i,\bm{\theta}(\tau)) -g(x_b^i) \big) \frac{\partial u}{\partial \bm{W}_k^{(l)} }(x_b^i,{\bm \theta}(\tau))       \Big)^2  }   d\tau \\
    &\leq  \frac{1}{\sqrt{N}} \bigintsss_0^T \left\| \frac{\partial u}{\partial \bm{W}^{(l)} }(x_b^i,{\bm \theta}(\tau)) \right\|_\infty  \sqrt{ \sum_{k=1}^N  \Big(\sum_{i = 1}^{N_b} \big(u(x_b^i,\bm{\theta}(\tau)) -g(x_b^i) \big)       \Big)^2  } d \tau \\
    &=  \frac{1}{\sqrt{N}} \bigintsss_0^T \sqrt{N}   \left\| \frac{\partial u}{\partial \bm{W}^{(l)} }(x_b^i,{\bm \theta}(\tau)) \right\|_\infty \cdot  \Big| \sum_{i = 1}^{N_b} \big(u(x_b^i,\bm{\theta}(\tau)) -g(x_b^i) \big)  \Big| d \tau.
\end{align*}
Thus, by assumptions and Lemma \ref{lemma: output_grad_bounded}, for $l=0,1$  we have
\begin{align*}
    \sup_{t \in [0,T]} I_1^{(l)} & = \sup_{t \in [0,T]}  \bigintsss_0^T  \left\| \frac{\partial u}{\partial \bm{W}^{(l)} }(x_b^i,{\bm \theta}(\tau)) \right\|_\infty \cdot  \Big| \sum_{i = 1}^{N_b} \big(u(x_b^i,\bm{\theta}(\tau)) -g(x_b^i) \big)  \Big| d \tau \\
    &\leq \frac{C}{\sqrt{N}}  \longrightarrow 0, \quad \text{ as } N\longrightarrow \infty.
\end{align*}
Similarly,
\begin{align*}
     \sup_{t \in [0,T]} I_2^{(l)} \leq  \sup_{t \in [0,T]}  &\leq  \frac{1}{\sqrt{N}} \bigintsss_0^T \left\| \frac{\partial u}{\partial \bm{W}^{(l)} }(x_b^i,{\bm \theta}(\tau)) \right\|_\infty  \sqrt{ \sum_{k=1}^N  \Big(\sum_{i = 1}^{N_b} \big(u(x_b^i,\bm{\theta}(\tau)) -g(x_b^i) \big)       \Big)^2  } d \tau \\
    &=  \frac{1}{\sqrt{N}} \bigintsss_0^T \sqrt{N}   \left\| \frac{\partial u_{xx}}{\partial \bm{W}^{(l)} }(x_r^i,{\bm \theta}(\tau)) \right\|_\infty \cdot  \Big| \sum_{i = 1}^{N_r} \big(u_{xx}(x_r^i,\bm{\theta}(\tau)) -f(x_r^i) \big)  \Big| d \tau  \\
    &\leq \frac{C^4}{\sqrt{N}} \longrightarrow 0, \quad \text{ as } N\longrightarrow \infty.
\end{align*}
Plugging these together, we obtain
\begin{align*}
    \lim_{N \rightarrow \infty} \sup_{t \in  [0,T]} \left\|\frac{1}{\sqrt{N}}\left(\bm{W}^{(l)}(t)-\bm{W}^{(l)}(0)\right)\right\|_{2} \leq \lim_{N \rightarrow \infty} \sup_{t \in  [0,T]}  I_1^{(l)} +  I_2^{(l)} = 0,
\end{align*}
for $l=1,2$. Similarly, applying the same strategy to $\bm{b}^{(0)}$ we can show
\begin{align*}
     \lim_{N \rightarrow \infty} \sup_{t \in [0,T]} \Big\| \frac{1}{\sqrt{N}}  \Big(\bm{b}^{(0)}(t) - \bm{b}^{(0)}(0)   \Big) \Big\|_2 =0.
\end{align*}
This concludes the proof.
\end{proof}

\begin{lemma}
\label{lemma: sigma_change_little}
Under the setting of Theorem \ref{theorem: kernel_constant}, we have
\begin{align}
    \label{eq: sigma_change_little}
     \lim_{N \rightarrow \infty} \sup_{t \in [0,T]} \Big\| \frac{1}{\sqrt{N}} \Big( {\sigma}^{(k)}(\bm{W}^{(0)}(t)x + \bm{b}^{(0)}(t)) - {\sigma}^{(k)}(\bm{W}^{(0)}(t)x + \bm{b}^{(0)}(0)) \Big) \Big\|_2 = 0,
\end{align}
for $k=0,1,2,3$, where $\sigma^{(k)}$ denotes the $k$-th order derivative of $\sigma$.
\end{lemma}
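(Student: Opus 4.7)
The key observation is that the argument $\bm{W}^{(0)}(t)x$ is identical in both terms, so the difference inside $\sigma^{(k)}$ depends only on the bias change $\bm{b}^{(0)}(t)-\bm{b}^{(0)}(0)$. The plan is therefore to reduce the statement to a Lipschitz estimate in the bias coordinate, and then invoke Lemma \ref{lemma: weight_change_little}, which already gives the desired control on $\tfrac{1}{\sqrt{N}}\|\bm{b}^{(0)}(t)-\bm{b}^{(0)}(0)\|_2$.

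Concretely, I would first apply the one-dimensional mean value theorem to $\sigma^{(k)}$ componentwise: for each index $i$ there exists $\xi_i$ lying between $\bm{W}^{(0)}_i(t)x + \bm{b}^{(0)}_i(t)$ and $\bm{W}^{(0)}_i(t)x + \bm{b}^{(0)}_i(0)$ with
\begin{align*}
\sigma^{(k)}\!\bigl(\bm{W}^{(0)}_i(t)x+\bm{b}^{(0)}_i(t)\bigr)-\sigma^{(k)}\!\bigl(\bm{W}^{(0)}_i(t)x+\bm{b}^{(0)}_i(0)\bigr) = \sigma^{(k+1)}(\xi_i)\,\bigl(\bm{b}^{(0)}_i(t)-\bm{b}^{(0)}_i(0)\bigr).
\end{align*}
Assumption (iii) of Theorem \ref{theorem: kernel_constant} guarantees $|\sigma^{(k+1)}|\le C$ for $k=0,1,2,3$, so each component of the difference is bounded in absolute value by $C\,|\bm{b}^{(0)}_i(t)-\bm{b}^{(0)}_i(0)|$. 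Squaring and summing over $i$ yields
\begin{align*}
\bigl\|\sigma^{(k)}(\bm{W}^{(0)}(t)x+\bm{b}^{(0)}(t)) - \sigma^{(k)}(\bm{W}^{(0)}(t)x+\bm{b}^{(0)}(0))\bigr\|_2 \le C\,\bigl\|\bm{b}^{(0)}(t)-\bm{b}^{(0)}(0)\bigr\|_2.
\end{align*}

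Dividing by $\sqrt{N}$ and taking $\sup_{t\in[0,T]}$ reduces the problem to
\begin{align*}
\lim_{N\to\infty}\sup_{t\in[0,T]}\frac{C}{\sqrt{N}}\bigl\|\bm{b}^{(0)}(t)-\bm{b}^{(0)}(0)\bigr\|_2 = 0,
\end{align*}
which is exactly the bias statement in Lemma \ref{lemma: weight_change_little}. Since this reduction is immediate and the bound is uniform in $t\in[0,T]$, there is no real obstacle; the only subtlety is noticing that because $\bm{W}^{(0)}(t)$ appears identically in both arguments, we do not need to control the weight displacement here at all—only the bias displacement, which is handled by the already-proved lemma.
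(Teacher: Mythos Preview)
Your argument is correct for the statement exactly as printed, and it uses the same mean-value-theorem reduction to Lemma~\ref{lemma: weight_change_little} that the paper uses. However, the printed statement contains a typo: the second argument should be $\bm{W}^{(0)}(0)x+\bm{b}^{(0)}(0)$, not $\bm{W}^{(0)}(t)x+\bm{b}^{(0)}(0)$. This is clear both from the paper's own proof (which bounds the difference by $C\|\tfrac{1}{\sqrt{N}}(\bm{W}^{(0)}(t)-\bm{W}^{(0)}(0))\|_2+C\|\tfrac{1}{\sqrt{N}}(\bm{b}^{(0)}(t)-\bm{b}^{(0)}(0))\|_2$) and from how the lemma is applied in Lemma~\ref{lemma: output_grad_constant}, where one needs $\sigma^{(k)}(\bm{W}^{(0)}(t)x+\bm{b}^{(0)}(t))-\sigma^{(k)}(\bm{W}^{(0)}(0)x+\bm{b}^{(0)}(0))$ to be small.

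Your ``only the bias matters'' shortcut therefore rests on the typo. The fix is immediate and keeps your proof essentially identical to the paper's: apply the componentwise MVT to the full argument displacement $(\bm{W}^{(0)}_i(t)-\bm{W}^{(0)}_i(0))x+(\bm{b}^{(0)}_i(t)-\bm{b}^{(0)}_i(0))$, use $|\sigma^{(k+1)}|\le C$ and boundedness of $x\in\Omega$, and then invoke both conclusions of Lemma~\ref{lemma: weight_change_little} rather than just the bias part.
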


\begin{proof}
By the mean-value theorem for vector-valued function and Lemma \ref{lemma: weight_change_little}, there exists $\xi$
\begin{align*}
     &\left\|  \frac{1}{\sqrt{N}} \Big( {\sigma}^{(k)}(\bm{W}^{(0)}(t)x + \bm{b}^{(0)}(t)) - {\sigma}^{(k)}(\bm{W}^{(0)}(0)x + \bm{b}^{(0)}(0)) \Big)  \right\|_2 \\
     &\leq \left\| \sigma^{(k+1)}(\xi) \right\|  \left\| \frac{1}{\sqrt{N}}      \Big(\bm{W}^{(0)}(t)x + \bm{b}^{(0)}(t) - \bm{W}^{(0)}(0)x + \bm{b}^{(0)}(0)                   \Big)    \right\|_2 \\
     & \leq C\left\| \frac{1}{\sqrt{N}}    \left(\bm{W}^{(0)}(t) -\bm{W}^{(0)}(0)\right)   \right\|_2 + C\left\| \frac{1}{\sqrt{N}}    \left(\bm{b}^{(0)}(t) -\bm{b}^{(0)}(0)\right)   \right\|_2 \\
     & \longrightarrow 0,
\end{align*}
as $N \rightarrow \infty$. Here we use the assumption that $\sigma^{(k)}$ is bounded for $k=0,1,2,3,4$. This concludes the proof.
\end{proof}

\begin{lemma}
\label{lemma: output_grad_constant}
Under the setting of Theorem \ref{theorem: kernel_constant}, we have
\begin{align}
 & \lim_{N \rightarrow \infty} \sup_{t \in [0,T]}  \left\|   {\frac{\partial u(x, \bm{\theta}(t))}{\partial \bm{\theta}}} -  {\frac{\partial u(x, \bm{\theta}(0))}{\partial \bm{\theta}}}      \right\|_2 \\
  &\lim_{N \rightarrow \infty} \sup_{t \in [0,T]}  \left\|   {\frac{\partial u_{xx}(x, \bm{\theta}(t))}{\partial \bm{\theta}}} -  {\frac{\partial u_{xx}(x, \bm{\theta}(0))}{\partial \bm{\theta}}}      \right\|_2.
\end{align}
\end{lemma}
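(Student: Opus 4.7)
The plan is to split $\partial/\partial\bm{\theta}$ into its four coordinate blocks corresponding to $\bm{W}^{(0)},\bm{W}^{(1)},\bm{b}^{(0)},\bm{b}^{(1)}$ and control each sub-vector separately before summing. The $\bm{b}^{(1)}$ block is identically zero, since $\partial u/\partial \bm{b}^{(1)}\equiv 1$ and $\partial u_{xx}/\partial \bm{b}^{(1)}\equiv 0$. For every other block, the explicit formulas derived in Appendix \ref{sec: proof_NTK_PINN_init} carry a uniform $1/\sqrt{N}$ prefactor, so that the block's squared $\ell_2$-norm becomes exactly of the $N$-rescaled form handled by Lemmas \ref{lemma: weight_change_little} and \ref{lemma: sigma_change_little}. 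One would then take $\sup_{t\in[0,T]}$ and send $N\to\infty$ at the end.

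For the statement about $u$, the $\bm{W}^{(1)}$ block of the increment equals $\frac{1}{\sqrt{N}}[\sigma(\bm{W}^{(0)}(t)x+\bm{b}^{(0)}(t)) - \sigma(\bm{W}^{(0)}(0)x+\bm{b}^{(0)}(0))]$, whose $\ell_2$-norm vanishes directly by Lemma \ref{lemma: sigma_change_little} with $k=0$. For the $\bm{b}^{(0)}$ block (an entry-wise product of $\bm{W}^{(1)}$ with $\dot{\sigma}$), I would add and subtract $\frac{1}{\sqrt{N}}\bm{W}^{(1)}(0)\odot\dot{\sigma}(\bm{W}^{(0)}(t)x+\bm{b}^{(0)}(t))$. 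The first resulting term is bounded by $\|\dot{\sigma}\|_\infty\cdot\|(\bm{W}^{(1)}(t)-\bm{W}^{(1)}(0))/\sqrt{N}\|_2\to 0$ via Lemma \ref{lemma: weight_change_little}; the second is bounded by $\|\bm{W}^{(1)}(0)\|_\infty$, which is $O(1)$ by assumption (i), times a $1/\sqrt{N}$-rescaled activation increment that vanishes by Lemma \ref{lemma: sigma_change_little} with $k=1$. The $\bm{W}^{(0)}$ block is treated identically, picking up only the bounded multiplicative factor $x\in\Omega$.

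The $u_{xx}$ statement follows the same template with heavier bookkeeping, since each partial derivative is a product of three or four weight-valued factors multiplied by some derivative of $\sigma$. I would iterate the scalar identity
\begin{align*}
abc-a_0b_0c_0 = (a-a_0)\,bc \;+\; a_0(b-b_0)\,c \;+\; a_0 b_0(c-c_0)
\end{align*}
to isolate one moving factor at a time; in particular, $|(\bm{W}^{(0)}_k(t))^2-(\bm{W}^{(0)}_k(0))^2|\leq 2C\,|\bm{W}^{(0)}_k(t)-\bm{W}^{(0)}_k(0)|$ by assumption (i), reducing each polynomial increment to a linear one. Every resulting summand then pairs an $\ell_\infty$-bounded factor (from (i) and (iii)) with a $1/\sqrt{N}$-rescaled $\ell_2$-increment controlled by Lemma \ref{lemma: weight_change_little} or by Lemma \ref{lemma: sigma_change_little} (with $k\in\{0,1,2,3\}$, which is precisely why assumption (iii) requires $\sigma^{(4)}$ to be bounded). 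The $\bm{W}^{(0)}$ block of $u_{xx}$ contains a two-term bracket $\bm{W}_k^{(0)}\dddot{\sigma}\,x + 2\ddot{\sigma}$, but both summands behave the same way under this decomposition.

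The main obstacle is combinatorial rather than analytical: one must telescope carefully through the triple products appearing in $\partial u_{xx}/\partial\bm{\theta}$ and organise the resulting $O(1)$ many terms so that each piece either inherits boundedness from (i),(iii) or vanishing from the earlier lemmas, without any surviving factor of $\sqrt{N}$. Once every term is shown to be $o(1)$ uniformly in $t\in[0,T]$, summing the blocks, taking the supremum in $t$, and finally letting $N\to\infty$ yields the two claimed limits.
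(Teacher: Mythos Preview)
Your proposal is correct and follows essentially the same route as the paper's own proof: a block-by-block treatment of $\partial/\partial\bm{\theta}$, the telescoping identity $abc-a_0b_0c_0=(a-a_0)bc+a_0(b-b_0)c+a_0b_0(c-c_0)$ applied to the entrywise products, and then pairing each summand with an $\ell_\infty$ bound from assumptions (i),(iii) and a $1/\sqrt{N}$-rescaled $\ell_2$ increment from Lemmas \ref{lemma: weight_change_little} and \ref{lemma: sigma_change_little}. The paper just packages the factors as $\bm{A}(t),\bm{B}(t),\bm{C}(t),\bm{D}(t)$ and carries out the same decomposition you describe.
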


\begin{proof}
Recall that
\begin{align*}
     &\frac{\partial u(x, \bm{\theta})}{\partial \bm{W}^{(0)}_k}=  \frac{1}{\sqrt{N}} \bm{W}^{(1)}_k
    \Dot{\sigma}(\bm{W}_k^{(0)} x + \bm{b}^{(0)}_k)x
    \\
     &\frac{\partial u(x, \bm{\theta})}{\partial \bm{W}^{(1)}_k}=  \frac{1}{\sqrt{N}} \sigma(\bm{W}_k^{(0)} x + \bm{b}^{(0)}_k) \\
     & \frac{\partial u(x, \bm{\theta})}{\partial \bm{b}^{(0)}_k} =  \frac{1}{\sqrt{N}}   \bm{W}^{(1)}_k  \Dot{\sigma}(\bm{W}_k^{(0)} x + \bm{b}^{(0)}_k) \\
     &\frac{\partial u(x, \bm{\theta})}{\partial \bm{b}^{(1)}} = 1.
\end{align*}
To simplify notation, let us define
\begin{align*}
    &\bm{A}(t) = [\bm{W}^{(1)}(t)]^T \\
    &\bm{B}(t) = \Dot{\sigma}(\bm{W}^{(0)}(t) x + \bm{b}^{(0)}(t))x.
\end{align*}
Then by assumption (i)  Lemma \ref{lemma: weight_change_little} \ref{lemma: sigma_change_little},  we have
\begin{align*}
   & \sup_{t \in [0, T]}  \left\|\frac{\partial u(x, \bm{\theta}(t))}{\partial \bm{W}^{(0)}} - \frac{\partial u(x, \bm{\theta}(0))}{\partial \bm{W}^{(0)}}      \right\|_2  \\
   & =  \sup_{t \in [0, T]} \left\|\frac{1}{\sqrt{N}}  \Big(  \bm{A}(t) \odot
   \bm{B}(t)  - \bm{A}(0) \odot \bm{B}(0) \Big) \right\|_2 \\
    & \leq  \sup_{t \in [0, T]}\left\|\frac{1}{\sqrt{N}}  \Big( \bm{A}(t) -\bm{A}(0) \Big) \odot \bm{B}(t) \right\|_2
    + \left\|\frac{1}{\sqrt{N}} \bm{A}(0)  \odot \Big( \bm{B}(t) -\bm{B}(0) \Big)  \right\|_2 \\
    & \leq  \sup_{t \in [0, T]}\left\|\bm{B}(t) \right\|_\infty \left\| \frac{1}{\sqrt{N}}  \Big( \bm{A}(t) -\bm{A}(0) \Big) \right\|_2  +   \sup_{t \in [0, T]} \left\|\bm{A}(0) \right\|_\infty \left\| \frac{1}{\sqrt{N}}  \Big( \bm{B}(t) -\bm{B}(0) \Big) \right\|_2 \\
    &\longrightarrow 0,
\end{align*}
as $N \rightarrow \infty$. Here $\odot$ denotes point-wise multiplication.

Similarly, we can show that
\begin{align*}
   & \lim_{N \rightarrow \infty} \sup_{t \in [0, T]} \left\|\frac{\partial u(x, \bm{\theta}(t))}{\partial \bm{W}^{(1)}} - \frac{\partial u(x, \bm{\theta}(0))}{\partial \bm{W}^{(1)}}      \right\|_2  = 0,\\
  & \lim_{N \rightarrow \infty} \sup_{t \in [0, T]} \left\|\frac{\partial u(x, \bm{\theta}(t))}{\partial \bm{b}^{(0)}} - \frac{\partial u(x, \bm{\theta}(0))}{\partial \bm{b}^{(0)}}  \right\|_2  = 0.
\end{align*}
Thus, we conclude that
\begin{align*}
     \lim_{N \rightarrow \infty} \sup_{t \in [0,T]}  \left\|   {\frac{\partial u(x, \bm{\theta}(t))}{\partial \bm{\theta}}} -  {\frac{\partial u(x, \bm{\theta}(0))}{\partial \bm{\theta}}}      \right\|_2 = 0.
\end{align*}

Now for $u_{xx}$, we know that
\begin{align*}
      &\frac{\partial u_{xx}(x, \bm{\theta})}{\partial \bm{W}^{(0)}_k}= \frac{1}{\sqrt{N}} \bm{W}^{(1)}_k \bm{W}^{(0)}_k  \Big[\bm{W}_k^{(0)} \dddot{\sigma}(\bm{W}_k^{(0)} x + \bm{b}^{(0)}_k ) x + 2 \ddot{\sigma}(\bm{W}_k^{(0)} x + \bm{b}^{(0)}_k) \Big]
    \\
     &\frac{\partial u_{xx}(x, \bm{\theta})}{\partial \bm{W}^{(1)}_k}=  \frac{1}{\sqrt{N}}  (\bm{W}_k^{(0)})^2   \Ddot{\sigma}(\bm{W}_k^{(0)} x + \bm{b}^{(0)}_k)\\
     & \frac{\partial u_{xx}(x, \bm{\theta})}{\partial \bm{b}^{(0)}_k} =  \frac{1}{\sqrt{N}}  \bm{W}^{(1)}_k (\bm{W}^{(0)}_k)^2
    \dddot{\sigma}( \bm{W}^{(0)}_kx +  \bm{b}^{(0)}_k ).
\end{align*}
Then for $\bm{W}^{(0)}$, again we define
\begin{align*}
    &\bm{A}(t) = \left[ \bm{W}^{(1)} \right]^T \\
    & \bm{B}(t) = \bm{W}^{(0)} \\
    & \bm{C}(t) = \dddot{\sigma}(\bm{W}^{(0)} x + \bm{b}^{(0)} ) x \\
    & \bm{D}(t) = 2 \ddot{\sigma}(\bm{W}^{(0)} x + \bm{b}^{(0)}).
\end{align*}
Then,
\begin{align*}
     & \sup_{t \in [0, T]}  \left\|\frac{\partial u_{xx}(x, \bm{\theta}(t))}{\partial \bm{W}^{(0)}} - \frac{\partial u_{xx}(x, \bm{\theta}(0))}{\partial \bm{W}^{(0)}}      \right\|_2  \\
     &=\sup_{t \in [0, T]}  \left\|
     \frac{1}{\sqrt{N}} \Big( \bm{A}(t) \odot \bm{B}(t) \odot \left[ \bm{B}(t) \odot \bm{C}(t) + \bm{D}(t)  \right] -
     \bm{A}(0) \odot \bm{B}(0) \odot \left[ \bm{B}(0) \odot \bm{C}(0) + \bm{D}(0)  \right]         \Big)
     \right\|_2  \\
     &\leq  \sup_{t \in [0, T]}  \left\|
     \frac{1}{\sqrt{N}} \Big( \bm{A}(t) \odot \bm{B}(t) \odot  \bm{B}(t) \odot \bm{C}(t)  -
     \bm{A}(0) \odot \bm{B}(0) \odot \bm{B}(0) \odot \bm{C}(0)    \Big)
     \right\|_2 \\
     &+  \sup_{t \in [0, T]}  \left\|
     \frac{1}{\sqrt{N}} \Big( \bm{A}(t) \odot \bm{B}(t) \odot \bm{D}(t) -
     \bm{A}(0) \odot \bm{B}(0) \odot \bm{D}(0)          \Big)
     \right\|_2  \\
     &:= I_1 + I_2.
\end{align*}
For $I_1$, we have
\begin{align*}
    &\sup_{t \in [0, T]}  \left\|
     \frac{1}{\sqrt{N}} \Big( \bm{A}(t) \odot \bm{B}(t) \odot  \bm{B}(t) \odot \bm{C}(t)  -
     \bm{A}(0) \odot \bm{B}(0) \odot \bm{B}(0) \odot \bm{C}(0)    \Big)
     \right\|_2   \\
     & \leq \sup_{t \in [0, T]}  \left\|
     \frac{1}{\sqrt{N}} \Big( \big[\bm{A}(t) - \bm{A}(0) \big] \odot \bm{B}(t) \odot  \bm{B}(t) \odot \bm{C}(t) \Big)
     \right\|_2  \\
     &+ \sup_{t \in [0, T]}  \left\|
     \frac{1}{\sqrt{N}} \Big(\bm{A}(0) \odot \big[ \bm{B}(t) \odot  \bm{B}(t) \odot \bm{C}(t) - \bm{B}(0) \odot  \bm{B}(0) \odot \bm{C}(0) \big] \Big)
     \right\|_2 \\
     &\leq  \sup_{t \in [0, T]} \left\| \bm{B}(t)  \right\|_\infty^2  \left\| \bm{C}(t)  \right\|_\infty  \left\|  \frac{1}{\sqrt{N}} \Big( \bm{A}(t) - \bm{A}(0)  \Big) \right\|_2 \\
     &+\sup_{t \in [0, T]} \left\| \bm{A}(0)  \right\|_\infty  \left\|
     \frac{1}{\sqrt{N}} \Big( \bm{B}(t) \odot  \bm{B}(t) \odot \bm{C}(t) - \bm{B}(0) \odot  \bm{B}(0) \odot \bm{C}(0) \Big)
     \right\|_2 \\
     &\lesssim \sup_{t \in [0, T]}   \left\|  \frac{1}{\sqrt{N}} \Big( \bm{A}(t) - \bm{A}(0)  \Big) \right\|_2 +
     \sup_{t \in [0, T]} \left\|
     \frac{1}{\sqrt{N}} \Big( \bm{B}(t) \odot  \bm{B}(t) \odot \bm{C}(t) - \bm{B}(0) \odot  \bm{B}(0) \odot \bm{C}(0) \Big)
     \right\|_2 \\
     &\cdots \\
     &\lesssim    \sup_{t \in [0, T]}   \left\|  \frac{1}{\sqrt{N}} \Big( \bm{A}(t) - \bm{A}(0)  \Big) \right\|_2 +
     \sup_{t \in [0, T]}   \left\|  \frac{1}{\sqrt{N}} \Big( \bm{B}(t) - \bm{B}(0)  \Big) \right\|_2 \\
     &+
     \sup_{t \in [0, T]}   \left\|  \frac{1}{\sqrt{N}} \Big( \bm{C}(t) - \bm{C}(0)  \Big) \right\|_2 +
     \sup_{t \in [0, T]}   \left\|  \frac{1}{\sqrt{N}} \Big( \bm{D}(t) - \bm{D}(0)  \Big) \right\|_2 \\
     &\longrightarrow 0,
\end{align*}
as $N \rightarrow \infty$. We can use the same strategy to  $I_2$ as well as $\bm{W}^{(1)}$ and  $\bm{b}^{(0)}$. As a consequence, we conclude
\begin{align*}
     \lim_{N \rightarrow \infty} \sup_{t \in [0,T]}  \left\|   {\frac{\partial u_{xx}(x, \bm{\theta}(t))}{\partial \bm{\theta}}} -  {\frac{\partial u_{xx}(x, \bm{\theta}(0))}{\partial \bm{\theta}}}      \right\|_2 = 0.
\end{align*}
This concludes the proof.
\end{proof}

With these lemmas, now we can prove our main Theorem \ref{theorem: kernel_constant}

\begin{proof}[Proof of Theorem \ref{theorem: kernel_constant}]
For a given data set $\{x_b^i, g(x_b^i) \}_{i=1}^{N_b}, \{x_r^i, f(x_r^i) \}_{i=1}^{N_r}$, let $\bm{J}_u(t)$ and $\bm{J}_r(t)$ be the Jacobian matrix of $u(x_b, \bm{\theta}(t))$ and $u_{xx}(x_r, \bm{\theta})$ with respect to $\bm{\theta}$, respectively,
\begin{align*}
    \bm{J}_u(t) = \Big(  {\frac{\partial u(x_b^i, \bm{\theta}(t))}{\partial \bm{\theta}_j}} \Big),
     \bm{J}_r(t) = \Big(  {\frac{\partial u_{xx}(x_r^i, \bm{\theta}(t))}{\partial \bm{\theta}_j}} \Big).
\end{align*}
Note that
     \begin{align*}
          \bm{K}(t) = \begin{bmatrix}
                        \bm{J}_u(t) \\
                        \bm{J}_r(t)
                        \end{bmatrix}
                         \begin{bmatrix}
                        \bm{J}_u^T(t) , \bm{J}_r^T(t)
                        \end{bmatrix}
                          : = \bm{J}(t) \bm{J}^T(t).
     \end{align*}

This implies that
\begin{align*}
    \left\|     \bm{K}(t)   -   \bm{K}(0)  \right\|_2 &= \left\|     \bm{J}(t) \bm{J}^T(t)   -  \bm{J}(0) \bm{J}^T(0) \right\|_2 \\
    &\leq \left\|  \bm{J}(t) \big[\bm{J}^T(t)  -\bm{J}^T(0)    \big]             \right\|_2 +
    \left\|   \big[\bm{J}(t)  -\bm{J}(0)    \big]  \bm{J}^T(0)           \right\|_2 \\
    & \leq \left\|  \bm{J}(t)              \right\|_2 \left\|  \bm{J}(t)  -\bm{J}(0)      \right\|_2 + \left\|   \bm{J}(t)  -\bm{J}(0)  \right\|_2 \left\|   \bm{J}(0)   \right\|_2.
\end{align*}
By lemma \ref{lemma: output_grad_bounded}, it is easy to show that $\|\bm{J}(t)\|_2$ is bounded. So it now suffices to show that
\begin{align}
    & \sup_{t \in [0,T]} \left\|\bm{J}_u(t) - \bm{J}_u(0)   \right\|_F \rightarrow 0 \\
    &\sup_{t \in [0,T]}\left\|\bm{J}_r(t) - \bm{J}_r(0)   \right\|_F \rightarrow 0,
\end{align}
as $N \rightarrow \infty$. Since the training data is finite, it suffices to consider just two inputs $x,x'$. By the Cauchy-Schwartz inequality, we obtain
\begin{align*}
    &\left|  \left\langle  {\frac{\partial u(x, \bm{\theta}(t))}{\partial \bm{\theta}}} ,  {\frac{\partial u(x', \bm{\theta}(t))}{\partial \bm{\theta}}}  \right\rangle
    - \left\langle  {\frac{\partial u(x, \bm{\theta}(0))}{\partial \bm{\theta}}} ,  {\frac{\partial u(x', \bm{\theta}(0))}{\partial \bm{\theta}}}         \right\rangle     \right| \\
 & \leq     \left|  \left\langle  {\frac{\partial u(x, \bm{\theta}(t))}{\partial \bm{\theta}}} ,  {\frac{\partial u(x', \bm{\theta}(t))}{\partial \bm{\theta}}} -  {\frac{\partial u(x', \bm{\theta}(0))}{\partial \bm{\theta}}}  \right\rangle      \right|
 + \left|  \left\langle  {\frac{\partial u(x, \bm{\theta}(t))}{\partial \bm{\theta}}} - {\frac{\partial u(x, \bm{\theta}(0))}{\partial \bm{\theta}}}  ,  {\frac{\partial u(x', \bm{\theta}(0))}{\partial \bm{\theta}}}         \right\rangle     \right| \\
 & \leq \left\|  {\frac{\partial u(x, \bm{\theta}(t))}{\partial \bm{\theta}}}       \right\|_2  \left\|   {\frac{\partial u(x', \bm{\theta}(t))}{\partial \bm{\theta}}} -  {\frac{\partial u(x', \bm{\theta}(0))}{\partial \bm{\theta}}}      \right\|_2
 + \left\|   {\frac{\partial u(x, \bm{\theta}(t))}{\partial \bm{\theta}}} - {\frac{\partial u(x, \bm{\theta}(0))}{\partial \bm{\theta}}}      \right\|_2  \left\|   {\frac{\partial u(x', \bm{\theta}(0))}{\partial \bm{\theta}}}         \right\|_2.
\end{align*}
From Lemma \ref{lemma: output_grad_constant}, we have $\left\|  {\frac{\partial u(x, \bm{\theta}(t))}{\partial \bm{\theta}}}       \right\|_2$ is uniformly bounded for $t \in [0,T]$. Then using  Lemma \ref{lemma: output_grad_constant} again gives
\begin{align*}
      & \sup_{t \in [0,T]} \left|  \left\langle  {\frac{\partial u(x, \bm{\theta}(t))}{\partial \bm{\theta}}} ,  {\frac{\partial u(x', \bm{\theta}(t))}{\partial \bm{\theta}}}  \right\rangle
    - \left\langle  {\frac{\partial u(x, \bm{\theta}(0))}{\partial \bm{\theta}}} ,  {\frac{\partial u(x', \bm{\theta}(0))}{\partial \bm{\theta}}}         \right\rangle     \right| \\
    &\leq  C \sup_{t \in [0,T]}  \left\|   {\frac{\partial u(x', \bm{\theta}(t))}{\partial \bm{\theta}}} -  {\frac{\partial u(x', \bm{\theta}(0))}{\partial \bm{\theta}}}      \right\|_2 + C  \sup_{t \in [0,T]}  \left\|   {\frac{\partial u(x, \bm{\theta}(t))}{\partial \bm{\theta}}} - {\frac{\partial u(x, \bm{\theta}(0))}{\partial \bm{\theta}}}      \right\|_2 \\
    & \longrightarrow 0,
\end{align*}
as $N \rightarrow \infty$. This implies that
\begin{align*}
    \lim_{N \rightarrow \infty} \sup_{t \in [0,T]} \left\|\bm{J}_u(t) - \bm{J}_u(0)   \right\|_2 = 0.
\end{align*}
Similarly, we can repeat this calculation for $\bm{J}_{r}$, i.e.,
\begin{align*}
    & \sup_{t \in [0,T]} \left|  \left\langle  {\frac{\partial u_{xx}(x, \bm{\theta}(t))}{\partial \bm{\theta}}} ,  {\frac{\partial u_{xx}(x', \bm{\theta}(t))}{\partial \bm{\theta}}}  \right\rangle
    - \left\langle  {\frac{\partial u_{xx}(x, \bm{\theta}(0))}{\partial \bm{\theta}}} ,  {\frac{\partial u_{xx}(x', \bm{\theta}(0))}{\partial \bm{\theta}}}         \right\rangle     \right| \\
    &\leq  \sup_{t \in [0,T]} \left\|  {\frac{\partial u_{xx}(x, \bm{\theta}(t))}{\partial \bm{\theta}}}       \right\|_2  \left\|   {\frac{\partial u_{xx}(x', \bm{\theta}(t))}{\partial \bm{\theta}}} -  {\frac{\partial u_{xx}(x', \bm{\theta}(0))}{\partial \bm{\theta}}}      \right\|_2
 \\
 &+  \sup_{t \in [0,T]}\left\|   {\frac{\partial u_{xx}(x, \bm{\theta}(t))}{\partial \bm{\theta}}} - {\frac{\partial u_{xx}(x, \bm{\theta}(0))}{\partial \bm{\theta}}}      \right\|_2  \left\|   {\frac{\partial u_{xx}(x', \bm{\theta}(0))}{\partial \bm{\theta}}}         \right\|_2  \\
 & \leq C  \sup_{t \in [0,T]} \left\|   {\frac{\partial u_{xx}(x', \bm{\theta}(t))}{\partial \bm{\theta}}} -  {\frac{\partial u_{xx}(x', \bm{\theta}(0))}{\partial \bm{\theta}}}      \right\|_2 + C   \sup_{t \in [0,T]}\left\|   {\frac{\partial u_{xx}(x, \bm{\theta}(t))}{\partial \bm{\theta}}} - {\frac{\partial u_{xx}(x, \bm{\theta}(0))}{\partial \bm{\theta}}}      \right\|_2 \\
 &\longrightarrow 0,
\end{align*}
as $N \rightarrow \infty$. Hence, we get
\begin{align*}
     \lim_{N \rightarrow \infty} \sup_{t \in [0,T]} \left\|\bm{J}_r(t) - \bm{J}_r(0)   \right\|_2 = 0,
\end{align*}
and thus we conclude that
\begin{align*}
       \lim_{N \rightarrow \infty} \sup_{t \in [0,T]} \left\|     \bm{K}(t)   -   \bm{K}(0)  \right\|_2  =0.
\end{align*}
This concludes the proof.
\end{proof}

\end{document}